\documentclass{article}
\usepackage{style,times}

\usepackage{amsmath,amsfonts,bm}

\def\eqref#1{equation~\ref{#1}}

\def\1{\bm{1}}

\DeclareMathAlphabet{\mathsfit}{\encodingdefault}{\sfdefault}{m}{sl}
\SetMathAlphabet{\mathsfit}{bold}{\encodingdefault}{\sfdefault}{bx}{n}

\def\sJ{{\mathbb{J}}}

\newcommand{\R}{\mathbb{R}}

\DeclareMathOperator*{\argmin}{arg\,min}

\usepackage{url}
\usepackage{tikz}
\usetikzlibrary{arrows.meta, positioning, fit, backgrounds, calc}
\usepackage{amsmath, amssymb}
\usepackage{graphicx}
\usepackage{caption}
\usepackage{subcaption}
\usepackage{booktabs}
\usepackage{algorithm}
\usepackage{algpseudocode}
\usepackage{float}     
\usepackage{capt-of}
\usepackage{makecell}
\usepackage{enumitem}
\usepackage{subcaption}   
\usepackage{siunitx}
\usepackage{tabularx}
\newcolumntype{C}[1]{>{\centering\arraybackslash}p{#1}}
\usepackage{xcolor}
\usepackage{bbm}
\definecolor{cShared}{HTML}{4C72B0}
\definecolor{cAE}{HTML}{55A868}
\definecolor{cDiff}{HTML}{C44E52}
\definecolor{cPost}{HTML}{8172B2}

\def\cB{{\mathcal B}}

\def \I{\mathbf I}
\def \0{\mathbf 0}

\def \cU{\mathcal U}

\def \bb1{\mathbbm{1}}

\def\md{{\mathrm d}}
\def\sfW{{\mathsf W}}

\def\cZ{\mathcal{Z}}

\def\b0{\mathbf 0}

\def \Rb {\mathbb R}
\def \Eb {\mathbb E}

\def \Lip{\mathrm {Lip}}

\def \Pb{\mathbb P}

\def \I {\mathrm I}
\def \cT {\mathcal{T}}
\def \Id {\mathsf{Id}}
\def \sJ {\mathsf{J}}
\def \sfd {\mathsf{d}}
\def \bP {\mathbb{P}}
\def \bQ {\mathbb{Q}}
\def \sfD {\mathsf{D}}
\def \sfE {\mathsf{E}}
\def \sfV{\mathsf{V}}

\def \sfG{\mathsf{G}}
\def \sfH{\mathsf{H}}
\def \sfP{\mathsf{P}}

\def\sfMMD{\mathsf{MMD}}

\usepackage{amsthm}
\newtheorem{theorem}{Theorem}[section]
\newtheorem{lemma}[theorem]{Lemma}

\newtheorem{example}[theorem]{Example}

\newtheorem{corollary}[theorem]{Corollary}

\usepackage{hyperref}

\title{Generative Priors Conditioned on Natural Language for Bayesian Inversion in PDEs}

\author{Pengyu Zhang, Mark Girolami,  Arnaud Vadeboncoeur \\
Department of Engineering\\
University of Cambridge\\
Cambridge, CB3 0FA, UK \\
\texttt{\{pz281, mag92, av537\}@cam.ac.uk} \\
}

\iclrfinalcopy
\begin{document}

\maketitle

\begin{abstract}
Inferring quantities of interest (QoI) from data is a central task in Science and Engineering.
In such contexts, we often have access to both quantitative data and qualitative data. Quantitative data may be represented by noisy sensor measurements, simulation data, re-analysis data; qualitative data may be in the form of text descriptions of experimental setups, expected experiment outcomes, and human-perceived system behaviours. The task we address in this paper is the following. Given a training set of paired qualitative text and quantitative QoI data, we learn to exploit the inherent correlation between the two modalities to learn a highly informative data-driven natural-language-conditional Bayesian prior, such that when presented with a new physical system, we can coherently combine (i) the training dataset, (ii) qualitative text describing the new system, and (iii) a small number of noisy sensor readings from that new system, to perform inference and uncertainty quantification (UQ) over the QoI. To achieve this task, we develop two parallel approaches, one uses conditional diffusion and the other conditional autoencoders, and compare both against classical Bayesian methodology, unconditional generative models and deterministic supervised methods.
Each approach has specific strengths and tradeoffs; conditional autoencoder offers theoretical tractability, allows for fast posterior sampling, and provides better-calibrated UQ,
whereas conditional diffusion is explored for greater expressiveness and capturing complex posteriors with irregular QoI fields.
The approach is tested on the steady-state heat equation, damped Helmholtz equation, and UK weather reanalysis data.
\end{abstract}

\section{Introduction}

Recovering a full field of quantities of interest (QoI) from a small number of noisy measurements is a ubiquitous problem setup in science and engineering applications, such as in weather prediction \citep{kalnay2003atmospheric, bannister2017review, manshausen2025generative}, medical image reconstruction \citep{hammernik2018learning, arridge2019solving, song2021solving}, and structural health monitoring \citep{girolami2021statistical, febrianto2022digital}.  Such problems are typically severely ill-posed as the observation process is not a one-to-one map. Leveraging data helps attenuate the ill-posedness of these tasks. 
A direct approach is to learn the inverse map in a supervised fashion, regressing the full field directly from observations \citep{erichson2020shallow, fukami2021global, santos2023development}. Such methods are fast at test time, however, the map they learn is tied to the observation process it was trained under; they are also deterministic, whereas uncertainty quantification (UQ) is essential for decision making. 
Bayesian inference \citep{stuart2010inverse, cotter2010approximation, bui2014solving} provides UQ by construction: a likelihood function measuring data fit after an observation process and a prior encoding prior beliefs on QoI data, together give a posterior distribution. 

One of the challenges in Bayesian inference is prior specification, with very sparse observations the prior determines most of the reconstruction, and one that is too vague leaves the posterior diffuse with relatively poor accuracy. 
Prior elicitation \citep{o2006uncertain, mikkola2024prior} converts subjective domain knowledge into a prior distribution. Large Language Models (LLMs) \citep{brown2020language} have recently been prompted to supply that knowledge in place of an expert \citep{capstick2024autoelicit, selby2025had, huang2025llm}. However, such methods are confined to a few scalar parameters and are correspondingly insufficient to capture the complex spatial structures in physical fields. 
A promising direction is to replace hand-crafted priors with learned generative models trained on historical dataset \citep{daras2024survey}. Such priors are expressive and scalable, and are now used widely in imaging \citep{bora2017compressed, chung2022diffusion, feng2023score} and in physical setups \citep{shu2023physics, shysheya2024conditional, akyildiz2025efficient,glyn2025primer,vadeboncoeur2026efficient}.

System-specific information can also improve the generative prior, such as a system's geometry \citep{vadeboncoeur2025geometric}. Domain expert knowledge can also serve as additional information. In many practical scenarios, prior information can be naturally expressed through human knowledge in natural language, such as a description of the experimental setup (``a strong forcing acts in the north-east''), or a human-perceived physical condition (``the north-west is cool with a light north-westerly breeze''). Such statements are imprecise, but they carry high-level physical structural patterns.

In this work, we propose a methodology that encodes such domain knowledge via text into a data-driven physical prior. Trained on paired text and field data, a conditional generative model learns the correlation between the two modalities, so that a qualitative description of a new system induces an informative prior over its field. The learned prior constrains the search space, which is further used in posterior inference from sparse observational data to enable high-fidelity field reconstruction with UQ. Contributions in this paper are as follows.

\begin{enumerate}[label=C\arabic*., ref=C\arabic*, leftmargin=*]
    \item We propose to learn a text-conditional prior on physical fields using generative models, combining qualitative descriptions with sparse noisy quantitative measurements to perform posterior inference over quantities of interest.

    \item We provide a complete pipeline: fine-tuning a pretrained text encoder, training the text-conditional prior, and posterior inference.

    \item Two parallel approaches based on autoencoders and diffusion models are developed. With autoencoders, Bayesian inference is performed in a low-dimensional learned latent space, where, given the pushforward form of the prior, the posterior admits asymptotically exact sampling; with diffusion models, posterior inference relies on approximate likelihood guidance.

    \item We analyse the advantages and tradeoffs of two models. For autoencoders we provide theoretical guarantees and error bounds, and obtain more consistently calibrated posteriors at a lower sampling cost, while diffusion models offer greater expressiveness on irregular fields.

    \item We test the methods on the steady-state heat equation, the damped Helmholtz equation, and UK weather reanalysis data, comparing against Gaussian processes, unconditional data-driven prior models and supervised models, with both accuracy and posterior coverage reported.
\end{enumerate}

Section \ref{sec:setup} illustrates our problem setup; Section \ref{sec:related} reviews related works; Section \ref{sec:methodology} presents the proposed pipeline of two parallel approaches; Section \ref{sec:implementation} provides additional implementation details; Section \ref{sec:numerics} reports and discusses results on three experiments; Section \ref{sec:conclusion} draws final conclusions.

\subsection{Problem Setup} \label{sec:setup}

Consider a collection of $N$ high-fidelity experimental setups where each experiment produces quantitative data $u \in\cU$, which is associated with a set of physical parameters $\theta$,\footnote{The relation between $u$ and $\theta$ is left unspecified: $\theta$ is not a fixed representation of the system, and could take any form such as a parameterisation of the experimental setup or a coarse property of $u$ itself.} of which only vague and qualitative information is available in the form of the text $s \in \mathcal{S}$, where $\mathcal{S}$ is a set of text sequences. We think of $s$ as being obtained from a human expert relating an observed physical quantity to a text description. Importantly, $s$ is an imprecise and random specification of information of relevance to $u$. The set \smash{$\{u^{(n)}, s^{(n)}\}_{n=1}^N$} will be our training dataset. One may then Euclideanise text using a map $\mathsf{G}:\mathcal{S}\rightarrow \cT$, where $\cT$ is $d_\tau$-dimensional Euclidean space. In our experiments, $\sfG$ is a fine-tuned text encoder. Table~\ref{tab:notation} summarises our notation, with an example of each symbol.

Now consider a new, partially observed experiment, of which we observe a small amount of quantitative data, \smash{$y^{(o)} \in \Rb^{d_y}$} for $o>N$, and text description \smash{$s^{(o)}$}. We wish to leverage the dataset \smash{$\{u^{(n)}, s^{(n)}\}_{n=1}^N$} in the form of a text-conditional prior, to help infer \smash{$u^{(o)}$ from $(y^{(o)}, s^{(o)})$} where
\begin{equation}
    \label{eq:generative_model}
    y^{(o)} = \sfH^{(o)} u^{(o)}+\xi^{(o)}, \quad \xi^{(o)} \sim \eta,
\end{equation}
with \smash{$\sfH^{(o)}$} representing the system-dependent observation operator, $\eta$ a known noise
distribution, taken to be \smash{$\mathcal{N}(0, \sigma_y^2 \,\I_{d_y})$} in our experiments. During inference, we observe \smash{$y^{(o)}, s^{(o)}$}, we know \smash{$\sfH^{(o)}, \eta$}, and do not know \smash{$u^{(o)},\xi^{(o)}$}. Example \ref{ex:heat_example} is a toy example of our problem setup.

\begin{example}
    \label{ex:heat_example}
    The quantity of interest which we would like to estimate, $u^{(o)}$, is the field which describes the heat distribution across the surface of a plate; the observed quantitative data, \smash{$y^{(o)}$}, is a small number of noisy point-wise sensor measurements; the unknown physical parameter \smash{$\theta^{(o)}$} is the temperature at the boundaries of the plate, of which we only know a vague, qualitative description such as \texttt{``The left boundary is quite hot, while the bottom boundary is cold''}, etc., this is \smash{$s^{(o)}$}. The task: from a corpus of data, \smash{$\{u^{(n)}, s^{(n)}\}_{n=1}^N$}, learn a text-conditioned prior for $u$, and use this to infer \smash{$u^{(o)}$ from $(y^{(o)},s^{(o)})$}.
\end{example}

\subsection{Related Works} \label{sec:related}

\paragraph{Generative Priors for Inverse Problems.}
Generative models are widely used as data-driven priors in inverse problems. One approach uses Generative Adversarial Networks (GANs) \citep{goodfellow2020generative}, learning a generator that maps a low-dimensional latent vector to the field and performing inference in that latent space. Approaches include gradient descent \citep{bora2017compressed} and Markov chain Monte Carlo (MCMC) sampling \citep{patel2021gan, patel2022solution, meng2022learning}. GABI \citep{vadeboncoeur2025geometric} follows a similar idea with a graph autoencoder. There exists theory about the relation between learned priors and the resulting posteriors~\citep{hosseini2026error}.
Diffusion models \citep{ho2020denoising,song2020score} are the dominant alternative as a generative prior, with many training-free schemes for conditioning on measurements at inference time \citep{song2021solving, kawar2022denoising, feng2023score, mardani2023variational, feng2024variational}. In our diffusion approach we follow \citet{chung2022diffusion} and approximate the likelihood score via Tweedie's formula \citep{efron2011tweedie, song2023pseudoinverse, chung2023decomposed, rozet2023score, boys2024tweedie}, which yields a biased posterior approximation; asymptotically exact SMC alternatives \citep{wu2023practical,cardoso2024monte} instead require maintaining an ensemble of particles at a cost.
In all of these works the prior itself is unconditional, and we propose to condition on
text descriptions. Furthermore, mostly developed for imaging, these methods are also seldom
assessed for UQ, which in physical settings is of paramount importance.

\paragraph{Conditional Generative Models.}
Block triangular maps \citep{zech2022sparse,zech2022sparse2} provide a principled construction for conditional sampling, realised with monotone GANs \citep{baptista2024conditional} and Wasserstein autoencoders \citep{tolstikhin2018wasserstein,al2026conditional}. Our autoencoder method draws inspiration from these works. 
Furthermore, conditional diffusion models have shown superior performance in text-to-image generation \citep{rombach2022high, ramesh2022hierarchical, saharia2022photorealistic}, establishing the paradigm of conditioning a generative model on text embeddings, though these target forward generation. A few works bring text into inverse problems \citep{chung2023prompt, kim2023regularization}, which use a pretrained latent diffusion model (LDM) \citep{rombach2022high} as a prior and introduce the text at inference time as a guidance or regularisation. Both target imaging rather than physical fields.

\paragraph{Natural Language and Physical Systems.}
In the scientific domain, although generative models have been widely applied to forward and inverse problems in physical systems \citep{shu2023physics, huang2024diffusionpde, shysheya2024conditional, bastek2024physics, jiang2024resolution, jacobsen2025cocogen, zheng2025inversebench, shan2026regularization}, the integration of natural language with physical simulation remains underexplored. Emerging multimodal approaches including PROSE \citep{liu2024prose} and the Multimodal PDE Foundation Model \citep{negrini2025multimodal} combine numerical and symbolic or textual data to approximate solution operators for ODEs and PDEs, and Text2PDE \citep{zhou2025text2pde} uses LDM to perform physical simulations from linguistic prompts, all targeting forward simulation. On the inverse side, \citet{saito2026language} recently condition a Darcy-flow solver on geological descriptions encoded by a sentence embedding, using a deterministic U-Net without UQ. Learning a natural-language-conditioned prior to guide the inverse recovery of physical fields, in a form that supports UQ, remains open.  

\section{Methodology} \label{sec:methodology}

This section presents our full pipeline, from fine-tuning a pretrained text encoder (Step 1) to conditional prior learning and posterior inference
(Steps 2 and 3). Step 1 is shared by both models; Steps 2 and 3 are then described for autoencoder (AE) and score-based diffusion model (SD) in turn.

\subsection{Step 1: Fine-Tuning} 

As discussed in Section \ref{sec:setup}, the training data consists of pairs of physical fields and text descriptions \smash{$\{u^{(n)}, s^{(n)}\}_{n=1}^N$} drawn from a joint distribution $\bP_{u,s}$. To make the text usable by a model we encode it with \smash{$\sfG^{\phi}:\mathcal{S}\rightarrow\cT$}, mapping the text to a fixed-dimensional latent representation $\tau$. \smash{This induces a joint distribution over fields and text embeddings, $\bP_{u,\tau} =(\Id\times\sfG^\phi)_\#\bP_{u, s}$}. \footnote{Notational conventions are collected in
Appendix~\ref{appen:notation}.} We take $\sfG^{\phi}$ to be a pretrained sentence transformer \citep{reimers2019sentence}.\footnote{We use \textit{all-MiniLM-L6-v2} which provides text embeddings of 384 dimensions from \url{https://huggingface.co/sentence-transformers/all-MiniLM-L6-v2}} However, proximity in $\cT$ reflects semantic similarity rather than physical similarity, so we fine-tune the encoder on the training dataset before training prior models. Parameters $\phi$ in the encoder are obtained by
\begin{equation}
    \label{eq:fine-tune}
    \phi^\star \in \argmin_\phi\, \Eb_{(u,s), (u', s')\sim\bP_{u,s}}{\left| \mathsf{score}_{\text{field}} (u, u') - \mathsf{score}_{\text{text}}(\sfG^\phi(s),\sfG^\phi(s'))\right|^2},
\end{equation}
a cosine-similarity loss in which $\mathsf{score}_{\text{text}}(\tau,\tau ') = ({\tau^\top \tau '})/({\|\tau\|\|\tau '\|})$ is the cosine similarity between text embeddings. $\mathsf{score}_{\text{field}}(\bullet,\bullet)$ is chosen appropriately for each physical setup, detailed in Appendix \ref{appen:results}, in each case it returns a value in $[0,1]$ quantifying how similar two physical fields are.

\subsection{Autoencoder}

\subsubsection{Step 2: Learning Conditional Prior -- Autoencoder}

A map 
$\sfP:\cZ\times\cT\rightarrow\cU\times\mathcal{X}$  with
$\sfP(z,\tau)=\big(\sfP_\cU(z,\sfP_\mathcal{X}(\tau)),\,\sfP_\mathcal{X}(\tau)\big)$
is block-triangular~\citep{baptista2024conditional, zech2022sparse, zech2022sparse2}. Our autoencoder consists of an encoder and a decoder,
\begin{equation}\label{eq:enc_dec}
    \sfE:\cU\times\cT\rightarrow\cZ,
    \qquad
    \sfD:\cZ\times\cT\rightarrow\cU,
\end{equation}
both taking the text embedding as input; these are the networks that are trained and evaluated. Distributions, however, are compared on the joint space, for which we write the maps
\begin{equation}\label{eq:lifts}
    \widetilde{\sfE}:\cU\times\cT\rightarrow\cZ\times\cT,
    \qquad
    \widetilde{\sfD}:\cZ\times\cT\rightarrow\cU\times\cT,
\end{equation}
so that $\widetilde{\sfE}(u,\tau)=(\sfE(u,\tau),\tau)$ and $\widetilde{\sfD}(z,\tau)=(\sfD(z,\tau),\tau)$, with no new parameters because the text is passed through unchanged. They have the form of a block-triangular map with $\sfP_\mathcal{X} =\Id_\cT $. The maps $\sfE, \sfD$ are learned by minimising
\begin{equation}\label{eq:ae_objective}
    \sJ_{\mathsf{AE}}(\sfD,\sfE) = \Eb_{u,\tau\sim\bP_{u,\tau}}
    \big\|u-(\sfD\circ\widetilde{\sfE})(u,\tau)\big\|^2
    + \sfd\big(\widetilde{\sfE}_\#(\bP_{u,\tau}),\ \bQ_z\times\bP_\tau\big),
\end{equation}

where $\sfd(\bullet,\bullet)$ is a discrepancy on
probability measures over $\cZ\times\cT$, and we choose maximum mean discrepancy (MMD), detailed in Appendix~\ref{appen:mmd}. $\bQ_z$ is a probability measure on the latent space, taken to be $\mathcal{N}(0,\I)$. 
\begin{theorem}\label{thm:obj_zero}
Let $\cU,\cT,\cZ$ be separable Banach spaces, let $(\cU\times\cT,\cB(\cU\times\cT),\bP_{u,\tau})$ and $(\cZ,\cB(\cZ),\bQ_z)$ be probability spaces, and let $\sfD,\sfE$ in (\ref{eq:enc_dec}) be continuous. If $\sJ_{\mathsf{AE}}(\sfD,\sfE)=0$, then $\bP_{u\mid\tau}=\sfD(\bullet,\tau)_\#\bQ_z$ for $\bP_\tau$-a.e.\ $\tau$.
\end{theorem}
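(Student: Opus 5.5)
Since both terms of $\sJ_{\mathsf{AE}}$ are nonnegative, $\sJ_{\mathsf{AE}}(\sfD,\sfE)=0$ forces each of them to vanish. We then combine the two resulting facts by pushing measures forward through $\widetilde{\sfD}$ and disintegrating.

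\textbf{Step 1: consequences of the vanishing reconstruction term.} The vanishing reconstruction term gives $u=\sfD(\sfE(u,\tau),\tau)$ for $\bP_{u,\tau}$-a.e.\ $(u,\tau)$. Equivalently, $\widetilde{\sfD}\circ\widetilde{\sfE}=\Id$ holds $\bP_{u,\tau}$-a.s. Hence
\[
\widetilde{\sfD}_\#\big(\widetilde{\sfE}_\#\bP_{u,\tau}\big)=(\widetilde{\sfD}\circ\widetilde{\sfE})_\#\bP_{u,\tau}=\bP_{u,\tau}.
\]
Both pushforwards are well defined because $\sfD,\sfE$ are continuous and hence Borel measurable.

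\textbf{Step 2: consequences of the vanishing MMD term.} The vanishing MMD term gives $\widetilde{\sfE}_\#\bP_{u,\tau}=\bQ_z\times\bP_\tau$. This requires $\sfd$ to be a genuine metric on probability measures on $\cZ\times\cT$, i.e.\ the kernel in Appendix~\ref{appen:mmd} must be characteristic. I expect this to be the main point needing care. I would state it as a standing assumption on the kernel, for instance a bounded continuous characteristic kernel on the separable space $\cZ\times\cT$, such as a Gaussian kernel on the finite-dimensional spaces actually used. Under that assumption $\sfMMD=0$ forces equality of measures. Combining with Step 1 yields
\[
\bP_{u,\tau}=\widetilde{\sfD}_\#(\bQ_z\times\bP_\tau).
\]

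\textbf{Step 3: disintegration.} For bounded measurable $f$ on $\cU\times\cT$, Fubini gives
\[
\int f\,\md\bP_{u,\tau}=\int_\cT\int_\cZ f(\sfD(z,\tau),\tau)\,\bQ_z(\md z)\,\bP_\tau(\md\tau)=\int_\cT\int_\cU f(u,\tau)\,\big(\sfD(\bullet,\tau)_\#\bQ_z\big)(\md u)\,\bP_\tau(\md\tau).
\]
Two facts then hold.
\begin{enumerate}[label=(\roman*)]
\item The $\tau$-marginal of $\bP_{u,\tau}$ is $\bP_\tau$, which is consistent.
\item The kernel $\tau\mapsto\sfD(\bullet,\tau)_\#\bQ_z$ is a regular conditional distribution of $u$ given $\tau$. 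Measurability in $\tau$ follows from joint measurability of $\sfD$ and Fubini.
\end{enumerate}
Since $\cU$ and $\cT$ are separable Banach spaces, hence Polish, the disintegration theorem guarantees that regular conditional distributions are unique up to $\bP_\tau$-null sets. Therefore $\bP_{u\mid\tau}=\sfD(\bullet,\tau)_\#\bQ_z$ for $\bP_\tau$-a.e.\ $\tau$.
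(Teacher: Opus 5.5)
Your proof is correct and follows essentially the same route as the paper. Both terms of the loss vanish; the a.s.\ identity $\widetilde{\sfD}\circ\widetilde{\sfE}=\Id$ then gives $\bP_{u,\tau}=\widetilde{\sfD}_\#(\bQ_z\times\bP_\tau)$; a change-of-variables and Fubini computation identifies this as $\bP_\tau\otimes\sfD(\bullet,\tau)_\#\bQ_z$; and a.e.\ uniqueness of disintegrations concludes. Two of your additions make precise what the paper uses only implicitly: the assumption that the kernel is characteristic, which is what lets a vanishing $\sfd$ force $\widetilde{\sfE}_\#\bP_{u,\tau}=\bQ_z\times\bP_\tau$, and your check that $\tau\mapsto\sfD(\bullet,\tau)_\#\bQ_z$ is measurable.
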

Theorem \ref{thm:obj_zero} shows that if the loss (\ref{eq:ae_objective}) is brought to zero, the AE method recovers the correct conditional distribution $\bP_{u|\tau}$ from the joint data distribution $\bP_{u,\tau}.$ Proof is provided in Appendix \ref{appen:thm_2_1}. 
\begin{theorem}\label{thm:sqWass_bound} We have the following bound on the squared 2-Wasserstein distance between the data distribution and the generative model:
    \begin{align}
        \label{eq:theorem2}
        \begin{split}
        \sfW_2^2(\bP_{u,\tau}, \widetilde{\sfD}_\#(\bQ_z\times\bP_\tau))\leq &2\Eb_{u,\tau\sim\Pb_{u,\tau}}\|u-(\sfD\circ \widetilde{\sfE})(u,\tau)\|^2    \\ &+  
        2({\|\sfD\|_{\Lip}^2+ 1})\;\sfW_2^2(\widetilde{\sfE}_\#(\Pb_{u,\tau}), \bQ_z\times\bP_\tau).
        \end{split}
    \end{align}
\end{theorem}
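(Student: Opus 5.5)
We prove the bound with the triangle inequality for $\sfW_2$, passing through the intermediate measure $(\widetilde{\sfD}\circ\widetilde{\sfE})_\#\bP_{u,\tau}$, which is the law of the reconstruction $((\sfD\circ\widetilde{\sfE})(u,\tau),\tau)$. Then
\begin{equation*}
\sfW_2(\bP_{u,\tau},\widetilde{\sfD}_\#(\bQ_z\times\bP_\tau))\le \sfW_2\big(\bP_{u,\tau},(\widetilde{\sfD}\circ\widetilde{\sfE})_\#\bP_{u,\tau}\big)+\sfW_2\big(\widetilde{\sfD}_\#\widetilde{\sfE}_\#\bP_{u,\tau},\widetilde{\sfD}_\#(\bQ_z\times\bP_\tau)\big),
\end{equation*}
and $(a+b)^2\le 2a^2+2b^2$ produces the two factors of $2$ in \eqref{eq:theorem2}. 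Throughout, $\cU\times\cT$ carries the product metric $\|(u,\tau)\|^2=\|u\|^2+\|\tau\|^2$. We assume finite second moments and Lipschitz $\sfD$, so that every quantity is finite.

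\textbf{First term.} We use the deterministic coupling $(u,\tau)\mapsto\big((u,\tau),\ ((\sfD\circ\widetilde{\sfE})(u,\tau),\tau)\big)$. The $\tau$ components agree exactly, so the transport cost is $\Eb\|u-(\sfD\circ\widetilde{\sfE})(u,\tau)\|^2$. This is the reconstruction error, and it bounds the first squared Wasserstein distance directly.

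\textbf{Second term.} We show that $\widetilde{\sfD}$ is Lipschitz on $\cZ\times\cT$ with $\|\widetilde{\sfD}\|_{\Lip}^2\le\|\sfD\|_{\Lip}^2+1$. Here $\|\sfD\|_{\Lip}$ is the Lipschitz constant of $\sfD$ with respect to the product norm on $\cZ\times\cT$. The estimate follows from
\begin{equation*}
\|\widetilde{\sfD}(z,\tau)-\widetilde{\sfD}(z',\tau')\|^2=\|\sfD(z,\tau)-\sfD(z',\tau')\|^2+\|\tau-\tau'\|^2\le(\|\sfD\|_{\Lip}^2+1)\,\|(z,\tau)-(z',\tau')\|^2.
\end{equation*}
Next, take an optimal (or $\varepsilon$-optimal) coupling $\gamma$ of $\widetilde{\sfE}_\#\bP_{u,\tau}$ and $\bQ_z\times\bP_\tau$. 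Existence follows on Polish spaces, and separable Banach spaces are Polish. Its pushforward $(\widetilde{\sfD}\times\widetilde{\sfD})_\#\gamma$ is a coupling of the two pushforward measures, with cost at most $(\|\sfD\|_{\Lip}^2+1)\,\sfW_2^2(\widetilde{\sfE}_\#\bP_{u,\tau},\bQ_z\times\bP_\tau)$. This is the standard fact that pushing forward by an $L$-Lipschitz map contracts $\sfW_2$ by at most the factor $L$.

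Combining the two estimates through the squared triangle inequality gives \eqref{eq:theorem2}. The only delicate point is bookkeeping: fixing the product norm convention so that the constant comes out exactly as $\|\sfD\|_{\Lip}^2+1$ rather than as $(\|\sfD\|_{\Lip}+1)^2$, which is what the computation above confirms. Measurability of the couplings is routine given the continuity of $\sfD$ and $\sfE$.
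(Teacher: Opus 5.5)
Your proposal is correct and follows essentially the same route as the paper's proof. Both use the squared triangle inequality through $(\widetilde{\sfD}\circ\widetilde{\sfE})_\#\bP_{u,\tau}$, the deterministic coupling $(\Id\times(\widetilde{\sfD}\circ\widetilde{\sfE}))_\#\bP_{u,\tau}$ for the reconstruction term, and a pushforward of an $\varepsilon$-optimal coupling of $\widetilde{\sfE}_\#\bP_{u,\tau}$ and $\bQ_z\times\bP_\tau$ through $\widetilde{\sfD}\times\widetilde{\sfD}$ with $\|\widetilde{\sfD}\|_{\Lip}^2\le\|\sfD\|_{\Lip}^2+1$ under the product norm; your stated assumptions (finite second moments, Lipschitz $\sfD$) make explicit what the paper leaves implicit.
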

This bounds the error between the joint distributions by terms related to ones that appear in our loss (\ref{eq:ae_objective}). The next results bridge the gap between MMD and Wasserstein, for distributions with compact support. Proofs are available in Appendix \ref{appen:thm_2_2} and \ref{appen:coro_2_3}.
\begin{corollary}\label{cor:mmd_bound}
We have the following bound for latent distributions with compact support:  
    \begin{align}
        \begin{split}
        \sfW_2^2(\bP_{u,\tau}, \widetilde{\sfD}_\#(\bQ_z\times\bP_\tau))\leq &2\Eb_{u,\tau\sim\Pb_{u,\tau}}\|u-(\sfD\circ \widetilde{\sfE})(u,\tau)\|^2 \\ &+2 C({\|\sfD\|_{\Lip}^2+ 1})\sfMMD_{\mathrm{RBF}}^{2\alpha}(\widetilde{\sfE}_\#(\Pb_{u,\tau}), \bQ_z\times\bP_\tau),
        \end{split}
    \end{align}
    for some $C>0,\alpha\in(0,1]$, where $\alpha$ depends critically on $\mathrm{dim}(\cZ\times\cT)$.
\end{corollary}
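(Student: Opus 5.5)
The plan is to start from Theorem~\ref{thm:sqWass_bound} and bound only its second term. The reconstruction term stays unchanged. The whole task is to show that, for measures supported on a compact set $K\subset\cZ\times\cT$,
\begin{equation*}
\sfW_2^2(\mu,\nu)\le C\,\sfMMD_{\mathrm{RBF}}^{2\alpha}(\mu,\nu)
\end{equation*}
for some $C>0$ and $\alpha\in(0,1]$, applied with $\mu=\widetilde{\sfE}_\#(\Pb_{u,\tau})$ and $\nu=\bQ_z\times\bP_\tau$. Substituting this into \eqref{eq:theorem2} then gives the corollary directly. (Strictly, $\bQ_z=\mathcal N(0,\I)$ is not compactly supported. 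I read the hypothesis as assuming a truncated or compactly supported latent reference and a compactly supported $\bP_\tau$, so that both measures live on a common compact $K$ of finite dimension $d=\dim(\cZ\times\cT)$.)

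\textbf{Step 1 (compactness reduction).} On $K$ with diameter $\Delta$, every coupling is supported in $K\times K$, so $\|x-y\|^2\le \Delta\|x-y\|$. Hence $\sfW_2^2\le \Delta\,\sfW_1$. It therefore suffices to bound $\sfW_1$ by a power of MMD.

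\textbf{Step 2 (from $\sfW_1$ to MMD).} I will use Kantorovich--Rubinstein duality, $\sfW_1(\mu,\nu)=\sup_{\|f\|_{\Lip}\le 1}\int f\,\md(\mu-\nu)$. Since $K$ is bounded, we may restrict to $f$ with $f(x_0)=0$, so that $\|f\|_\infty\le\Delta$.

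The MMD for the RBF kernel $k$ is $\sup_{\|h\|_{\mathcal H}\le1}\int h\,\md(\mu-\nu)$. For any $h\in\mathcal H$,
\begin{equation*}
\int f\,\md(\mu-\nu)\le 2\|f-h\|_{\infty,K}+\|h\|_{\mathcal H}\,\sfMMD(\mu,\nu).
\end{equation*}
What is needed is a quantitative approximation rate for the Gaussian RKHS: the approximation error
\begin{equation*}
A(R)=\sup_{\|f\|_{\Lip}\le1}\ \inf_{\|h\|_{\mathcal H}\le R}\|f-h\|_{\infty,K}.
\end{equation*}
For Lipschitz targets on a compact domain in $\Rb^d$, known results (Smale--Zhou, Cucker--Zhou) give $A(R)\lesssim R^{-\beta}$ for some $\beta>0$ depending on $d$, with only logarithmic rates for fixed-bandwidth Gaussians in the worst case. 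If only logarithmic rates are available, I will instead use a bandwidth adapted to the mollification: convolve $f$ with a Gaussian of width $\varepsilon$, which costs $O(\varepsilon)$ in sup norm, and bound the resulting RKHS norm polynomially in $1/\varepsilon$.

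Optimising $2R^{-\beta}+R\,\sfMMD$ over $R$ gives $\sfW_1\lesssim \sfMMD^{\beta/(1+\beta)}$. Combined with Step 1, this yields $\sfW_2^2\le C\,\sfMMD^{2\alpha}$ with $2\alpha=\beta/(1+\beta)$ (then $\alpha\le 1/2\le 1$). Here $C$ depends on $\Delta$, the kernel bandwidth and $d$, and $\alpha$ degrades with $d$, matching the stated dependence on $\dim(\cZ\times\cT)$. If the rates give larger exponents, I cap $\alpha$ at $1$ using the fact that MMD is bounded on $K$.

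\textbf{Main obstacle.} The hard step is Step 2: obtaining a genuinely polynomial approximation rate of Lipschitz functions by bounded-norm elements of the fixed-bandwidth Gaussian RKHS. The Gaussian RKHS consists of analytic functions, so norms blow up quickly. I would first try citing an existing MMD–Wasserstein comparison result on compact sets, which gives exactly this power-type bound with exponent depending on dimension. Failing that, I would use the Fourier-side argument: mollify, compare $\hat f$ to $\hat k$, and accept whatever exponent results, since the corollary only asserts existence of some $\alpha\in(0,1]$.
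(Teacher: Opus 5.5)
Your reduction is the same as the paper's. You keep the reconstruction term from Theorem~\ref{thm:sqWass_bound} and replace $\sfW_2^2(\widetilde{\sfE}_\#\bP_{u,\tau},\bQ_z\times\bP_\tau)$ by a power of the MMD. The paper's proof does only this: it imports $\sfW_2\le C\,\sfMMD_{\mathrm{RBF}}^{\alpha}$ for compactly supported measures from Vayer and Gribonval (2023), squares it, and explicitly labels the corollary qualitative. It also flags the same failure of compact support for $\mathcal N(0,\I)$ that you note. Your Step 1 ($\sfW_2^2\le\Delta\,\sfW_1$) is correct, and so is the splitting $\int f\,\md(\mu-\nu)\le 2\|f-h\|_{\infty,K}+\|h\|_{\mathcal H}\,\sfMMD(\mu,\nu)$.

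The gap is Step 2, and it cannot be closed for the RBF kernel. As you suspect, for a fixed bandwidth $\sigma$ the approximation error $A(R)$ of Lipschitz functions decays more slowly than any power of $R$ (Smale--Zhou: polynomial decay forces the target to be $C^\infty$). Optimising $2A(R)+R\,\sfMMD$ therefore gives only a logarithmic modulus. Your fallback does not fix this:
\begin{itemize}
\item the bandwidth is fixed by the $\sfMMD$ in the statement, so you cannot adapt it to $\varepsilon$;
\item a Gaussian mollification of width $\varepsilon<\sigma/\sqrt2$ of a non-smooth $f$ is not even in $\mathcal H$, since its squared RKHS norm is proportional to $\int|\hat f(\omega)|^2e^{(\sigma^2/2-\varepsilon^2)|\omega|^2}\,\md\omega=\infty$.
\end{itemize}

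No argument can complete Step 2 with constants depending only on $\Delta$, $\sigma$, $d$. The uniform H\"older comparison is false for the RBF kernel, already in one dimension:
\begin{itemize}
\item Let $\nu$ be uniform on $[-1,1]$ and $\mu_k$ the $k$-node Gauss--Legendre measure, which matches the moments of $\nu$ up to degree $2k-1$.
\item The Taylor remainder gives $|\hat\mu_k(\omega)-\hat\nu(\omega)|\le 2|\omega|^{2k}/(2k)!$.
\item The spectral measure of $e^{-|x-y|^2/(2\sigma^2)}$ is $\mathcal N(0,\sigma^{-2})$, so $\sfMMD^2(\mu_k,\nu)\le 4\sigma^{-4k}(4k-1)!!/((2k)!)^2\le 4(2/\sigma^2)^{2k}/(2k)!$, which is superexponentially small in $k$.
\item Every $k$-atom measure satisfies $\sfW_2(\mu_k,\nu)\ge\sfW_1(\mu_k,\nu)\ge 1/(2k)$.
\end{itemize}
Hence $\sfW_2\le C\,\sfMMD^\alpha$ fails for every $C,\alpha>0$. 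The same holds for the paper's finite mixture of RBF bandwidths.

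So with this kernel the corollary holds only trivially, with $C$ depending on the particular pair of measures (the kernel is characteristic, so $\sfMMD=0$ forces equality). That is why the paper can present it only as a qualitative link. Your first option, citing a comparison theorem, cannot give more than that for the RBF kernel.

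If you want a genuine dimension-dependent exponent, your Steps 1--2 do work after replacing the RBF kernel. Use a translation-invariant kernel $\kappa(x-y)$ with $\hat\kappa(\omega)\gtrsim(1+|\omega|^2)^{-s}$ for an integer $s>d/2$ (Mat\'ern/Sobolev type). Mollifying a Lipschitz extension of $f$ by a compactly supported bump gives $\|f-f_\varepsilon\|_\infty\lesssim\varepsilon$ and $\|f_\varepsilon\|_{\mathcal H}\lesssim\varepsilon^{1-s}$. This yields $\sfW_1\lesssim\sfMMD^{1/s}$ and $\sfW_2^2\lesssim\sfMMD^{1/s}$, i.e.\ $\alpha=1/(2s)<1/d$.
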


\subsubsection{Step 3: Bayesian Inference -- Autoencoder}
After learning the conditional prior $\bP_{u|\tau}$, given observations $y$ from a new system obtained from (\ref{eq:generative_model}), the conditional autoencoder approach allows for approximate sampling of the posterior $\bP_{u|y,\tau}$ by first sampling the posterior in the latent space $\bP_{z|y,\tau}$ using standard MCMC tools, such as NUTS \citep{hoffman2014no}, and then pushing the obtained samples through the decoder. This result follows directly from~\cite{vadeboncoeur2025geometric}. Algorithm~\ref{alg:ae} shows the inference process for AE. 
\begin{lemma}
    \label{lemma:posterior}
    Let $\bP_{u|y,\tau}$ be the posterior on $\cU$, with likelihood proportional to $\exp (-\Phi(u;y))$, \footnote{$\Phi(u;y)=\tfrac{1}{2\sigma_y^2}\lVert y-\sfH u\rVert_2^2$ is the potential associated with the likelihood in (\ref{eq:generative_model}).} and prior $\bP_{u\mid\tau}=\sfD(\bullet,\tau)_\#\bQ_z$ as in Theorem \ref{thm:obj_zero}. Then, the posterior $\bP_{u|y,\tau}=\sfD(\bullet, \tau)_\#\bP_{z|y,\tau}$ with the latent posterior $\md\bP_{z|y,\tau}(z) \propto \exp(-\Phi(\sfD(z, \tau); y))\md\bQ_z(z).$ 
\end{lemma}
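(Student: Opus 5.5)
The plan is to verify the claimed identity directly by testing both sides against arbitrary bounded measurable functions. Fix $\tau$ in the full-measure set where Theorem~\ref{thm:obj_zero} gives $\bP_{u\mid\tau}=\sfD(\bullet,\tau)_\#\bQ_z$. By Bayes' rule, the posterior is defined through
\begin{equation*}
\md\bP_{u|y,\tau}(u)=\frac{1}{Z_u}\exp(-\Phi(u;y))\,\md\bP_{u\mid\tau}(u),
\qquad
Z_u=\int_\cU \exp(-\Phi(u;y))\,\md\bP_{u\mid\tau}(u).
\end{equation*}
We first check that this is well defined. Since $\Phi\ge 0$, the integrand is bounded by one. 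Positivity of $Z_u$ follows because $\exp(-\Phi)$ is strictly positive everywhere. Measurability holds because $\Phi(\bullet;y)$ is continuous ($\sfH$ is a bounded linear map) and $\sfD(\bullet,\tau)$ is continuous, so $\Phi(\sfD(\bullet,\tau);y)$ is Borel on $\cZ$.

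The key step is the change of variables for pushforward measures: $\int_\cU g\,\md(\sfD(\bullet,\tau)_\#\bQ_z)=\int_\cZ g(\sfD(z,\tau))\,\md\bQ_z(z)$ for every bounded measurable $g$.

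First apply this with $g=\exp(-\Phi(\bullet;y))$. This gives $Z_u=Z_z:=\int_\cZ \exp(-\Phi(\sfD(z,\tau);y))\,\md\bQ_z(z)$, which is exactly the normaliser of the latent posterior $\bP_{z|y,\tau}$. In particular $Z_z\in(0,1]$, so $\bP_{z|y,\tau}$ is a well-defined probability measure.

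Next take any bounded measurable $f$ on $\cU$ and apply the change of variables with $g=f\exp(-\Phi(\bullet;y))$:
\begin{equation*}
\int_\cU f\,\md\bP_{u|y,\tau}
=\frac{1}{Z_u}\int_\cZ f(\sfD(z,\tau))\,e^{-\Phi(\sfD(z,\tau);y)}\,\md\bQ_z(z)
=\int_\cZ f(\sfD(z,\tau))\,\md\bP_{z|y,\tau}(z).
\end{equation*}
The right-hand side is $\int_\cU f\,\md\big(\sfD(\bullet,\tau)_\#\bP_{z|y,\tau}\big)$. Taking $f$ to be indicators of Borel sets shows the two measures agree, which is the claim for $\bP_\tau$-a.e.\ $\tau$.

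There is no real obstacle here; the argument is essentially the one in \cite{vadeboncoeur2025geometric}. The only care needed is in the bookkeeping. The statement holds only for $\tau$ in the full-measure set supplied by Theorem~\ref{thm:obj_zero}. We also need measurability of the composition $\Phi\circ\sfD$ and strict positivity of the normalising constant, both of which are handled by the continuity assumptions and boundedness noted above.
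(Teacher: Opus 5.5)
Your proof is correct. The paper gives no proof of its own for this lemma and only states that it "follows directly from" \cite{vadeboncoeur2025geometric}; your argument (testing against bounded measurable $f$ and applying the pushforward change of variables to both $e^{-\Phi}$ and $f e^{-\Phi}$) is the standard change-of-variables proof behind that citation, written out with the normaliser and measurability bookkeeping made explicit.
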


\begin{figure}[t]
\begin{minipage}[t]{0.49\textwidth}
\begin{algorithm}[H]
\footnotesize
\caption{Posterior sampling --- autoencoder}
\label{alg:ae}
\begin{algorithmic}[1]
\Require trained $\sfD,\sfG^\phi$; text $s$, observation $y$, operator $\sfH$,
         noise $\sigma_y$; $M_o$ NUTS warmup steps, $M$ NUTS posterior draws
\State $\tau\gets\sfG^\phi(s)$
\State $z_0\sim\bQ_z$
\For{$i=1$ \textbf{to} $M_o+M$}
    \State $z_i\gets\textsc{NUTS-Step}\big(\bP_{z\mid y,\tau},\ z_{i-1}\big)$
       \Statex \hfill $\triangleright$ posterior target defined in Lemma~\ref{lemma:posterior}
\EndFor
\For{$m=1$ \textbf{to} $M$}
    \State $u^{(m)}\gets\sfD(z_{M_o+m},\tau)$
\EndFor
\State \Return $\{u^{(m)}\}_{m=1}^{M}$
\end{algorithmic}
\vspace{1.12\baselineskip}
\end{algorithm}
\end{minipage}\hfill
\begin{minipage}[t]{0.49\textwidth}
\begin{algorithm}[H]
\footnotesize
\caption{Posterior sampling --- diffusion}
\label{alg:dps}
\begin{algorithmic}[1]
\Require trained $\sfV,\sfG^\phi$; text $s$, observation $y$, operator $\sfH$, noise $\sigma_y$; grid $T=t_L>\dots>t_0=\varepsilon$, cap $c$, $M$ posterior draws; write $u_i$ for $u_{t_i}$
\State $\tau \gets \sfG^\phi(s)$
\For{$m=1$ \textbf{to} $M$}
  \State $u_L\sim\mathcal{N}(0,\I)$
  \For{$i=L$ \textbf{to} $1$}
    \State $\hat u_0\gets \textsc{Tweedie Estimate}(u_i,t_i,\tau)$
    \State $\delta\gets \textsc{Likelihood Guidance}(\hat u_0,y,c)$
    \State $u_{i-1}\gets \textsc{SDE Step}(u_i,t_i,\tau,\delta)$
  \EndFor
  \State $u^{(m)} \leftarrow \textsc{Tweedie Estimate}(u_0, t_0, \tau)$
\EndFor
\State \Return $\{u^{(m)}\}_{m=1}^{M}$
\end{algorithmic}
\end{algorithm}
\end{minipage}
\end{figure}

\subsection{Score-Based Diffusion Model}

\subsubsection{Step 2: Learning Conditional Prior -- Score-Based Diffusion Model}

The second construction targets $\bP_{u\mid\tau}$ via diffusion. We adopt the variance-preserving (VP)-SDE of \citet{song2020score}, the continuous-time limit of DDPM \citep{ho2020denoising}, which progressively perturbs a field $u_0\sim  \bP_{u|\tau}$ into noise,
\begin{equation}\label{eq:fwd_sde}
    \md u = -\tfrac{1}{2}\beta(t)\,u\,\md t + \sqrt{\beta(t)}\,\md w,
    \qquad t\in[0,T],
\end{equation}
where $\beta(t)>0$ is the noise schedule and $w$ a Wiener process on $\cU$. Linearity of (\ref{eq:fwd_sde}) gives a Gaussian perturbation kernel
\smash{$p_{0t}(u_t| u_0)=\mathcal{N}(\sqrt{\bar\alpha(t)}u_0,(1-\bar\alpha(t))\I)$ with $\bar\alpha(t)=\exp(-\int_0^t\beta(r)\md r)$}, so that $\bar\alpha(T)\approx0$ and $u_T \sim p_T\approx\mathcal{N}(0,\I)$. The forward process does not depend on $\tau$ and the text enters only through the reverse process. Samples from $\bP_{u|\tau}$ are obtained by integrating the reverse-time SDE from $u_T$ down to $t=0$, following
\begin{equation}\label{eq:rev_sde}
    \md u = \left[-\tfrac{1}{2}\beta(t)\,u - \beta(t)\,\nabla_{u_t}\log p_t(u_t|\tau)\right]\md t + \sqrt{\beta(t)}\,\md\bar{w},
\end{equation}
which requires the conditional score $\nabla_{u_t}\log p_t(u_t|\tau)$. We approximate it with a network $\sfV:\cU\times[\varepsilon,T]\times\cT\rightarrow\cU$ trained by denoising score matching, with objective
\begin{equation}\label{eq:dsm}
    \sJ_{\mathsf{SD}}(\sfV)=\Eb_{t\sim\mathrm{Unif}[\varepsilon,T]}\Big\{\lambda(t)\,
    \Eb_{u_0,\tau \sim\bP_{u,\tau}}\ \Eb_{u_t\mid u_0}
    \Big[\big\|\sfV(u_t,t,\tau)-\nabla_{u_t}\log p_{0t}(u_t| u_0)\big\|_2^2\Big]\Big\},
\end{equation}
where $\varepsilon \simeq 0$ is a small positive constant to avoid exploding. In practice, we train the equivalent $\epsilon$-parameterisation $\epsilon_\sfV=-\sigma(t)\,\sfV$ with $\lambda(t)=\sigma^2(t)=1-\bar\alpha(t)$, which avoids the variance blow-up of the score target as $t\to0$ \citep{ho2020denoising, zhang2022fast,rozet2023score}, but for notational convenience we retain the score notation $\sfV$ throughout. At inference, (\ref{eq:rev_sde}) is discretised into $L$ steps via the Euler-Maruyama scheme on a uniform temporal grid $T=t_L>\cdots>t_0=\varepsilon$, and we write $u_i$ for $u_{t_i}$.

\subsubsection{Step 3:  Bayesian Inference -- Diffusion Posterior Sampling}

To sample from $\bP_{u\mid y,\tau}$, we replace the prior score in (\ref{eq:rev_sde}) by the posterior score using Bayes' rule
\begin{equation}\label{eq:post_score}
    \nabla_{u_t}\log p_t(u_t| y,\tau)
    = \underbrace{\nabla_{u_t}\log p_t(u_t|\tau)}_{\approx\;\sfV(u_t,t,\tau)}
    + \nabla_{u_t}\log p_t(y| u_t, \tau).
\end{equation}
The first term is the trained network; the second is intractable, since \smash{$p_t(y| u_t,\tau)=\Eb_{u_0| u_t,\tau}\!\left[p(y| u_0)\right]$} requires marginalising the denoising posterior $p(u_0| u_t,\tau)$, and DPS \citep{chung2022diffusion} evaluates the likelihood at its Tweedie point estimate $\hat u_0(u_t,\tau)=\Eb[u_0| u_t,\tau]$, available in closed form as,
\begin{equation}\label{eq:tweedie}
    \hat u_0(u_t,\tau)=\tfrac{1}{\sqrt{\bar\alpha(t)}}
    \big(u_t+\left(1-\bar\alpha(t)\right)\,\sfV(u_t,t,\tau)\big),
\end{equation}
so that, under the Gaussian likelihood given in (\ref{eq:generative_model}),
\begin{equation}\label{eq:dps_grad}
    \nabla_{u_t}\log p_t(y| u_t,\tau)\;\approx\;
    \nabla_{u_t}\log p\big(y|\hat u_0(u_t,\tau)\big)
    \;=\;-\tfrac{1}{2\sigma_y^2}\,\nabla_{u_t}\big\|y-\sfH\,\hat u_0(u_t,\tau)\big\|_2^2.
\end{equation}
Substituting (\ref{eq:dps_grad}) into (\ref{eq:post_score}) and discretising the resulting reverse SDE gives our sampler. Standard DPS replaces the factor \smash{$1/(2\sigma_y^2)$} in (\ref{eq:dps_grad}) by a tuned step
size \smash{$\zeta_i=\zeta'/\|y-\sfH\hat u_0\|_2$}, but we retain (\ref{eq:dps_grad}) as derived, so that prior and likelihood keep their relative scale. The cost is that the approximation of the denoising posterior as a point estimate is crude at large $t$, which over-weights the data misfit and can cause divergence.
We therefore apply a norm clip, controlled by a constant $c$, to the likelihood guidance contribution at each sampling step, which is only active at large $t$. Algorithm~\ref{alg:dps} provides the inference process, with a detailed version in Algorithm~\ref{alg:dps_detailed} in Appendix. Tables \ref{tab:heat_inference_full} and \ref{tab:helm_inference_full} in Appendix~\ref{appen:results} show this yields better calibrated posteriors than standard DPS, though the approximation in (\ref{eq:dps_grad}) still leaves diffusion posteriors worse calibrated than autoencoders, shown further in experiments. 

\section{Implementation} \label{sec:implementation}

\paragraph{Text generation.}
Text descriptions are generated synthetically by prompting LLMs with information from the corresponding field. They are free-form natural language, rather than a fixed template, so as to imitate the way a person would describe a QoI. The same QoI could be described in different words and sentence structures. Descriptions in the test set are held out from fine-tuning and training. More details about text generation and prompts used in each experiment are in Appendix \ref{appen:results}.

\paragraph{Network Architectures.}
The autoencoder uses modified Fourier neural operators (FNO) \citep{li2020fourier} in steady-state heat and Helmholtz experiments and a convolutional network in the UK weather experiment; the score network $\sfV$ is a U-Net \citep{ronneberger2015u} throughout. Text embeddings enter the autoencoder by concatenation; in the score network they are concatenated with the sinusoidal embedding of the diffusion time $t$, as the input to FiLM-style adaptive group normalisation (AdaGN) \citep{perez2018film, dhariwal2021diffusion}. 
This work focuses on how natural language is incorporated into the prior and in the training and inference pipeline, rather than architectural developments. 
We compare our proposed methodology (\textbf{Cond AE}, \textbf{Cond Diffusion}) against Gaussian processes (\textbf{GP}) \citep{rasmussen2003gaussian}, and a deterministic supervised model \citep{fukami2021global}, which is a direct map $(y,\tau) \mapsto u$. We also test text-free and text-conditioned variants (\textbf{Uncon Supervised}, \textbf{Cond Supervised}, more details in Appendix \ref{appen:supervised}), and unconditional generative priors trained without text (\textbf{Uncon AE}, \textbf{Uncon Diffusion}).

\paragraph{Inference-time Sampling.}
Observations are drawn at different locations for each system. $M$ posterior draws are kept per system. For the autoencoder, inference runs in the latent space with NUTS \citep{hoffman2014no} using Pyro \citep{bingham2019pyro}, with a single chain of $M_o$ warmup steps followed by $M$ retained draws. Both warmup and sampling are counted in the reported inference cost. For the diffusion model, the $M$ draws come from $M$ independent integrations of the guided reverse SDE, each of $L$ steps. More details about training and inference are in Appendix \ref{appen:results}.

\section{Numerics} \label{sec:numerics}
\subsection{Steady-State Heat Equation}
\begin{figure}[t!]
    \begin{center}
    \includegraphics[scale=0.36]{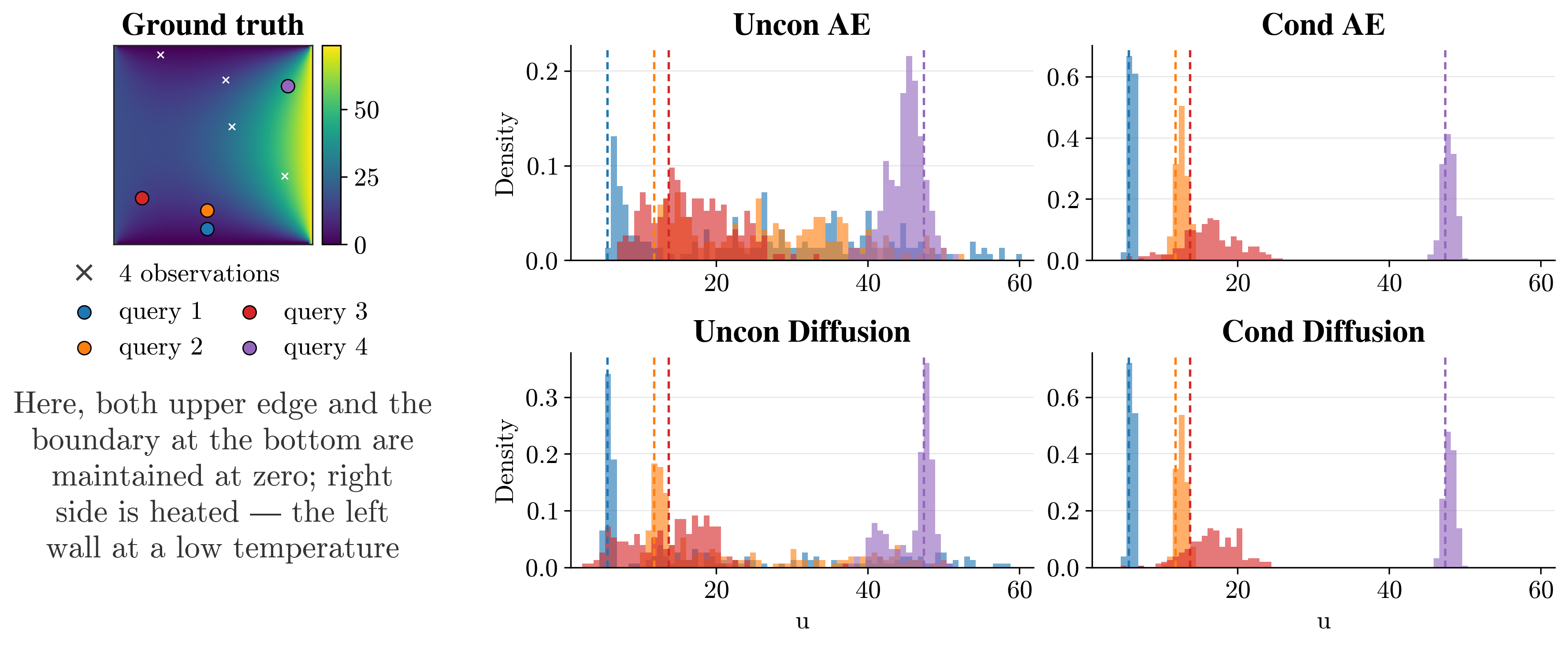}
    \end{center}
    \caption{\textbf{Posterior marginals of  steady-state heat equation 
    at four unobserved query locations for $\mathbf{n_{\mathrm{\textbf{obs}}}=4}$}. Dashed vertical lines denote the ground truth at four query points.}
    \label{fig:heat_posterior_histograms}
\end{figure}
\begin{table}[t!]
  \caption{\textbf{Inference results on the steady-state heat equation at $\mathbf{n_{\textbf{obs}}=\{4,8\}}$.} We report NMSE of posterior mean (with quartiles in Appendix Table~\ref{tab:heat_nmse_quantiles}), and coverages of the ground truth by the centered $68\%$ and $95\%$ posterior
  intervals (target values: $68\%$ and $95\%$). NMSE entries are mean $\pm$ 1 std across $200$ test fields; coverages report the mean, values in parentheses are the mean absolute difference from the target coverage (smaller is better). Pred. is the mean per-field inference time.}
  \label{tab:heat_inference}
  \centering
  \setlength{\tabcolsep}{3pt}
  \resizebox{\textwidth}{!}{%
  \begin{tabular}{l |
    S[table-format=3.2]@{\,$\pm$\,}S[table-format=3.2]
    S[table-format=2.1]@{\,}c
    S[table-format=2.1]@{\,}c |
    S[table-format=2.2]@{\,$\pm$\,}S[table-format=2.2]
    S[table-format=2.1]@{\,}c
    S[table-format=2.1]@{\,}c |
    C{1.4cm} C{1.4cm} C{1.4cm}}
    \toprule
    & \multicolumn{6}{c|}{$n_{\text{obs}}=4$}
    & \multicolumn{6}{c|}{$n_{\text{obs}}=8$}
    & \multicolumn{3}{c}{Cost} \\
    \cmidrule(lr){2-7} \cmidrule(lr){8-13} \cmidrule(lr){14-16}
    \textbf{Model}
    & \multicolumn{2}{c}{\makecell{NMSE $(\times 10^{-3})$}}
      & \multicolumn{2}{c}{\makecell{$\%C_{68}$ }}
      & \multicolumn{2}{c|}{\makecell{$\%C_{95}$ }}
    & \multicolumn{2}{c}{\makecell{NMSE $(\times 10^{-3})$}}
      & \multicolumn{2}{c}{\makecell{$\%C_{68}$}}
      & \multicolumn{2}{c|}{\makecell{$\%C_{95}$ }}
    & Param(M) & Train(hr) & Pred(s) \\
    \midrule
    GP
      & 178.87 & 103.15 & 71.7 & (14.8) & 93.6 & ( 5.5)
      &  97.03 &  59.79 & 64.6 & (13.8) & 89.7 & ( 7.5)
      & -- & -- & 0.0004 \\
    Uncon Supervised
      & 54.66 & 203.76 & \multicolumn{2}{c}{--} & \multicolumn{2}{c|}{--}
      &  4.87 &  12.89 & \multicolumn{2}{c}{--} & \multicolumn{2}{c|}{--}
      & 5.13 & 1.91 & 0.0003 \\
    Cond Supervised
      &  9.52 &  17.02 & \multicolumn{2}{c}{--} & \multicolumn{2}{c|}{--}
      &  3.54 &   6.18 & \multicolumn{2}{c}{--} & \multicolumn{2}{c|}{--}
      & 5.13 & 0.93 & 0.0003\\
    Uncon AE
      & 46.45 & 132.21 & 61.5 & (22.4) & 88.7 & (12.0)
      &  4.85 &  15.95 & 67.6 & (17.8) & 92.6 & ( 8.5)
      & 9.48 & 3.63 & 28.51 \\
    Cond AE
      &  6.13 &  11.70 & 61.6 & (23.4) & 88.2 & ( 8.2)
      &  1.40 &   3.01 & 62.5 & (20.8) & 88.1 & ( 7.9)
      & 9.87 & 2.40 & 10.25 \\
    Uncon Diffusion
      & 37.71 & 107.23 & 64.9 & (25.2) & 89.2 & (12.2)
      &  2.45 &   8.95 & 59.8 & (26.0) & 87.0 & (14.3)
      & 0.99 & 2.75 & 32.15 \\
    Cond Diffusion
      &  7.12 &  17.64 & 62.3 & (25.2) & 89.8 & (12.0)
      &  1.44 &   3.54 & 64.3 & (22.7) & 90.3 & (11.2)
      & 1.09 & 1.76 & 32.21 \\
    \bottomrule
  \end{tabular}%
  }
\end{table}
\begin{figure}[t!]
    \begin{center}
    \includegraphics[scale=0.38]{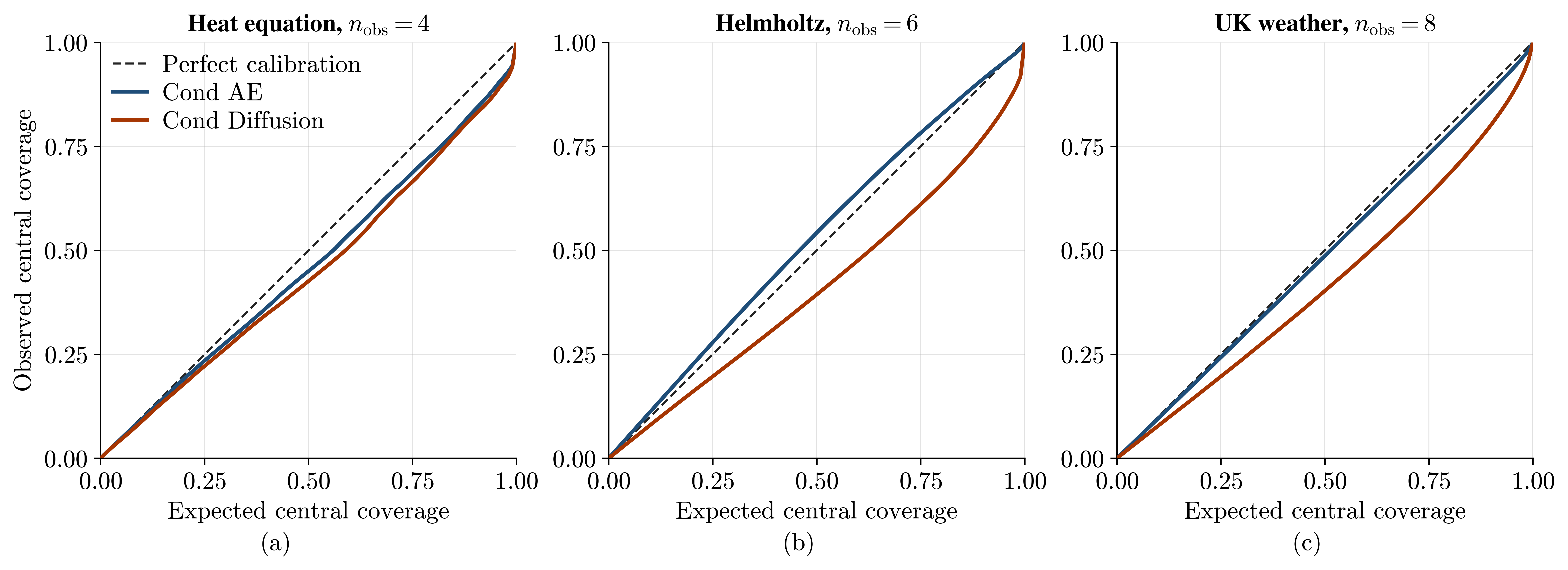}
    \end{center}
    \caption{ \textbf{Performance of posterior calibration.} We plot calibration curves (observed coverage over target central coverage) for three experiments, where the diagonal represents perfect calibration, with more details about calibration curve calculation in Appendix \ref{appen:metrics}.}
    \label{fig:calibration}
\end{figure}
\textbf{Setup:} The first experiment is on the steady-state heat equation on a unit square, with setup in Appendix~\ref{appen:heat_setup}. For each system, we randomly select two of the four sides and assign zero temperature; the remaining two are set to a constant drawn from $\mathrm{Unif}[10,80]$. Accompanying texts describe these boundary temperatures qualitatively, with an example in Figure~\ref{fig:heat_posterior_histograms}. Training uses $6.4k$ samples. \textbf{Results:} After training the autoencoder $\sfE,\sfD$, and the diffusion score network $\sfV$, we perform inference on $200$ test fields with observation noise $\sigma_y=1$, and different numbers of observations $n_{\mathrm{obs}}\in\{2,4,6,8,10\}$. Tables \ref{tab:heat_inference} and \ref{tab:heat_inference_full} compare all models in terms of accuracy, coverage and cost, with metrics detailed in Appendix \ref{appen:metrics}. Generative priors improve substantially from GP and supervised baselines, and each conditional model improves on its unconditional counterpart. Posterior histograms in Figure~\ref{fig:heat_posterior_histograms} also show that unconditional models produce wider posteriors. \textbf{Cond AE} and \textbf{Cond Diffusion} report comparable NMSE and coverages in Table \ref{tab:heat_inference} and Figure~\ref{fig:calibration}(a), but the bracketed mean absolute differences from target coverages are larger for \textbf{Cond Diffusion} ($12.0$ against $8.2$ for $\%C_{95}$ at $n_\text{obs}=4$), and Appendix Figure~\ref{fig:heat_coverage} shows that per-system coverages of \textbf{Cond Diffusion} are less concentrated around target coverages, with its high average coming from $60\%$ of systems reaching $\%C_{95}=100\%$, where the posterior is over-dispersed. This shows that DPS introduces bias into the posterior approximation due to the estimation in (\ref{eq:dps_grad}). More evaluations such as ablations on different $n_{obs}$ (Figures \ref{fig:heat_text_pair},\ref{fig:heat_obs_graph2}) and CRPS (Table \ref{tab:heat_inference_full}) are in Appendix \ref{appen:heat_results}.

\subsection{Helmholtz Equation}

\begin{figure}[t]
    \begin{center}
    \includegraphics[scale=0.4]{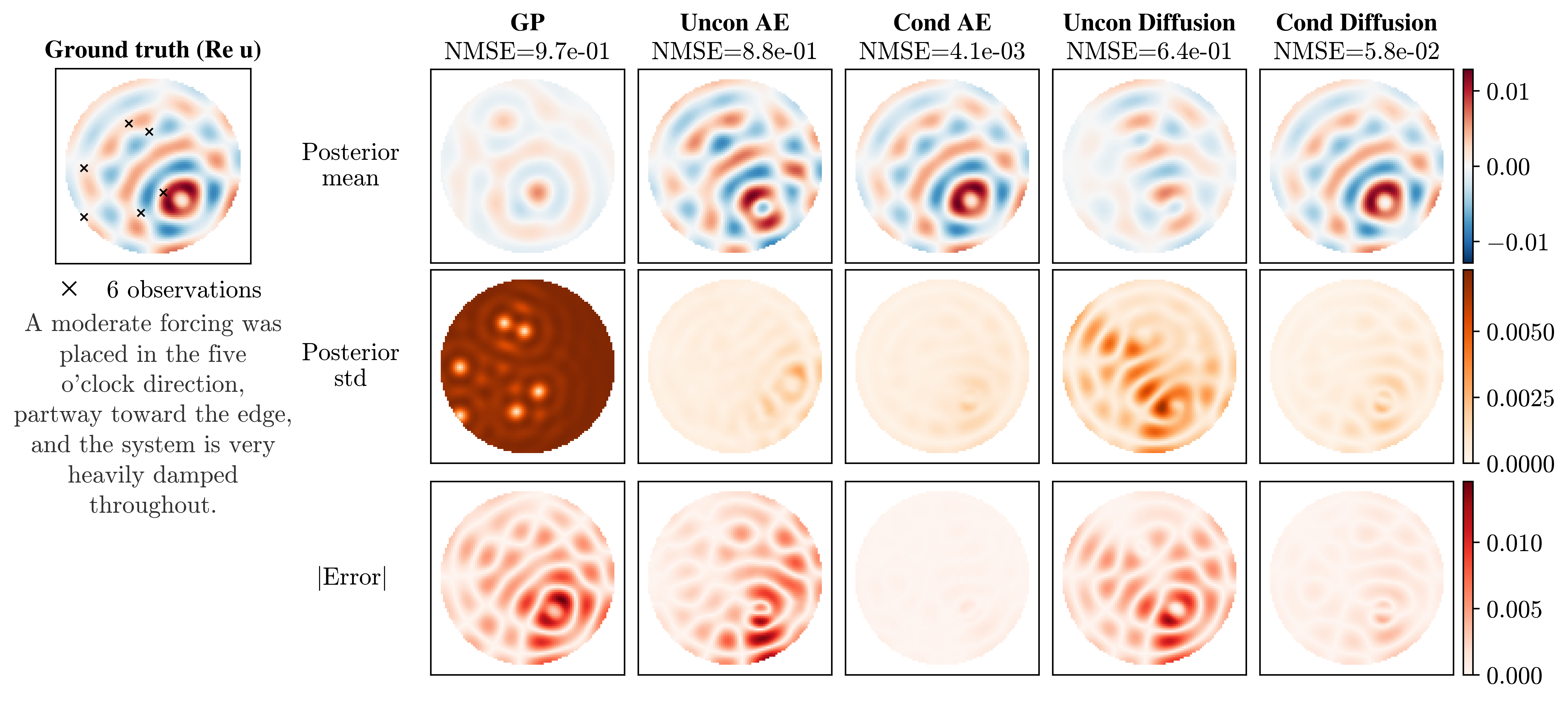}
    \end{center}
    \caption{Inference results of one Helmholtz test system (Real part) at $n_{\mathrm{\text{obs}}}=6$.}
    \label{fig:helm_single_system}
\end{figure}
\begin{table}[t]
  \centering
  \caption{Inference results on 100 Helmholtz test fields, more details in Appendix Tables~\ref{tab:helm_inference_full} and \ref{tab:helm_nmse_quantiles}.}
  \label{tab:helm_inference}
  \setlength{\tabcolsep}{3pt}
  \resizebox{\textwidth}{!}{%
  \begin{tabular}{l |
    S[table-format=2.2]@{\,$\pm$\,}S[table-format=3.2]
    S[table-format=2.1]@{\,}c
    S[table-format=2.1]@{\,}c |
    S[table-format=2.2]@{\,$\pm$\,}S[table-format=2.2]
    S[table-format=2.1]@{\,}c
    S[table-format=2.1]@{\,}c |
    C{1.4cm} C{1.4cm} C{1.4cm}}
    \toprule
    & \multicolumn{6}{c|}{$n_{\text{obs}}=6$}
    & \multicolumn{6}{c|}{$n_{\text{obs}}=10$}
    & \multicolumn{3}{c}{Cost} \\
    \cmidrule(lr){2-7} \cmidrule(lr){8-13} \cmidrule(lr){14-16}
    \textbf{Model}
    & \multicolumn{2}{c}{\makecell{NMSE $(\times 10^{-2})$}}
      & \multicolumn{2}{c}{\makecell{$\%C_{68}$ }}
      & \multicolumn{2}{c|}{\makecell{$\%C_{95}$ }}
    & \multicolumn{2}{c}{\makecell{NMSE $(\times 10^{-2})$}}
      & \multicolumn{2}{c}{\makecell{$\%C_{68}$ }}
      & \multicolumn{2}{c|}{\makecell{$\%C_{95}$ }}
    & Param(M) & Train(hr) & Pred(s) \\
    \midrule
    GP
      & 86.74 &  9.20 & 83.3 & (23.0) & 94.0 & ( 7.4)
      & 77.12 & 11.96 & 82.3 & (23.1) & 93.6 & ( 7.7)
      & -- & -- & 0.005 \\
    Uncon Supervised
      & 25.93 & 23.18 & \multicolumn{2}{c}{--} & \multicolumn{2}{c|}{--}
      & 10.17 &  7.68 & \multicolumn{2}{c}{--} & \multicolumn{2}{c|}{--}
      & 8.22 & 13.64 & 0.002 \\
    Cond Supervised
      & 19.77 & 25.27 & \multicolumn{2}{c}{--} & \multicolumn{2}{c|}{--}
      & 10.84 & 16.48 & \multicolumn{2}{c}{--} & \multicolumn{2}{c|}{--}
      & 8.32 & 10.91 & 0.002 \\
    Uncon AE
      & 66.61 & 338.05 & 45.9 & (28.7) & 68.6 & (28.3)
      & 16.07 &  30.13 & 46.1 & (26.9) & 68.4 & (28.5)
      & 19.71 & 19.37 & 145.25 \\
    Cond AE
      &  7.15 & 17.18 & 71.4 & (13.6) & 94.5 & ( 5.5)
      &  3.16 &  7.25 & 69.7 & (11.8) & 93.8 & ( 5.4)
      & 20.66 & 15.91 & 43.90 \\
    Uncon Diffusion
      & 22.57 & 24.20 & 64.4 & (18.0) & 93.4 & ( 6.7)
      &  5.36 &  9.24 & 67.6 & (19.1) & 92.0 & ( 8.4)
      & 6.02 & 17.33 & 231.47 \\
    Cond Diffusion
      &  7.15 & 12.63 & 54.5 & (23.5) & 83.0 & (15.3)
      &  3.45 &  7.54 & 54.7 & (20.7) & 83.6 & (14.1)
      & 6.31 & 13.17 & 232.18 \\
    \bottomrule
  \end{tabular}%
  }
\end{table}
\textbf{Setup:} This experiment is a damped Helmholtz equation on a circular disc, with uniform forcing over a small circular region. Each system is a complex field, with two channels for the real and imaginary parts. Accompanying text describes the damping intensity, the amplitude and location of forcing, with an example in Figure~\ref{fig:helm_single_system}. The precision of descriptions varies across systems, with details in
Appendix~\ref{appen:helm_setup}. $8k$ training samples are generated. \textbf{Results:} Inference is performed over $100$ test fields with $\sigma_y=6\times 10^{-4}$ and $n_{\text{obs}}=\{4,6,8,10,12\}$. Tables~\ref{tab:helm_inference} and~\ref{tab:helm_inference_full} again show conditional generative priors ahead of other baselines and unconditional alternatives, with \textbf{Cond AE} attaining better posterior coverage and faster inference than \textbf{Cond Diffusion}. In Figure~\ref{fig:calibration}(b), the calibration curve of \textbf{Cond AE} sits closer to the diagonal and \textbf{Cond Diffusion} sits below the diagonal, indicating overconfident posteriors. Figure~\ref{fig:helm_single_system} plots the posterior mean, standard deviation, and absolute error of five probabilistic models at $n_{\mathrm{obs}}=6$, with unconditional models having both a larger error and posterior spread. Additional results are provided in Appendix \ref{appen:helm_results}.

\subsection{UK Weather Reanalysis Data}

\begin{figure}[t]
    \begin{center}
    \includegraphics[scale=0.35]{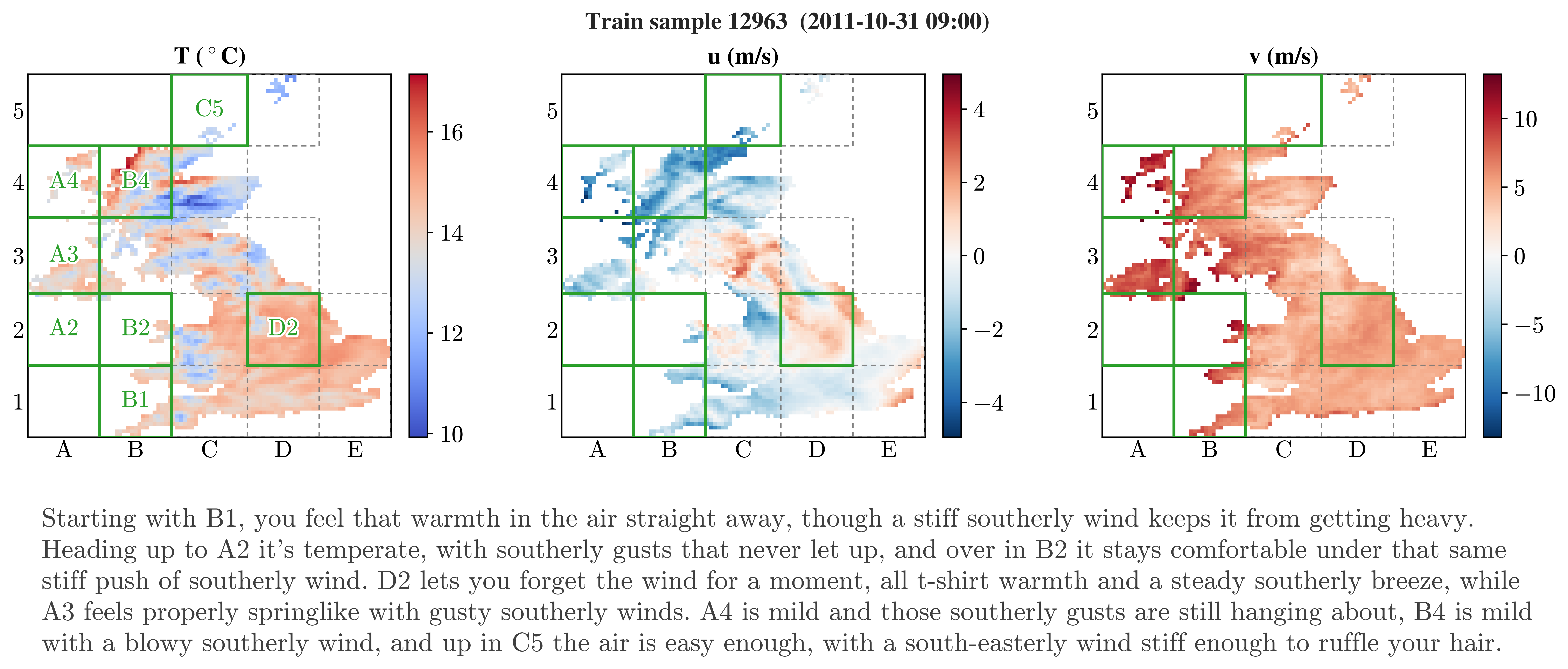}
    \end{center}
    \caption{\textbf{Example of a 3-channel weather data slice with corresponding text description.} Green boxes mark the regions referenced in the text, and grey dashed boxes mark available land regions.}
    \label{fig:uk_visualisation}
\end{figure}
\begin{figure}[t]
    \begin{center}
    \includegraphics[scale=0.4]{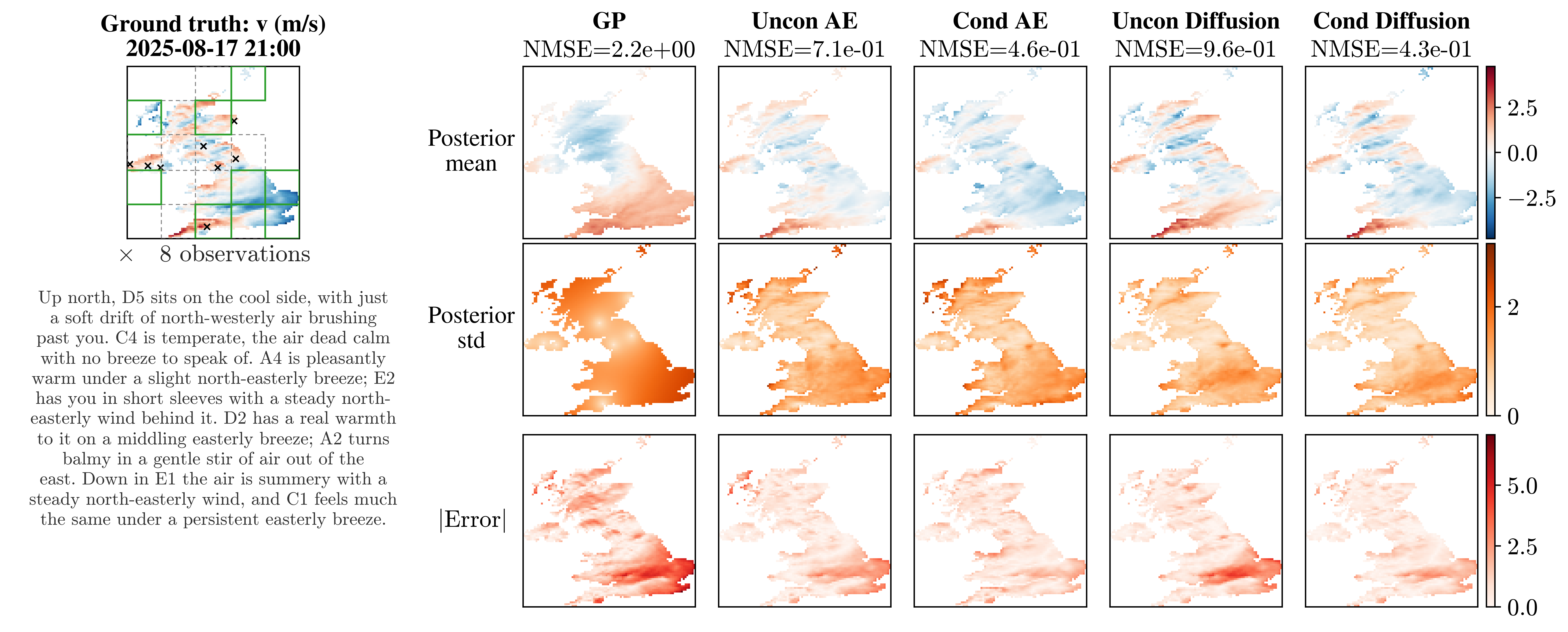}
    \end{center}
    \caption{Inference results of one UK weather test system 
    ($v$) with $n_{\mathrm{\text{obs}}}=8$.}
    \label{fig:uk_single_system_686}
\end{figure}
\begin{table}[H]
  \centering
  \caption{Inference results on 100 UK weather test fields, more details in Appendix Tables~\ref{tab:uk_inference_full} and~\ref{tab:uk_nmse_quantiles}. }
  \label{tab:uk_inference}
  \setlength{\tabcolsep}{3pt}
  \resizebox{\textwidth}{!}{%
  \begin{tabular}{l |
    S[table-format=2.2]@{\,$\pm$\,}S[table-format=2.2]
    S[table-format=2.1]@{\,}c
    S[table-format=2.1]@{\,}c |
    S[table-format=2.2]@{\,$\pm$\,}S[table-format=2.2]
    S[table-format=2.1]@{\,}c
    S[table-format=2.1]@{\,}c |
    C{1.4cm} C{1.4cm} C{1.4cm}}
    \toprule
    & \multicolumn{6}{c|}{$n_{\text{obs}}=8$}
    & \multicolumn{6}{c|}{$n_{\text{obs}}=12$}
    & \multicolumn{3}{c}{Cost} \\
    \cmidrule(lr){2-7} \cmidrule(lr){8-13} \cmidrule(lr){14-16}
    \textbf{Model}
    & \multicolumn{2}{c}{\makecell{NMSE $(\times 10^{-2})$}}
      & \multicolumn{2}{c}{\makecell{$\%C_{68}$}}
      & \multicolumn{2}{c|}{\makecell{$\%C_{95}$}}
    & \multicolumn{2}{c}{\makecell{NMSE $(\times 10^{-2})$}}
      & \multicolumn{2}{c}{\makecell{$\%C_{68}$}}
      & \multicolumn{2}{c|}{\makecell{$\%C_{95}$}}
    & Params(M) & Train(hr) & Pred.(s) \\
    \midrule
    GP
      & 12.12 & 16.06 & 68.3 & ( 8.5) & 91.7 & ( 5.0)
      & 10.16 & 13.80 & 69.7 & ( 8.1) & 92.1 & ( 4.6)
      & -- & -- & 0.002 \\
    Uncon Supervised
      &  7.46 & 11.38 & \multicolumn{2}{c}{--} & \multicolumn{2}{c|}{--}
      &  6.22 & 10.47 & \multicolumn{2}{c}{--} & \multicolumn{2}{c|}{--}
      & 8.23 & 22.34 & 0.002 \\
    Cond Supervised
      &  6.78 &  9.95 & \multicolumn{2}{c}{--} & \multicolumn{2}{c|}{--}
      &  5.89 &  8.81 & \multicolumn{2}{c}{--} & \multicolumn{2}{c|}{--}
      & 8.32 & 14.89 & 0.002 \\
    Uncon AE
      &  8.18 & 11.55 & 67.5 & ( 6.5) & 93.6 & ( 3.3)
      &  6.90 & 10.50 & 65.8 & ( 5.9) & 92.3 & ( 3.6)
      & 16.92 & 14.08 & 47.02 \\
    Cond AE
      &  6.93 &  9.81 & 65.9 & ( 5.6) & 92.5 & ( 3.5)
      &  6.12 &  9.58 & 63.8 & ( 6.6) & 90.9 & ( 4.5)
      & 17.71 & 15.73 & 47.08 \\
    Uncon Diffusion
      &  7.72 & 11.25 & 56.2 & (12.4) & 86.5 & ( 8.7)
      &  6.32 & 11.66 & 56.3 & (12.2) & 86.5 & ( 8.5)
      & 9.19 & 44.29 & 283.38 \\
    Cond Diffusion
      &  6.81 & 10.13 & 56.0 & (12.5) & 86.0 & ( 9.1)
      &  6.05 & 11.02 & 55.7 & (12.6) & 85.8 & ( 9.2)
      & 10.17 & 26.73 & 283.82 \\
    \bottomrule
  \end{tabular}%
  }
\end{table}
\textbf{Setup:} This experiment uses CERRA reanalysis data \citep{ridal2024cerra}, taking temperature and two wind velocity components over UK land as three channels $\{T,u,v\}$. A $5\times5$ grid is laid over the domain and each description covers $8$
randomly chosen regions, reporting the temperature and wind of each with an example in Figure~\ref{fig:uk_visualisation}. Training uses $27{,}396$ fields from 2000-2024. \textbf{Results.} Inference is performed over $100$ test fields, with $\sigma_y=\{0.5,0.3,0.3\}$ and $n_{\text{obs}}=\{4,6,8,10,12,14\}$, with results in Tables~\ref{tab:uk_inference} and~\ref{tab:uk_inference_full}. Deterministic supervised baselines perform better here, but provide no UQ and are tied to the observation process they were trained on. With a different observation operator at inference in Appendix Table~\ref{tab:uk_disjoint}, generative models overtake them. Diffusion models provide better NMSE than autoencoders, showing its expressiveness on more irregular fields, but \textbf{Cond Diffusion} gives worse calibrated posterior than \textbf{Cond AE} in Figure \ref{fig:calibration}(c). Figure~\ref{fig:uk_single_system_686} plots results for one $v$ field. The gain from text is smaller here than in other experiments, since the text carries less information about the field, but conditional models still reach lower NMSE and show lower error inside the regions that the text describes. More evaluations are provided in Appendix \ref{appen:uk_results}.

\section{Conclusion} \label{sec:conclusion}

We propose a framework for learning text-conditional generative priors over physical fields, allowing qualitative natural-language descriptions to be combined with sparse quantitative measurements for posterior inference and uncertainty quantification. We develop a complete pipeline from fine-tuning a text encoder, to training generative priors based on autoencoders and diffusion models, and to posterior inference. Comparing the two, autoencoders achieve better-calibrated posteriors while diffusion models offer greater expressiveness on more irregular fields. Across our experiments, incorporating text into the prior consistently improves reconstruction accuracy over unconditional and non-data-driven priors.

\subsubsection*{Acknowledgments}
MG is supported by a Royal Academy of Engineering Research Chair and EPSRC grants [EP/X037770/1, EP/Y028805/1, EP/V056441/1 and EP/W005816/1]. AV is supported through the EPSRC ROSEHIPS grant [EP/W005816/1].

\subsection*{AI use statement}

In this work, we used generative AI tools to help in the retrieval and discovery of related works, and to help write code to implement methods. It was used to a limited extent to polish specific sections of the writing.
AI was used in providing some limited ingredients, subsequently used and arranged into the theory, for proving mathematical claims.
AI was also used in creating synthetic datasets for experiments.
Prompts used to generate synthetic datasets are provided in the Appendix. 

\subsection*{Reproducibility statement}

Model structures and hyperparameters used in all experiments are provided in Appendix \ref{appen:results}. Code for reproducing all experiments is available at \url{https://github.com/ypzpy/Text-conditional-prior}.

\bibliography{reference}
\bibliographystyle{reference}

\appendix

\clearpage

\section{Notation} \label{appen:notation}

\begin{table}[H]
  \centering
  \caption{Symbols and concepts.}
  \label{tab:notation}
  \setlength{\tabcolsep}{6pt}
  \resizebox{\textwidth}{!}{%
  \begin{tabular}{c l l}
    \toprule
    \textbf{Symbol} & \textbf{Description} & \textbf{Example} \\
    \midrule
    $u \in \cU$ & Quantity of Interest (QoI)
      & Heat distribution over the physical domain \\
    $\theta \in \Theta$ & Human-perceivable quantity related to $u$
      & The left boundary condition \\
    $s \in \mathcal{S}$ & Qualitative data: description of $\theta$ 
      & \texttt{"the left boundary is hot"} \\
    $\tau \in \cT$ & Euclidean description of $s$
      & Text encoding of \texttt{"the left boundary is hot"} \\
    $y \in \Rb^{d_y}$ & Quantitative data: incomplete/indirect measurements of $u$
      & Sparse sensor measurements of heat on domain \\
    \bottomrule
  \end{tabular}%
  }
\end{table}

In this paper, spaces are written in calligraphic type ($\cU,\cT,\cZ$) and maps between them in
sans serif ($\sfD,\sfE,\sfG^\phi,\sfV,\sfH$). Probability measures are written as $\bP,\bQ$, with a subscript naming the variable they act on; $\bP_{u,\tau}$ is a measure on
$(\cU\times\cT,\cB(\cU\times\cT))$, where $\cB(\cdot)$ is the Borel
sigma-algebra, and $\bP_{u\mid\tau}$ is the corresponding conditional. $p$ is reserved for
densities with respect to Lebesgue measure, as in the score $\nabla_u\log p_t(u|\tau)$.

For a measurable function $g: \cZ \rightarrow \cU$, the pushforward $g_\#\bP_z$ is defined by
$g_\#\bP_z(A)=\bP_z(g^{-1}(A))$ for every measurable set $A$. A sample $u\sim g_\#\bP_z$ is obtained by drawing $z\sim\bP_z$ and returning $u=g(z)$. A bullet marks an argument left free, so $\sfD(\bullet,\tau)_\#\bQ_z$ is the
pushforward of $\bQ_z$ through the decoder with the text embedding held fixed at $\tau$. Table~\ref{tab:notation} summarises some symbols and concepts.

\section{Proof}
\subsection{Theorem \ref{thm:obj_zero}} \label{appen:thm_2_1}
\begin{proof}\label{proof:obj_zero}
    For $\bP_{u,\tau}-a.e$ $u,\tau$, if the first reconstruction error term in (\ref{eq:ae_objective}) is $0$, we have
    $$\widetilde{\sfD}\circ\widetilde{\sfE}=(\Id_{\cU}\times\Id_\cT).$$
    Given the joint distribution $\bP_{u,\tau}$, we use notation ``$\otimes$", such that for any measurable set $A$,
    \begin{align}
        \bP_{u,\tau}(A)=(\bP_\tau\otimes\bP_{u|\tau})(A)=\int_\cT \bP_{u|\tau}(\{u: (u,\tau)\in A\})\md \bP_{\tau}(\tau);
    \end{align}
    where, for $\bP_\tau$ being the $\tau$-marginal of $\bP_{u,\tau}$, $\bP_{u|\tau}$ is the $\bP_\tau-a.e$ unique system of conditional measure on the sample space of variable $u$, and is $\bP_\tau$-measurable. 
    Now, invoking existence and uniqueness of conditional measures, if the second term in (\ref{eq:ae_objective}) is also $0$,
    \begin{align*}
        &\widetilde{\sfE}_\#(\Pb_{u,\tau})= \bQ_z\times\bP_\tau\\
        \implies & (\widetilde{\sfD}\circ\widetilde{\sfE})_\#(\Pb_{u,\tau})=\widetilde{\sfD}_\#(\bQ_z\times\bP_\tau)\\
        \implies & \Pb_{u,\tau}=\widetilde{\sfD}_\#(\bQ_z\times\bP_\tau)\\
        {\implies} & \Pb_{u,\tau}=(\bP_\tau\otimes \sfD(\bullet, \tau)_\#\bQ_z)\\
        \implies & \bP_{u|\tau} = \sfD(\bullet, \tau)_\#\bQ_z, \quad \bP_\tau-a.e.
    \end{align*}
     Going from line three to four, involves 
    \begin{align*}
        \widetilde{\sfD}_\#(\bQ_z\times\bP_\tau)(A)
         =&\int_{\cU\times\cT} \bb1_A(u,\tau)\;\md (\widetilde{\sfD}_\#(\bQ_z\times\bP_\tau))(u, \tau)\\
        =&\int_{\cZ\times\cT} \bb1_A(\sfD(z, \tau), \tau)\;\md (\bQ_z\times\bP_\tau)(z, \tau)\\
        =&\int_\cT\int_\cZ \bb1_A(\sfD(z, \tau), \tau)\;\md\bQ_z(z)\,\md\bP_\tau( \tau)\\
        =&\int_\cT\int_\cU \bb1_A(u, \tau)\;\md(\sfD(\bullet, \tau)_\#\bQ_z)(u)\,\md\bP_\tau( \tau)\\
        =&\int_\cT (\sfD(\bullet, \tau)_\#\bQ_z)(\{u:(u,\tau)\in A\})\,\md\bP_\tau( \tau)\\
        =& (\bP_\tau\otimes\sfD(\bullet, \tau)_\#\bQ_z)(A),\quad \text{for all measurable } A;
    \end{align*}
    pushing $\widetilde{\sfD}$ into the integral, and $\sfD$ out of it, invokes  Theorem 3.6.1 in \citet{bogachev2007measure}, 
    separation of integrals and exchange of order follows from Tonelli's theorem. 
\end{proof}

\subsection{Theorem~\ref{thm:sqWass_bound}} \label{appen:thm_2_2}
\begin{proof}\label{proof:sqWass_bound}
    By squaring the triangle inequality
    \begin{align*}
        \sfW_2^2(\bP_{u,\tau}, \widetilde{\sfD}_\#(\bQ_z\times\bP_\tau))\leq
         2\sfW_2^2(\bP_{u,\tau}, &(\widetilde{\sfD}\circ\widetilde{\sfE})_\#\bP_{u,\tau}) + 2\sfW_2^2((\widetilde{\sfD}\circ\widetilde{\sfE})_\#\bP_{u,\tau}, \widetilde{\sfD}_\#(\bQ_z\times\bP_\tau)).
    \end{align*}
    We obtain a bound on the first term by selecting $\gamma\in\Pi(\bP_{u,\tau},(\widetilde{\sfD}\circ\widetilde{\sfE})_\#\bP_{u,\tau})$ to be of the form $\gamma=(\Id \times(\widetilde{\sfD}\circ\widetilde{\sfE}))_\#\bP_{u,\tau})$, which is a valid coupling,
    thus
    \begin{align*}
        \sfW_2^2(\bP_{u,\tau}, (\widetilde{\sfD}\circ\widetilde{\sfE})_\#\bP_{u,\tau})&=\inf_{\gamma\in\Pi}\int_{(\cU\times\cT)^2}\|(u,\tau)-(u', \tau')\|^2\md\gamma\\
        &\leq \int_{(\cU\times\cT)^2}\|(u,\tau)-(u', \tau')\|^2\md((\Id.\times(\widetilde{\sfD}\circ\widetilde{\sfE}))_\#\bP_{u,\tau})\\
        &=\Eb_{u,\tau\sim\bP_{u,\tau}}\|(u,\tau)-\widetilde{\sfD}\circ\widetilde{\sfE}(u,\tau)\|^2\\
        &=\Eb_{u,\tau\sim\bP_{u,\tau}}[\|u-\sfD\circ\widetilde{\sfE}(u,\tau)\|^2 + \|\tau-\tau\|^2]\\
        &=\Eb_{u,\tau\sim\bP_{u,\tau}}\|u-\sfD\circ\widetilde{\sfE}(u,\tau)\|^2.
    \end{align*}
    Bounding the second term involves selecting a coupling $\gamma_\epsilon\in\Pi(\widetilde{\sfE}_\#\bP_{u,\tau},\bQ_z\times\bP_\tau)$ such that for a chosen $\epsilon>0$
    \begin{align*}
       \int_{(\cZ\times\cT)^2} \|(z,\tau) - (z',\tau')\|^2\md\gamma_\epsilon\leq\sfW_2^2(\widetilde{\sfE}_\#\bP_{u,\tau}, \bQ_z\times\bP_\tau)+\epsilon.
    \end{align*}
    Then, $(\widetilde{\sfD}\times\widetilde{\sfD})_\#\gamma_\epsilon$ is an admissible coupling such that 
    \begin{align*}
        \sfW_2^2((\widetilde{\sfD}\circ\widetilde{\sfE})_\#\bP_{u,\tau}, \widetilde{\sfD}_\#(\bQ_z\times\bP_\tau))&\leq \int_{(\cU\times\cT)^2} \|(u,\tau)-(u',\tau')\|^2\md((\widetilde{\sfD}\times\widetilde{\sfD})_\#\gamma_\epsilon)\\
        & = \int_{(\cZ\times\cT)^2} \|\widetilde{\sfD}(z,\tau)-\widetilde{\sfD}(z',\tau')\|^2\md\gamma_\epsilon\\
        & = \int_{(\cZ\times\cT)^2}( \|{\sfD}(z,\tau)-{\sfD}(z',\tau')\|^2 + \|\tau-\tau'\|^2)\md\gamma_\epsilon\\
        & \leq \int_{(\cZ\times\cT)^2} (\|\sfD\|^2_\Lip\|(z,\tau)-(z',\tau')\|^2 + \|\tau-\tau'\|^2 \\
        & \quad \quad \quad \quad \quad \quad \quad \quad \quad \quad \quad \quad \quad \quad + \|z-z'\|^2)\md\gamma_\epsilon\\
        &= \int_{(\cZ\times\cT)^2} (\|{\sfD}\|_{\Lip}^2+1)\|(z,\tau) - (z',\tau')\|^2\md\gamma_\epsilon \\
        &\leq ({\|\sfD\|_{\Lip}^2+ 1})(\sfW_2^2(\widetilde{\sfE}_\#\bP_{u,\tau}, \bQ_z\times\bP_\tau) + \epsilon).
    \end{align*}
    As this is true for any $\epsilon>0$,
    \begin{align*}
        \sfW_2^2((\widetilde{\sfD}\circ\widetilde{\sfE})_\#\bP_{u,\tau}, \widetilde{\sfD}_\#(\bQ_z\times\bP_\tau))\leq ({\|\sfD\|_{\Lip}^2+ 1})\sfW_2^2(\widetilde{\sfE}_\#\bP_{u,\tau}, \bQ_z\times\bP_\tau).
    \end{align*}
    Substituting the two bounds into the squared triangle inequality gives Theorem~\ref{thm:sqWass_bound}.
\end{proof}

\subsection{Corollary \ref{cor:mmd_bound}}\label{appen:coro_2_3}
\begin{proof}\label{proof:mmd_bound}
    This follows directly from Theorem~\ref{thm:sqWass_bound} and the bound, for distributions on compact supports,
    $$\sfW_2(\bP, \bP')\leq C\sfMMD_{\mathrm{RBF}}^\alpha(\bP, \bP')$$
    with $C>0,\alpha \in(0,1]$~\citep{vayer2023controlling}. Note that the compact-support assumption does not hold for $ \bQ_z=\mathcal N(0,\I)$ exactly. This corollary is included to connect the MMD term in (\ref{eq:ae_objective}) to the Wasserstein bound in (\ref{eq:theorem2}) qualitatively.
\end{proof}

\section{Supervised Method} \label{appen:supervised}

The supervised baseline follows \citet{fukami2021global}, a direct
map $(y,\tau)\mapsto u$ that can take any number of sensors at arbitrary locations as input. The observations are first expanded to a dense field following Voronoi tessellation, where every pixel takes the value of its nearest sensor, with this field concatenated with a binary mask marking sensor locations. Figure~\ref{fig:supervised_baseline} shows ground truth fields and the corresponding Voronoi inputs at $n_{\mathrm{obs}}=\{4,8\}$. We use U-Net as the base architecture in our paper, and for \textbf{Cond Supervised} the text embedding is added through FiLM layers. However, the supervised map is tied to the kind of observation process it was trained under. A different range of sensor numbers or locations, different forward operator, or a different noise level, requires retraining. Generative models keep prior and likelihood separate, so likelihood and the corresponding observation process can be changed after training. 

\begin{figure}[H]
    \begin{center}
    \includegraphics[scale=0.5]{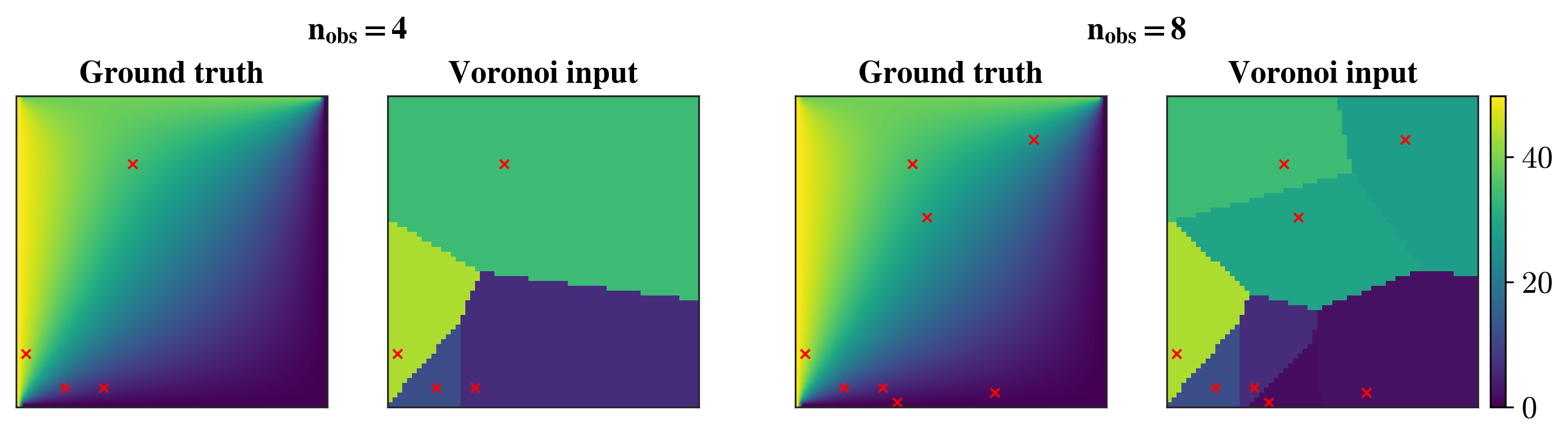}
    \end{center}
    \caption{Ground truth fields and Voronoi inputs to the supervised model at $n_{\mathrm{obs}}=\{4,8\}$.}
    \label{fig:supervised_baseline}
\end{figure}

\section{MMD Implementation in Autoencoders} \label{appen:mmd}

In autoencoders, we use maximum mean discrepancy (MMD) for $\sfd(\bullet,\bullet)$ in (\ref{eq:ae_objective}) to compute the distance in the joint probability space $\cZ\times\cT$. In a batch of size $B$, we have joint samples \smash{$\{(z_i,\tau_i)\}_{i=1}^B\sim\widetilde{\sfE}_\#(\bP_{u,\tau})$} that pair each encoder output $z_i=\sfE(u_i,\tau_i)$ with its own text embedding. We also have reference samples $\{(z_i^{\mathrm{ref}}, \tau_i^{\mathrm{ref}})\}_{i=1}^B$ with $z_i^{\mathrm{ref}}\sim\mathcal{N}(0,\I)=\bQ_z$, and $\tau_i^{\mathrm{ref}} \sim \bP_\tau$ is a text embedding drawn independently from the training pool. We evaluate the MMD between \smash{$\{ (z_i, \tau_i )\}_{i=1}^B$} and $\{(z_i^{\mathrm{ref}}, \tau_i^{\mathrm{ref}})\}_{i=1}^B$, with the kernel being a mixture of five RBF kernels with bandwidths $2^{m}h$, $m=-2,\dots,2$, where $h$ is the mean pairwise squared distance of all samples in the batch. 

In this setup, $z$ and $\tau$ have very different dimensions: $d_\tau=384$ for the text embedding, and $d_z$ ranges from $16$ to $64$ across our experiments, so the normalised text block would account for $86$--$96\%$ of every pairwise distance. Therefore, we downscale normalised $\tau$ by a factor approximately equal to \smash{$\sqrt{d_\tau/d_z}$}, which equalises the two contributions. Figure~\ref{fig:latents} plots latent histograms across all dimensions over the test set in our three experiments, showing that the marginals of $z$ align closely with the ideal $\bQ_z=\mathcal{N}(0,\I)$.

\begin{figure}[H]
    \begin{center}
    \includegraphics[scale=0.45]{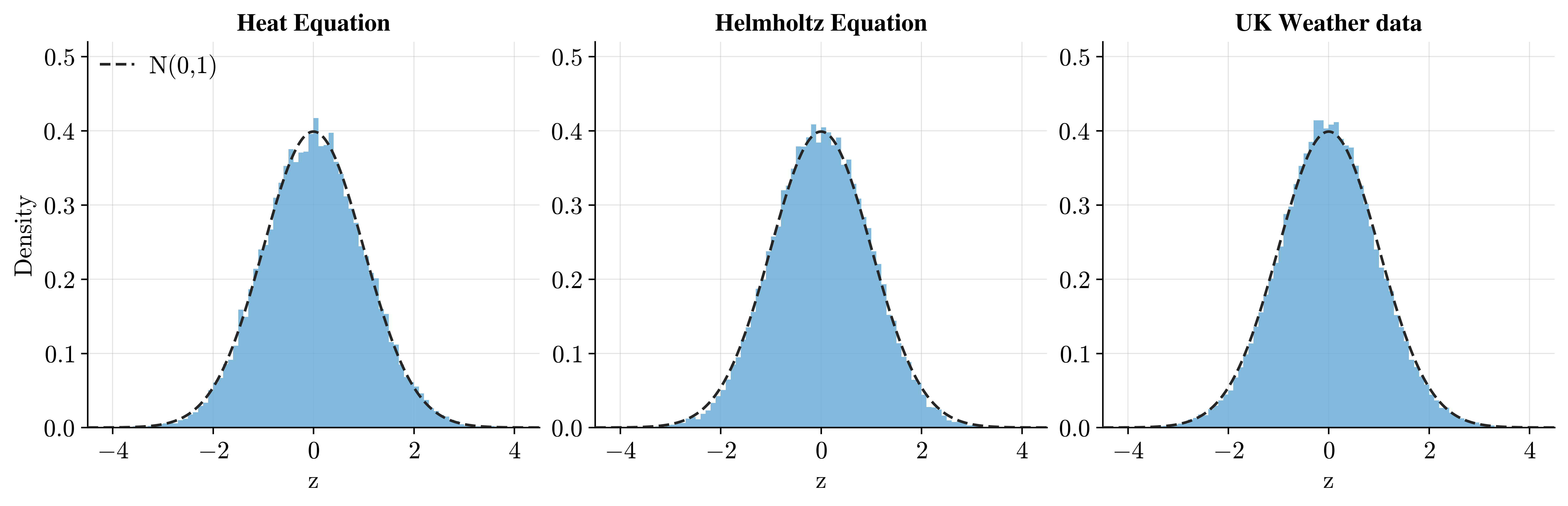}
    \end{center}
    \caption{Latent histograms across all dimensions over the test set in three experiments, compared with the ideal $\bQ_z=\mathcal{N}(0,\I)$. }
    \label{fig:latents}
\end{figure}

\section{Modified DPS Algorithm}\label{appen:DPS}

In Algorithm~\ref{alg:dps_detailed} we provide the modified DPS algorithm, which is an extension of Algorithm~\ref{alg:dps}.

\begin{algorithm}[h]
\caption{Posterior sampling --- diffusion (detailed version of
         Algorithm~\ref{alg:dps})}
\label{alg:dps_detailed}
\begin{algorithmic}[1]
\Require trained $\sfV,\sfG^\phi$; text $s$, observation $y$, operator $\sfH$,
         noise $\sigma_y$; grid $T=t_L>\dots>t_0=\varepsilon$, guidance cap $c$,
         $M$ posterior draws; write $u_i$ for $u_{t_i}$
\State $\tau \gets \sfG^\phi(s)$
\For{$m=1$ \textbf{to} $M$}
  \State $u_L\sim\mathcal{N}(0,\I)$
  \For{$i=L$ \textbf{to} $1$}
    \State $\beta\gets\beta(t_i)$, $\bar\alpha\gets\bar\alpha(t_i)$,
           $\Delta t\gets t_i-t_{i-1}$
    \State $\hat v\gets\sfV(u_i,t_i,\tau)$
      \Comment{conditional prior score}
    \State $\hat u_0\gets\big(u_i+(1-\bar\alpha)\hat v\big)/\sqrt{\bar\alpha}$
      \Comment{Tweedie estimate, eq.~(\ref{eq:tweedie})}
    \State $g\gets -\tfrac{1}{2\sigma_y^{2}}\nabla_{u_i}\|y-\sfH\hat u_0\|_2^{2}$
      \Comment{likelihood score, eq.~(\ref{eq:dps_grad})}
    \State $\delta\gets\beta\,\Delta t\,g$;\quad
           $\delta\gets\delta\cdot\min(1,c/\|\delta\|)$
      \Comment{guidance, norm capped at $c$}
    \State $\Delta \bar w\sim\mathcal{N}(0,\I)$
    \State $u_{i-1}\gets u_i+\beta\big(\tfrac12 u_i+\hat v\big)\Delta t
           +\delta+\sqrt{\beta\Delta t}\,\Delta \bar w$
      \Comment{Euler--Maruyama on eq.~(\ref{eq:rev_sde})}
  \EndFor
  \State $u^{(m)} \leftarrow \bigl(u_0 + (1-\bar\alpha_0)\hat v_0\bigr)/\sqrt{\bar\alpha_0}$ \Comment{$\hat v_0 \leftarrow \mathsf{V}(u_0,t_0,\tau)$, \; $\bar\alpha_0 = \bar\alpha(t_0)$}
\EndFor
\State \Return $\{u^{(m)}\}_{m=1}^{M}$
\end{algorithmic}
\end{algorithm}

\section{Performance Metrics} \label{appen:metrics}

This section defines the performance metrics reported in our experiments.

\paragraph{NMSE.} With $u^{\dagger}$ the ground truth field and $\bar u$ the posterior mean over $M$ draws, NMSE is computed as ${\|u^{\dagger}-\bar u\|_2^2}/{\|u^{\dagger}\|_2^2}$, so that it is comparable across systems.

\paragraph{Coverage.} For each system, we obtain pixel-wise $M$ posterior samples, and report the percentage of pixels whose ground truth falls inside the central $68\%(16\text{th}-84\text{th percentiles})$ and $95\%(2.5\text{th}-97.5\text{th percentiles})$ posterior intervals, denoting as $\% C_{68}$ and $\% C_{95}$. For a calibrated posterior, the target values should be $68\%$ and $95\%$. Coverage above the target value indicates an over-dispersed, under-confident posterior; below it, an over-confident one. In the tables, we also report in parentheses the average absolute difference of the per-system coverage from its target values, which measures how far test fields are from being calibrated, which should be $0$ when every field is exactly calibrated, and smaller is better. 

\paragraph{Calibration Curve.} The calibration curves in Figure~\ref{fig:calibration} illustrate posterior calibration at each coverage level. For every pixel in every test field, we record the percentile rank of the ground truth lying within that pixel's posterior ensemble, i.e. the fraction of $M$ posterior samples at that pixel that fall below the ground truth value, a number in $[0,1]$. At each level $q$ on the horizontal axis, we plot the fraction of pixels whose ground truth falls inside the central $q$ interval of its posterior -- equivalently, whose rank lies in \smash{$[\tfrac{1-q}{2},\tfrac{1+q}{2}]$} -- pooled over the test fields. The diagonal represents perfect calibration, and a curve below the diagonal means the posterior is too narrow and overconfident, while a curve above the diagonal means posterior is too wide and underconfident. Coverages $\% C_{68}$ and $\% C_{95}$ reported in the tables are the values of this curve at $q=0.68$ and $q=0.95$.

\paragraph{Continuous ranked probability score (CRPS). } CRPS is a proper scoring rule that measures how well a full predictive distribution matches the ground truth, by comparing the empirical cumulative distribution function (CDF) of the predicted values with the step-function CDF of the true value. CRPS is a generalisation of the mean absolute error (MAE) to probabilistic forecasts. Following \citet{gneiting2007strictly}, CRPS is estimated from $M$ posterior samples as
\begin{equation}
    \label{eq:crps}
    \mathrm{CRPS} = \frac{1}{M}\sum_{m=1}^{M} \big\|\tilde u^{(m)}-\tilde u^{\dagger}\big\|_{1} - \frac{1}{2M(M-1)}\sum_{m,n=1}^{M} \big\|\tilde u^{(m)}-\tilde u^{(n)}\big\|_{1},
\end{equation}
where $\tilde u^{(m)}$ is one normalised posterior sample and $\tilde u^{\dagger}$ is the normalised ground truth field. Normalised field is used here with $\tilde u = u / \mathrm{RMS}(u^{\dagger})$ as the magnitude of fields varies greatly across systems. Lower CRPS indicates a more accurate and sharper (confident/narrow distributions) posterior ensemble, while higher CRPS indicates ensembles that are either biased or overly diffuse. However, CRPS penalises over-confidence only mildly: as the posterior spread shrinks, the score tends to the MAE of samples, so a posterior whose samples are close to the ground truth attains a low CRPS even if its uncertainty is too narrow to cover the truth.

\paragraph{Train \& Predict Time. } Reported training time is the total wall-clock time of a run, and each model is trained for enough time for the validation loss to stop improving. Note that in our tables, unconditional models take longer to train because their validation loss converges slower than conditional models. Prediction time is the mean running time to perform inference on a single system, for AE it includes NUTS warmup steps.

\section{Additional Experimental Details and Results} \label{appen:results}

All experiments are run on a single RTX4090 GPU. This appendix provides more details and evaluations for each of the three experiments including:

\begin{enumerate}
    \item \textbf{Problem Setup:} the governing equations, the solver or data source, further example fields, and details of synthetic text generation;

    \item \textbf{Experimental Setup:} the choice of
    $\mathrm{score}_{\mathrm{field}}$ at the fine-tuning stage, network architectures, and hyperparameters used for training and inference;

    \item \textbf{Additional Results:} more evaluations including samples from the learned priors, per-system coverages, posterior histograms, ablation studies over $n_{obs}$ and how text contributes at different $n_{obs}$.
\end{enumerate}

\subsection{Steady-State Heat Equation}

\subsubsection{Problem Setup} \label{appen:heat_setup}

\begin{figure}[t!]
    \begin{center}
    \includegraphics[scale=0.5]{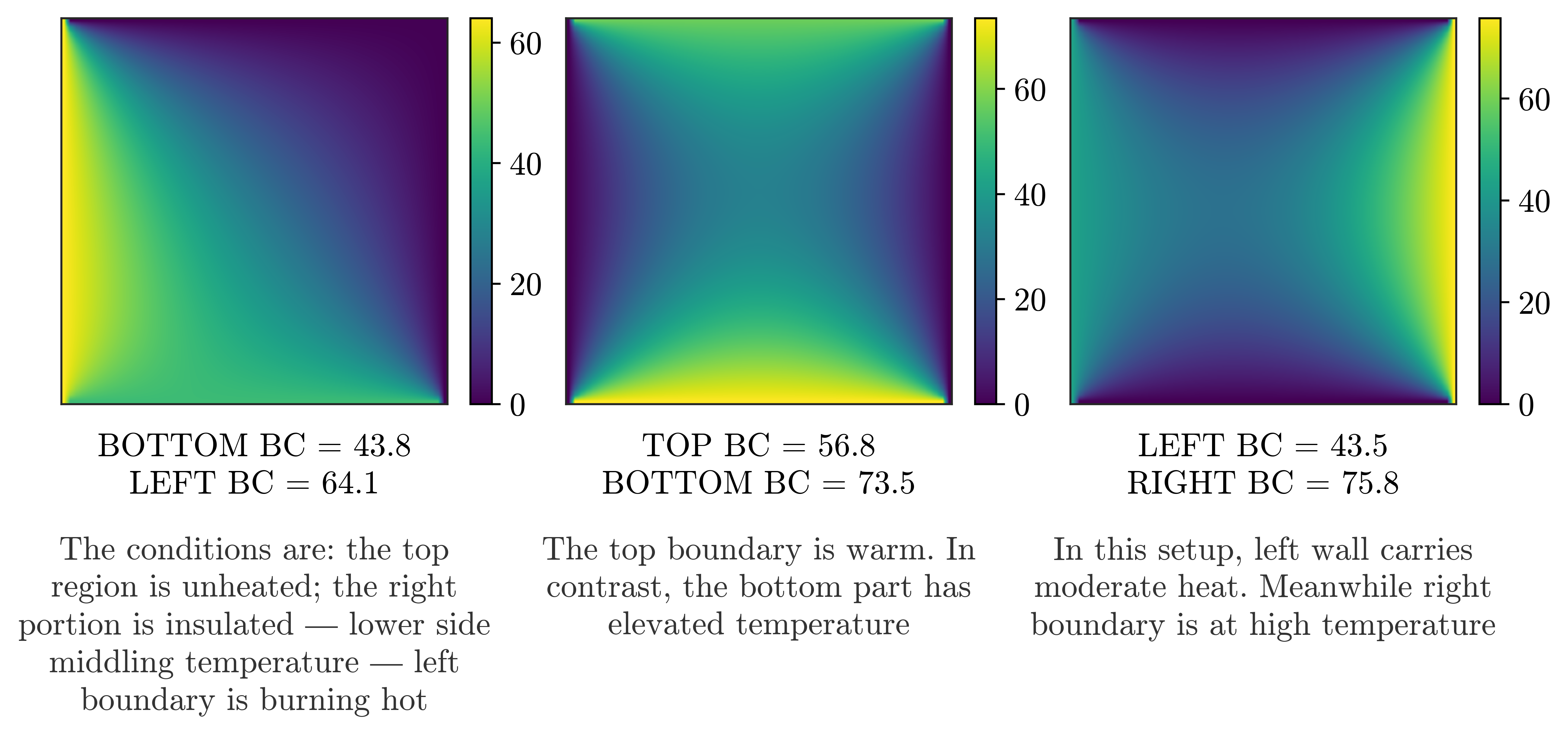}
    \end{center}
    \caption{Examples of heat fields with corresponding BC values and text descriptions.}
    \label{fig:heat_examples}
\end{figure}

The PDE used to generate the dataset is a steady-state heat equation on a unit square with Dirichlet boundary conditions,
\begin{equation}\label{eq:heat}
    \begin{aligned}
    \Delta u(x) &= 1,    &&\quad x\in\Omega=(0,1)^2,\\
    u(x) &= g(x),        &&\quad x\in\partial\Omega,
    \end{aligned}
\end{equation}
where $g$ is piecewise constant on the four sides: a random two of the four sides are held at $g=0$ and the remaining two at random constant temperatures drawn from $\text{Unif}[10,80]$. We use a finite difference solver on a uniform $64\times64$ grid.

To generate synthetic texts from given fields, temperatures are split into three overlapping bands: \textit{hot} $[54,80]$, \textit{warm} $[30,60]$ and \textit{cold} $[10,36]$. The bands overlap, and a temperature falling in two of them is described by a random one of the two. In this first and simplest experiment we did not query an API programmatically, instead we prompt \textit{Claude Sonnet 5} directly through a coding agent to write the descriptions. The prompt requires each description to mention boundaries carrying non-zero temperatures, and express it using varied synonyms of \textit{hot}, \textit{warm} and \textit{cold} with different sentence structures, the two unheated boundaries are mentioned in some descriptions but not all. A few examples are provided in Figure \ref{fig:heat_examples}.

\begin{figure}[t!]
    \begin{center}
    \includegraphics[scale=0.12]{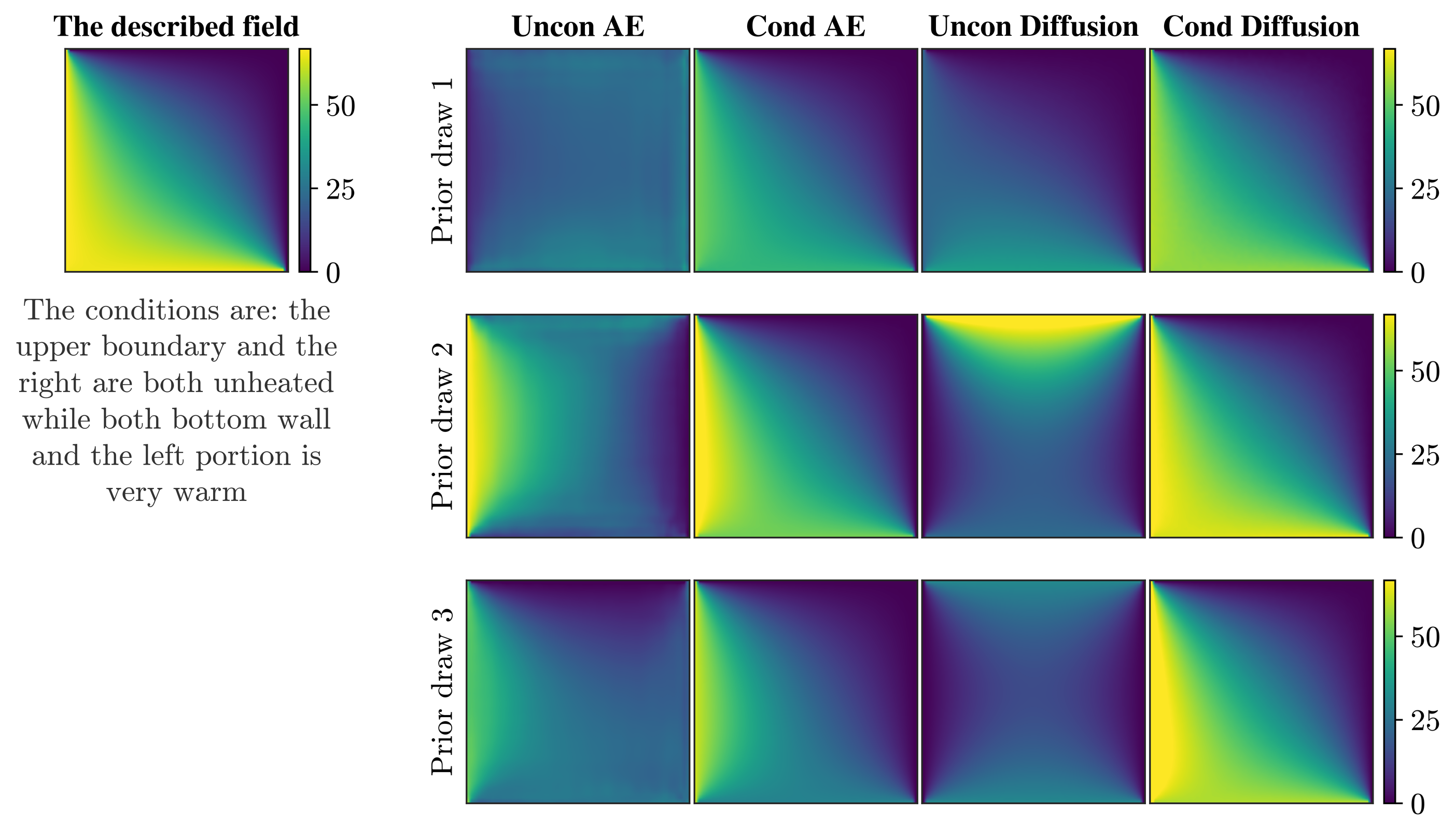}
    \end{center}
    \caption{\textbf{Prior samples generated given text.} The leftmost panel shows the ground-truth field together with the corresponding text. Each of the four rows shows three prior samples drawn using four models, the text is supplied to conditional models and not provided to unconditional ones.}
    \label{fig:heat_prior}
\end{figure}

\begin{figure}[t!]
    \begin{center}
    \includegraphics[scale=0.48]{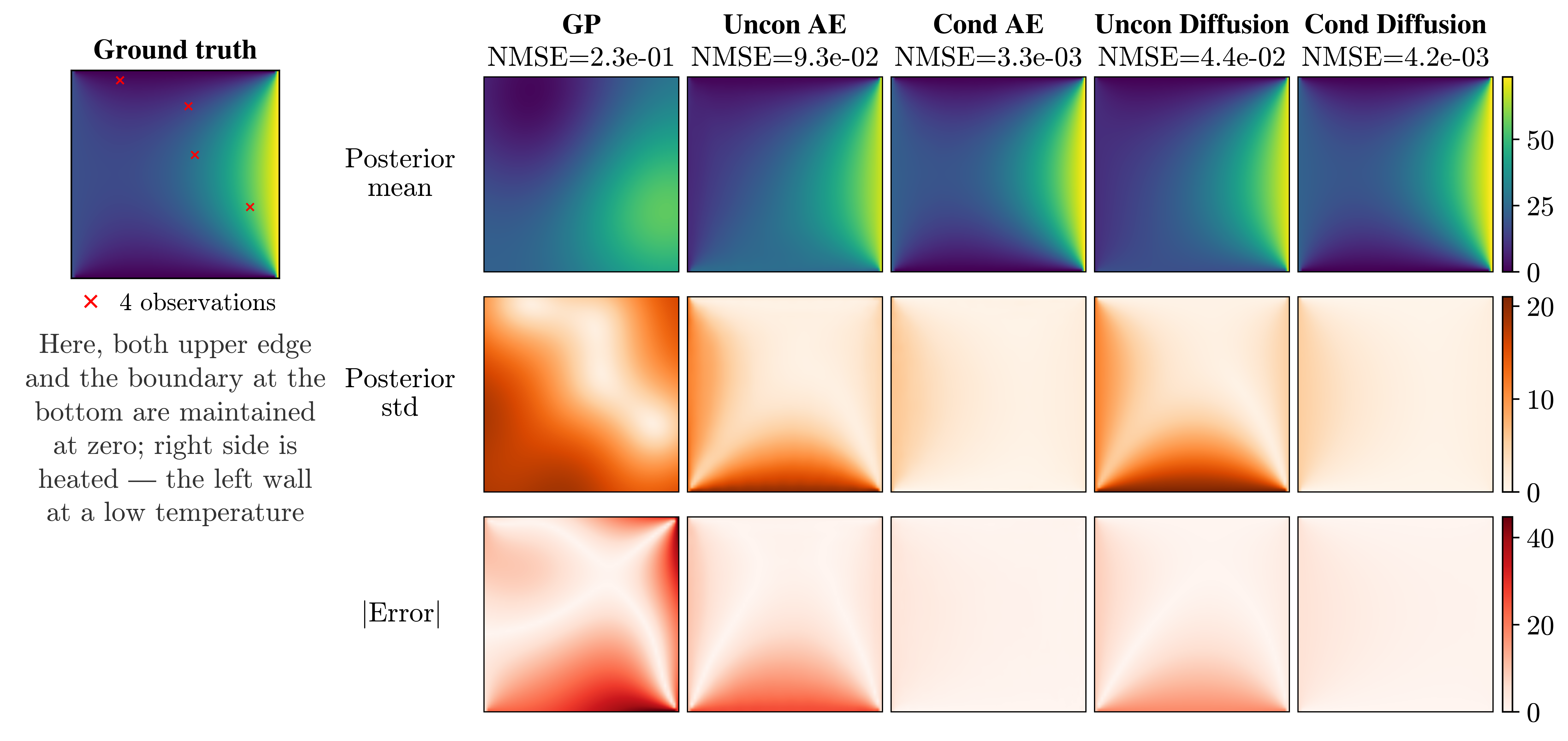}
    \end{center}
    \caption{\textbf{Posterior reconstructions of one heat equation test system at $\mathbf{n_{\mathrm{\textbf{obs}}}=4}$ (same system as in Figure \ref{fig:heat_posterior_histograms}).} Each row indicates posterior mean, posterior standard deviation, $|$error$|$ of the mean respectively. NMSE of posterior mean of each model is reported as well.}
    \label{fig:heat_single_system}
\end{figure}

\subsubsection{Experimental Setup}

\begin{table}[t]
  \centering
  \caption{\textbf{Full inference results on the heat equation test set with $\mathbf{n_{\textbf{obs}}}$ varying from 2 to 10.} This table is the extended version of Table \ref{tab:heat_inference}, and also presents the diffusion inference results obtained via the standard DPS scheme (last two rows). Continuous ranked probability score (CRPS) is also reported, with details in Appendix \ref{appen:metrics}.}
  \label{tab:heat_inference_full}
  \setlength{\tabcolsep}{3pt}
  \resizebox{\textwidth}{!}{%
  \begin{tabular}{@{}l@{}}
  \begin{tabular}{l |
    S[table-format=3.2]@{\,$\pm$\,}S[table-format=3.2]
    S[table-format=2.2]@{\,$\pm$\,}S[table-format=2.2]
    S[table-format=2.1]@{\,}c
    S[table-format=2.1]@{\,}c |
    S[table-format=3.2]@{\,$\pm$\,}S[table-format=3.2]
    S[table-format=2.2]@{\,$\pm$\,}S[table-format=2.2]
    S[table-format=2.1]@{\,}c
    S[table-format=2.1]@{\,}c |
    S[table-format=3.2]@{\,$\pm$\,}S[table-format=3.2]
    S[table-format=2.2]@{\,$\pm$\,}S[table-format=2.2]
    S[table-format=2.1]@{\,}c
    S[table-format=2.1]@{\,}c}
    \toprule
    & \multicolumn{8}{c|}{$n_{\text{obs}}=2$}
    & \multicolumn{8}{c|}{$n_{\text{obs}}=4$}
    & \multicolumn{8}{c}{$n_{\text{obs}}=6$} \\
    \cmidrule(lr){2-9} \cmidrule(lr){10-17} \cmidrule(lr){18-25}
    \textbf{Model}
    & \multicolumn{2}{c}{\makecell{NMSE $\downarrow$\\ $(\times 10^{-3})$}}
      & \multicolumn{2}{c}{\makecell{CRPS $\downarrow$\\ $(\times 10^{-2})$}}
      & \multicolumn{2}{c}{\makecell{$\%C_{68}$ }}
      & \multicolumn{2}{c|}{\makecell{$\%C_{95}$ }}
    & \multicolumn{2}{c}{\makecell{NMSE $\downarrow$\\ $(\times 10^{-3})$}}
      & \multicolumn{2}{c}{\makecell{CRPS $\downarrow$\\ $(\times 10^{-2})$}}
      & \multicolumn{2}{c}{\makecell{$\%C_{68}$ }}
      & \multicolumn{2}{c|}{\makecell{$\%C_{95}$ }}
    & \multicolumn{2}{c}{\makecell{NMSE $\downarrow$\\ $(\times 10^{-3})$}}
      & \multicolumn{2}{c}{\makecell{CRPS $\downarrow$\\ $(\times 10^{-2})$}}
      & \multicolumn{2}{c}{\makecell{$\%C_{68}$ }}
      & \multicolumn{2}{c}{\makecell{$\%C_{95}$ }} \\
    \midrule
    GP
      & 315.79 & 176.38 & 30.68 &  7.54 & 73.1 & (13.0) & 95.6 & ( 5.0)
      & 178.87 & 103.15 & 21.68 &  6.17 & 71.7 & (14.8) & 93.6 & ( 5.5)
      & 135.62 &  82.94 & 18.19 &  5.63 & 68.3 & (14.8) & 91.8 & ( 6.4) \\
    Uncon Supervised
      & 208.16 & 292.41 & \multicolumn{2}{c}{--} & \multicolumn{2}{c}{--} & \multicolumn{2}{c|}{--}
      &  54.66 & 203.76 & \multicolumn{2}{c}{--} & \multicolumn{2}{c}{--} & \multicolumn{2}{c|}{--}
      &  12.99 &  45.38 & \multicolumn{2}{c}{--} & \multicolumn{2}{c}{--} & \multicolumn{2}{c}{--} \\
    Cond Supervised
      &  17.95 &  34.60 & \multicolumn{2}{c}{--} & \multicolumn{2}{c}{--} & \multicolumn{2}{c|}{--}
      &   9.52 &  17.02 & \multicolumn{2}{c}{--} & \multicolumn{2}{c}{--} & \multicolumn{2}{c|}{--}
      &   5.82 &  12.28 & \multicolumn{2}{c}{--} & \multicolumn{2}{c}{--} & \multicolumn{2}{c}{--} \\
    Uncon AE
      & 248.75 & 377.03 & 21.20 & 17.44 & 47.6 & (26.5) & 80.9 & (17.8)
      &  46.45 & 132.21 &  6.63 &  8.83 & 61.5 & (22.4) & 88.7 & (12.0)
      &   9.75 &  29.25 &  3.14 &  3.76 & 65.0 & (19.3) & 92.0 & ( 9.0) \\
    Cond AE
      &  15.42 &  24.84 &  4.80 &  3.76 & 63.2 & (24.5) & 88.3 & ( 8.6)
      &   6.13 &  11.70 &  2.89 &  2.52 & 61.6 & (23.4) & 88.2 & ( 8.2)
      &   3.14 &   8.68 &  1.99 &  2.00 & 62.0 & (22.5) & 87.8 & ( 8.4) \\
    Uncon Diffusion
      & 239.28 & 291.64 & 19.93 & 13.37 & 62.0 & (22.5) & 90.1 & (10.7)
      &  37.71 & 107.23 &  5.02 &  6.40 & 64.9 & (25.2) & 89.2 & (12.2)
      &   7.36 &  22.12 &  2.30 &  2.92 & 59.0 & (27.0) & 86.5 & (14.8) \\
    Cond Diffusion
      &  15.79 &  37.39 &  4.69 &  4.47 & 60.3 & (26.4) & 90.4 & (11.7)
      &   7.12 &  17.64 &  2.88 &  3.02 & 62.3 & (25.2) & 89.8 & (12.0)
      &   3.11 &   8.93 &  1.91 &  2.10 & 63.9 & (24.3) & 89.5 & (12.2) \\
    Uncon Diffusion (DPS)
      & 235.67 & 310.17 & 19.50 & 13.23 & 59.0 & (21.4) & 80.9 & (16.5)
      &  39.47 & 118.32 &  5.34 &  6.72 & 37.4 & (35.8) & 65.2 & (31.3)
      &   7.18 &  20.47 &  2.55 &  2.88 & 29.6 & (41.1) & 54.8 & (41.1) \\
    Cond Diffusion (DPS)
      &  16.46 &  38.82 &  5.07 &  4.91 & 43.0 & (31.9) & 71.4 & (25.2)
      &   7.05 &  18.35 &  3.13 &  3.33 & 37.3 & (35.5) & 67.8 & (29.2)
      &   3.23 &   9.87 &  2.14 &  2.34 & 39.2 & (32.9) & 67.8 & (28.6) \\
    \midrule
  \end{tabular}
  \\
  \begin{tabular}{l |
    S[table-format=3.2]@{\,$\pm$\,}S[table-format=3.2]
    S[table-format=2.2]@{\,$\pm$\,}S[table-format=2.2]
    S[table-format=2.1]@{\,}c
    S[table-format=2.1]@{\,}c |
    S[table-format=3.2]@{\,$\pm$\,}S[table-format=3.2]
    S[table-format=2.2]@{\,$\pm$\,}S[table-format=2.2]
    S[table-format=2.1]@{\,}c
    S[table-format=2.1]@{\,}c}
    & \multicolumn{8}{c|}{$n_{\text{obs}}=8$}
    & \multicolumn{8}{c}{$n_{\text{obs}}=10$} \\
    \cmidrule(lr){2-9} \cmidrule(lr){10-17}
    \textbf{Model}
    & \multicolumn{2}{c}{\makecell{NMSE $\downarrow$\\ $(\times 10^{-3})$}}
      & \multicolumn{2}{c}{\makecell{CRPS $\downarrow$\\ $(\times 10^{-2})$}}
      & \multicolumn{2}{c}{\makecell{$\%C_{68}$ }}
      & \multicolumn{2}{c|}{\makecell{$\%C_{95}$ }}
    & \multicolumn{2}{c}{\makecell{NMSE $\downarrow$\\ $(\times 10^{-3})$}}
      & \multicolumn{2}{c}{\makecell{CRPS $\downarrow$\\ $(\times 10^{-2})$}}
      & \multicolumn{2}{c}{\makecell{$\%C_{68}$ }}
      & \multicolumn{2}{c}{\makecell{$\%C_{95}$ }} \\
    \midrule
    GP
      & 97.03 & 59.79 & 14.75 & 4.77 & 64.6 & (13.8) & 89.7 & ( 7.5)
      & 78.78 & 57.13 & 12.72 & 4.44 & 63.2 & (13.5) & 88.8 & ( 7.8) \\
    Uncon Supervised
      &  4.87 & 12.89 & \multicolumn{2}{c}{--} & \multicolumn{2}{c}{--} & \multicolumn{2}{c|}{--}
      &  2.87 &  5.10 & \multicolumn{2}{c}{--} & \multicolumn{2}{c}{--} & \multicolumn{2}{c}{--} \\
    Cond Supervised
      &  3.54 &  6.18 & \multicolumn{2}{c}{--} & \multicolumn{2}{c}{--} & \multicolumn{2}{c|}{--}
      &  2.90 &  5.41 & \multicolumn{2}{c}{--} & \multicolumn{2}{c}{--} & \multicolumn{2}{c}{--} \\
    Uncon AE
      &  4.85 & 15.95 &  2.32 & 2.94 & 67.6 & (17.8) & 92.6 & ( 8.5)
      &  4.25 & 14.05 &  2.10 & 2.83 & 67.7 & (17.7) & 92.8 & ( 8.3) \\
    Cond AE
      &  1.40 &  3.01 &  1.46 & 1.16 & 62.5 & (20.8) & 88.1 & ( 7.9)
      &  1.03 &  2.20 &  1.28 & 0.99 & 60.4 & (21.3) & 87.1 & ( 8.7) \\
    Uncon Diffusion
      &  2.45 &  8.95 &  1.44 & 1.62 & 59.8 & (26.0) & 87.0 & (14.3)
      &  1.09 &  2.77 &  1.17 & 1.06 & 58.9 & (26.7) & 85.4 & (15.4) \\
    Cond Diffusion
      &  1.44 &  3.54 &  1.40 & 1.29 & 64.3 & (22.7) & 90.3 & (11.2)
      &  1.16 &  3.31 &  1.24 & 1.23 & 64.5 & (23.0) & 90.1 & (11.1) \\
    Uncon Diffusion (DPS)
      &  2.74 &  8.71 &  1.64 & 1.57 & 31.5 & (39.3) & 55.9 & (40.1)
      &  1.40 &  4.67 &  1.40 & 1.25 & 30.5 & (39.0) & 54.8 & (41.1) \\
    Cond Diffusion (DPS)
      &  1.43 &  3.44 &  1.59 & 1.41 & 40.2 & (31.9) & 69.8 & (26.8)
      &  1.14 &  3.05 &  1.41 & 1.29 & 40.5 & (31.3) & 70.2 & (26.2) \\
    \bottomrule
  \end{tabular}
  \end{tabular}%
  }
\end{table}

\begin{figure}[t]
    \begin{center}
    \includegraphics[scale=0.405]{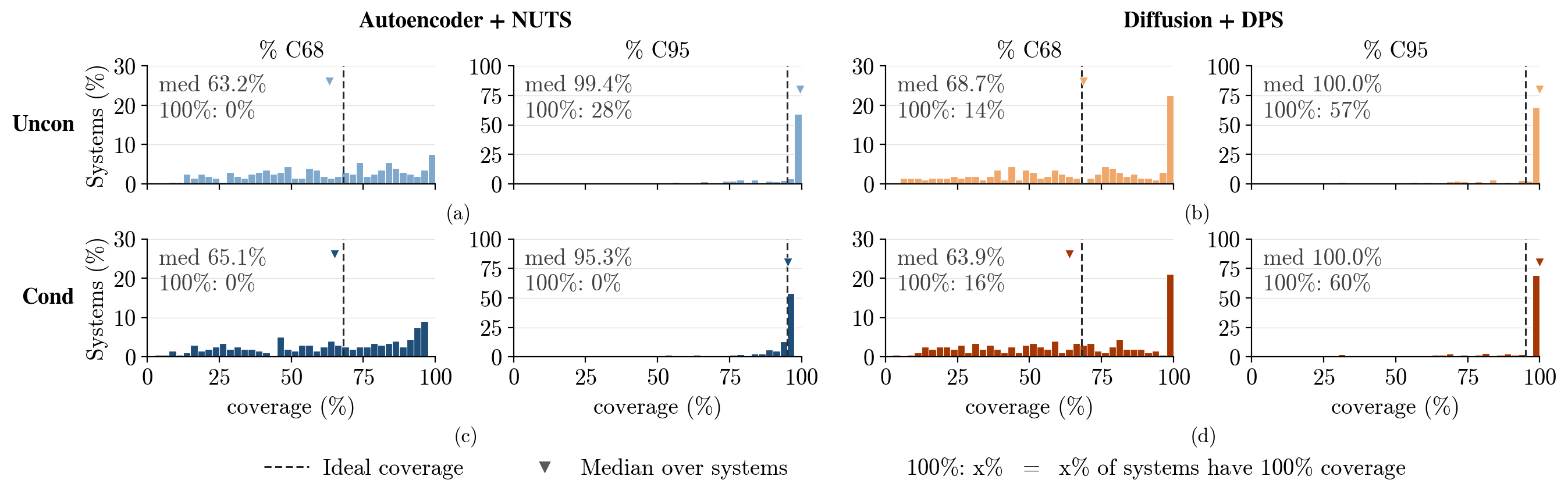}
    \end{center}
    \caption{\textbf{Per-system coverage over 200 steady-state heat equation test systems at $\mathbf{n_{\textbf{obs}}=4}$}. Each panel shows the distribution of coverages across all systems, i.e. the percentage of systems falling in each coverage bin. }
    \label{fig:heat_coverage}
\end{figure}

\paragraph{Fine-tuning.} $\text{score}_\text{field}$ in (\ref{eq:fine-tune}) measures similarity between numerical fields, in this experiment we use a Gaussian kernel on the min-max normalised fields $\tilde u$,
\begin{equation}
    \label{eq:heat_score}
    \mathrm{score}_{\mathrm{field}}(\tilde u,\tilde u')
    =\exp\Bigl(-\frac{\|\tilde u-\tilde u'\|_2^2}{T}\Bigr),
\end{equation}
which returns a value in $[0,1]$. $T=527.16$ controls how quickly the similarity decays with field distance, which is set so that $\mathrm{score}_{\text{field}} = 0.5$
at the median of $\lVert \tilde u - \tilde u'\rVert_2$ over random pairs of training fields; this makes sure that similarity targets are spread across $[0,1]$ rather than concentrated near either end. 3 epochs of fine-tuning on \texttt{all-MiniLM-L6-v2} are performed on $50,000$ training pairs with learning rate $5\times 10^{-5}$.

\paragraph{Train.} Supervised baselines use a U-Net with $64$ base channels, \textbf{Cond Supervised} is trained for $1k$ epochs, while \textbf{Uncon Supervised} for $2k$ epochs with the Adam optimiser \citep{kingma2014adam}, with a learning rate of $10^{-3}$ and batch size of $32$; AE uses FNO with $2$ layers (each for $\sfE$ and $\sfD$), width
$64$ and $12$ modes, latent dimension is $16$, \textbf{Cond AE} is trained for $3k$ epochs and \textbf{Uncon AE} for $5k$, with a learning rate of $10^{-3}$ and batch size $200$. We modify the original FNO structure into an autoencoder architecture, the encoder first applies FNO blocks to the field and then averages each channel over the spatial grid, and then a MLP maps the resulting vector to the latent; the decoder broadcasts the
latent back to a constant field over the grid before its own FNO blocks; the score network is a U-Net with $32$ base channels and a $64$-dimensional conditioning vector, \textbf{Cond Diffusion} is trained for $3k$ epochs and \textbf{Uncon Diffusion} for $5k$, with learning rate of $2 \times 10^{-4}$ and batch size of $32$.

\paragraph{Inference.} During inference, different numbers of $n_\text{obs}$ at random locations for each test system are drawn, with adding Gaussian noise of standard deviation $\sigma_y = 1$. \textbf{GP} regression uses RBF kernel with maximum marginal likelihood estimation of the hyperparameters. AE posteriors are sampled with NUTS in the latent space, using a single chain with $M_o=100$ warmup steps and $M=200$ retained draws. The maximum tree depth is $5$ for \textbf{Cond AE} and $8$ for \textbf{Uncon AE}, as it is harder for \textbf{Uncon AE} to converge. Diffusion posteriors are drawn by integrating the guided reverse SDE over $L=800$ uniform steps and the guidance clip $c=1.0$. The guidance clip is chosen so that around $10 \%$ of initial reverse diffusion steps are clipped, leaving the remainder unaffected and preserving the relative scale of prior and likelihood scores.

\begin{figure}[t]
    \begin{center}
    \includegraphics[scale=0.48]{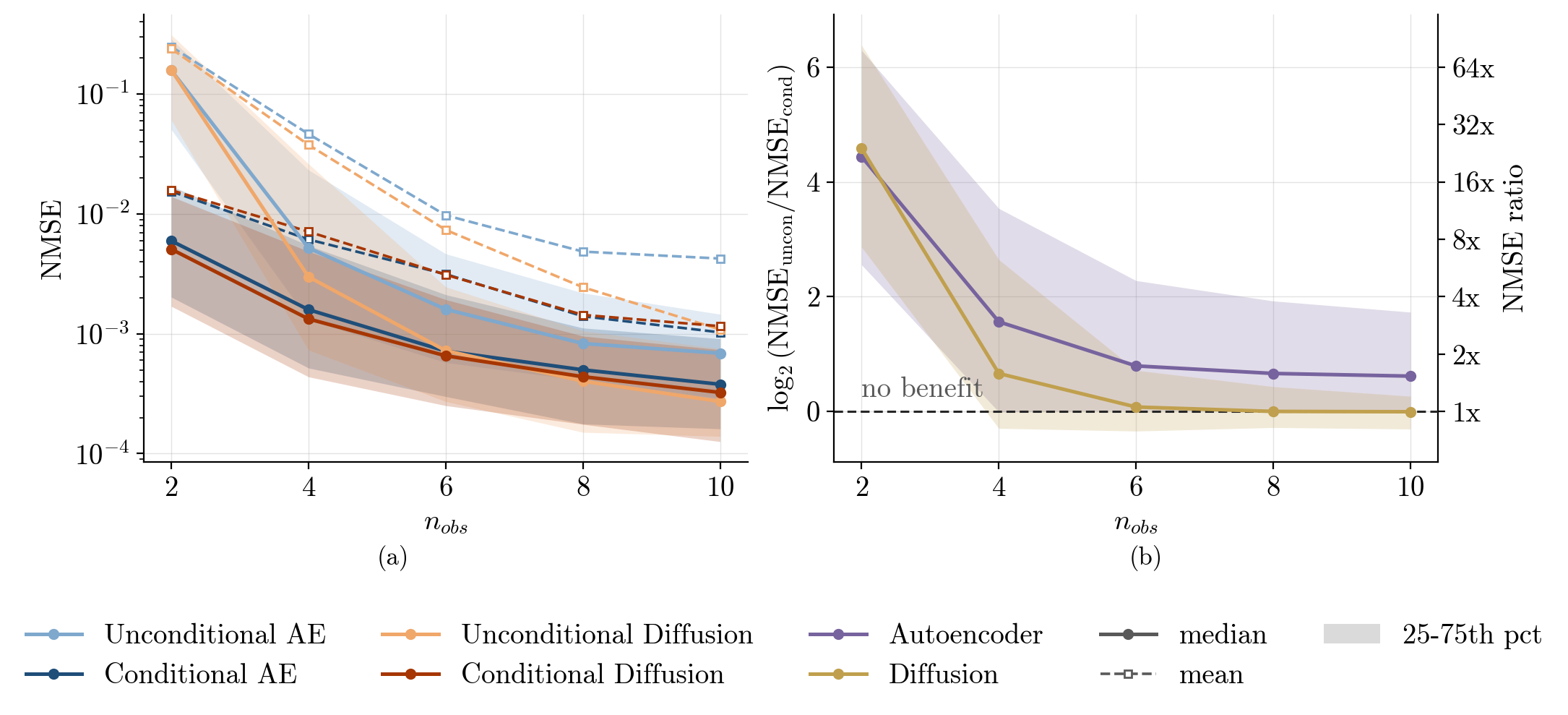}
    \end{center}
    \caption{\textbf{NMSE performance and text gain at different $\mathbf{n_{obs}}$ for  steady-state heat equation.} (a) NMSE across 200 test systems: median (solid), mean (dashed), and 25th–75th percentile bands, see Table~\ref{tab:heat_nmse_quantiles} for exact values. (b) Per-system relative gain from text conditioning, $\log_2(  \mathrm{NMSE}_{\mathrm{uncon}}/\mathrm{NMSE}_{\mathrm{cond}}) $, with median and interquartile bands. Zero values indicate that the text adds nothing beyond the observations.}
    \label{fig:heat_text_pair}
\end{figure}

\begin{table}[t]
  \centering
    \caption{\textbf{Quartiles of per-system NMSE across 200 heat-equation test systems.} $Q_{25}$, median and $Q_{75}$ over the same 200 test systems across different $n_{obs}$ are shown.}
  \label{tab:heat_nmse_quantiles}
  \setlength{\tabcolsep}{4pt}
  \resizebox{\textwidth}{!}{%
  \begin{tabular}{l | ccc | ccc | ccc | ccc | ccc}
    \toprule
    & \multicolumn{3}{c|}{$n_{\text{obs}}=2$}
    & \multicolumn{3}{c|}{$n_{\text{obs}}=4$}
    & \multicolumn{3}{c|}{$n_{\text{obs}}=6$}
    & \multicolumn{3}{c|}{$n_{\text{obs}}=8$}
    & \multicolumn{3}{c}{$n_{\text{obs}}=10$} \\
    \cmidrule(lr){2-4} \cmidrule(lr){5-7} \cmidrule(lr){8-10} \cmidrule(lr){11-13} \cmidrule(lr){14-16}
    \makecell[l]{\textbf{Model}\\ NMSE $(\times 10^{-3})$}
      & $Q_{25}$ & Median & $Q_{75}$
      & $Q_{25}$ & Median & $Q_{75}$
      & $Q_{25}$ & Median & $Q_{75}$
      & $Q_{25}$ & Median & $Q_{75}$
      & $Q_{25}$ & Median & $Q_{75}$ \\
    \midrule
    Uncon AE 
      & 50.64 & 158.02 & 287.87
      & 1.31 & 5.20 & 23.29
      & 0.58 & 1.60 & 4.65
      & 0.41 & 0.83 & 2.19
      & 0.29 & 0.69 & 1.46 \\
    Cond AE 
      & 2.02 & 5.96 & 17.10
      & 0.52 & 1.59 & 5.45
      & 0.30 & 0.72 & 2.12
      & 0.18 & 0.50 & 1.12
      & 0.16 & 0.38 & 0.91 \\
    Uncon Diffusion
      & 60.89 & 159.12 & 312.58
      & 0.73 & 2.99 & 26.18
      & 0.27 & 0.72 & 2.46
      & 0.15 & 0.40 & 1.06
      & 0.14 & 0.27 & 0.75 \\
    Cond Diffusion
      & 1.70 & 5.08 & 13.97
      & 0.44 & 1.33 & 4.87
      & 0.25 & 0.66 & 1.93
      & 0.18 & 0.44 & 0.96
      & 0.13 & 0.32 & 0.74 \\
    \bottomrule
  \end{tabular}%
  }
\end{table}

\begin{table}[t]
  \centering
  \caption{\textbf{Effective sample size (ESS) of NUTS on the heat equation, averaged over the
  200 test systems.} This table reports $\text{ESS}_{\min}$ and $\text{ESS}_{\text{mean}}$, the minimum and the mean over the $16$-dimensional latent vector of $M = 200$ posterior samples.}
  \label{tab:heat_nuts_ess}
  \setlength{\tabcolsep}{4pt}
  \resizebox{\textwidth}{!}{%
  \begin{tabular}{l |
    S[table-format=3.1] S[table-format=3.1] |
    S[table-format=3.1] S[table-format=3.1] |
    S[table-format=3.1] S[table-format=3.1] |
    S[table-format=3.1] S[table-format=3.1] |
    S[table-format=3.1] S[table-format=3.1]}
    \toprule
    & \multicolumn{2}{c|}{$n_{\text{obs}}=2$}
    & \multicolumn{2}{c|}{$n_{\text{obs}}=4$}
    & \multicolumn{2}{c|}{$n_{\text{obs}}=6$}
    & \multicolumn{2}{c|}{$n_{\text{obs}}=8$}
    & \multicolumn{2}{c}{$n_{\text{obs}}=10$} \\
    \cmidrule(lr){2-3} \cmidrule(lr){4-5} \cmidrule(lr){6-7} \cmidrule(lr){8-9} \cmidrule(lr){10-11}
    \textbf{Model}
    & {$\text{ESS}_{\min}$} & {$\text{ESS}_{\text{mean}}$}
    & {$\text{ESS}_{\min}$} & {$\text{ESS}_{\text{mean}}$}
    & {$\text{ESS}_{\min}$} & {$\text{ESS}_{\text{mean}}$}
    & {$\text{ESS}_{\min}$} & {$\text{ESS}_{\text{mean}}$}
    & {$\text{ESS}_{\min}$} & {$\text{ESS}_{\text{mean}}$} \\
    \midrule
    Uncon AE &  53.7 & 116.2 &  62.0 & 124.1 &  64.6 & 127.0 &  65.6 & 128.4 &  63.9 & 124.0 \\
    Cond AE  & 102.9 & 191.7 & 100.5 & 190.2 & 100.4 & 188.7 &  99.7 & 186.2 & 100.8 & 193.4 \\
    \bottomrule
  \end{tabular}%
  }
\end{table}

\subsubsection{Additional Results} \label{appen:heat_results}

\begin{figure}[t]
    \begin{center}
    \includegraphics[scale=0.425]{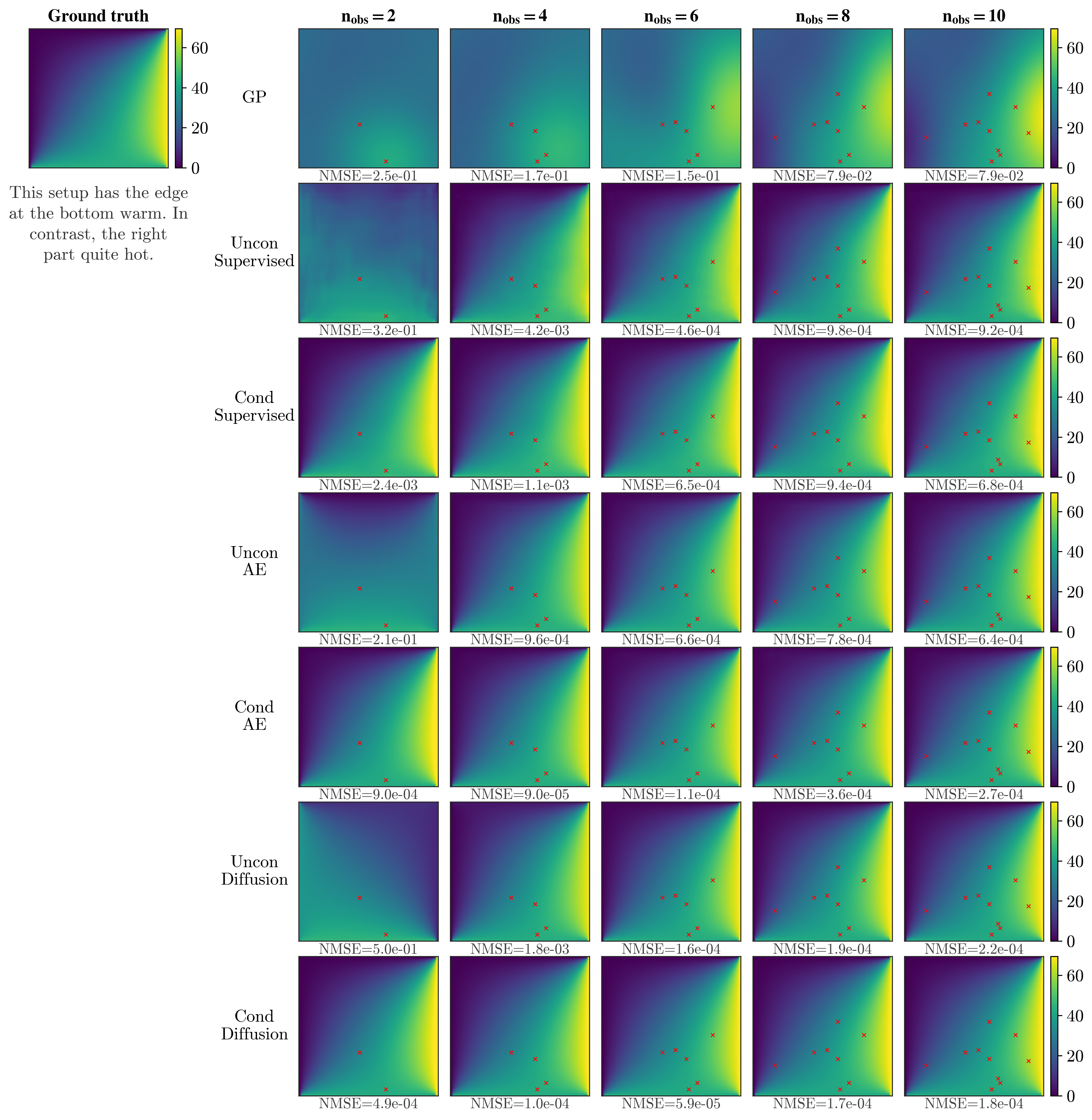}
    \end{center}
    \caption{\textbf{Posterior inference of one steady-state heat equation test system as the number of observation points increases.} The leftmost panel shows the ground truth field together with the text description. Columns give $n_{obs} = \{2,4,6,8,10\}$ with nested observation sets. NMSE is also provided at the bottom of each field.}
    \label{fig:heat_obs_graph2}
\end{figure}

Next, we provide some additional results on the first experiment. 

\paragraph{Inference Results.} Firstly, we present the full table of inference results across $n_{\text{obs}}=\{ 2,4,6,8,10\}$ in Table~\ref{tab:heat_inference_full}. The last two rows report standard DPS, with the guidance strength $\zeta'$ tuned over the range recommended by \citet{chung2022diffusion}, and we take $\zeta'=0.75$. Comparing standard DPS with our modified one in Table~\ref{tab:heat_inference_full}, the two methods achieve similar NMSE but standard DPS attains much lower coverage, meaning that its posteriors are over-confident. Standard DPS has higher CRPS as well. Figure \ref{fig:heat_single_system} shows, for a single system at $n_{\mathrm{obs}}=4$, the posterior inference results including the posterior mean, posterior standard deviation and absolute error of the mean of five probabilistic models. It can be seen that unconditional models produce both a larger posterior variance and a larger error. Figure~\ref{fig:heat_coverage} plots the distribution of coverages across test systems, showing that coverages for diffusion models are less concentrated around targeted coverage level. Table \ref{tab:heat_nuts_ess} reports effective sample size (ESS) of the NUTS chains, showing that \textbf{Cond AE} has larger ESS than \textbf{Uncon AE}.

\paragraph{Prior Generation.} Figure~\ref{fig:heat_prior} provides samples drawn directly from the learned priors, with text provided to conditional models. Priors generated from conditional models are more consistent with the described field, whereas samples from unconditional ones spread over the whole dataset.

\paragraph{Ablation on number of observation points and text contribution. } Figure~\ref{fig:heat_obs_graph2} shows the performance of all seven models over a single system with $n_{\mathrm{obs}}$ sweeping from $2$ to $10$; for supervised
models the network output is shown, and for probabilistic ones the posterior mean is provided. Unconditional models are visibly inaccurate at small $n_{\mathrm{obs}}$, due to the fact that without additional text information, observations alone cannot determine the field. Figure~\ref{fig:heat_text_pair}(a) reports the mean, median and interquartile range of the posterior mean NMSE over $200$ test systems for the four generative
models, with exact values provided in Table~\ref{tab:heat_nmse_quantiles}. The gap between conditional and unconditional models widens as $n_{\mathrm{obs}}$ decreases. Figure~\ref{fig:heat_text_pair}(b) makes this explicit by calculating per-system relative gain from $\log_2(\mathrm{NMSE}_{\mathrm{uncon}}/\mathrm{NMSE}_{\mathrm{cond}})$, showing median and interquartile range across systems. A zero value of text gain indicates that the text adds nothing beyond observations.

\subsection{Helmholtz Equation}

\subsubsection{Problem Setup} \label{appen:helm_setup}

The second experiment is based on a damped Helmholtz equation on the unit disk $\Omega=\{x\in\R^2:\|x\|_2\le 1\}$ with a Neumann boundary,
\begin{equation}
    \begin{aligned}
        \bigl(\Delta+\kappa+i\gamma\kappa\bigr)u(x) &= f(x), && x\in\Omega,\\
        \partial_n u(x) &= 0,                                 && x\in\partial\Omega,
    \end{aligned}
\end{equation}
forced uniformly over a small disk,
\begin{equation}
    f(x)=A\,\mathbbm{1}_{\{\|x-c\|_2\le R\}},\qquad R=0.1 .
\end{equation}
Here $\kappa=200$ is the squared wavenumber, $\gamma$ the damping ratio, and $A$ and $c$ the amplitude and the coordinate of the forcing centre. Each system draws $c$ uniformly over the inner disk of radius $0.8$, and $A$ and $\gamma$ log-uniformly from $[1,10]$ and $[0.02,0.2]$. The solution is complex, so each system consists of a real and an imaginary field. The Finite element method (FEM) in FEniCSx is used on an unstructured triangular mesh of target size $h=0.03$. Then the solution is interpolated onto a $96\times96$ grid over $(-1.1,1.1)^2$ masked to the disk.

About text generation, descriptions are produced by prompting \textit{DeepSeek-V4-Flash} \citep{liu2024deepseek} through its API. Each text consists of three pieces of information in qualitative natural language format: the damping across the whole disk, the magnitude of forcing, and its location, in terms of its angular position and a rough distance from the centre. For forcing magnitude $A$, damping $\gamma$ and radial position $\|c\|_2$, numerical parameters are first binned following Table \ref{tab:helm_bins}. Each bin carries a small pool of anchor words and phrases, one of which is drawn at random and passed to the model, so that numerical values are never shown. Anchor words are different for generating train and test text descriptions. The prompt instructs the model to replace the anchor word with a synonym of the same strength rather than copy it, which preserves the meaning of a description without reusing the same words. The angular position of forcing is reported at one of three different resolutions: a quadrant ($90^\circ$) (eg: in the north-east quadrant), one of eight compass points ($45^\circ$) (eg: in the west), or one of twelve clock directions ($30^\circ$) (eg: one o'clock direction), providing more ambiguity in generated texts. 

\begin{figure}[t!]
    \begin{center}
    \includegraphics[scale=0.45]{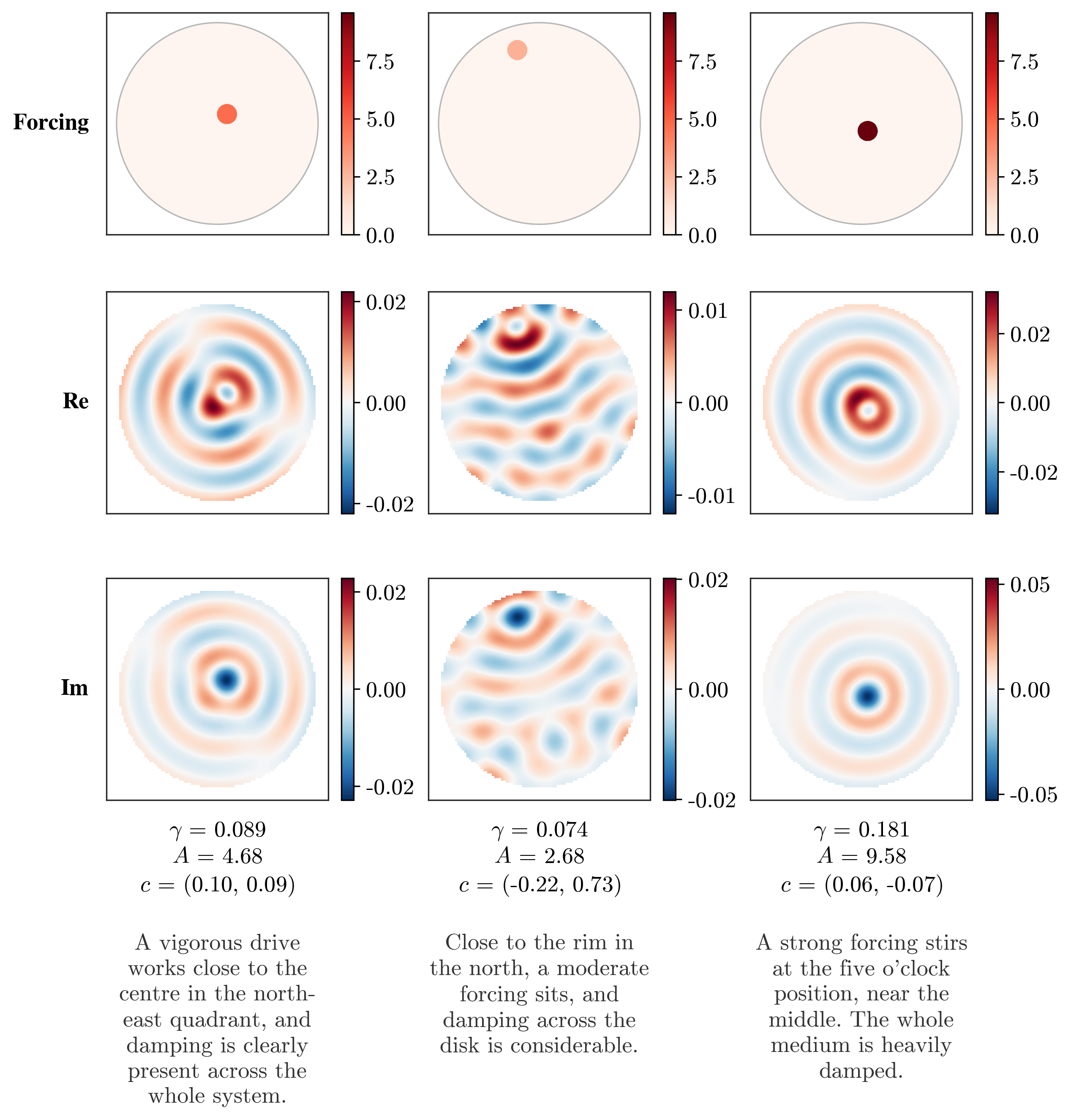}
    \end{center}
    \caption{Examples of Helmholtz fields with forcing fields, related parameters and descriptions.}
    \label{fig:helm_examples}
\end{figure}

\begin{table}[t]
  \caption{\textbf{Binning of the Helmholtz parameters for text generation.} Only the bin
  labels are passed to the language model and the numerical values are never shown.}
  \label{tab:helm_bins}
  \centering
  \begin{tabular}{@{}cc@{\hspace{1.8em}}cc@{\hspace{1.8em}}cc@{}}
    \toprule
    \multicolumn{2}{c}{Forcing magnitude $A$} &
    \multicolumn{2}{c}{Damping $\gamma$} &
    \multicolumn{2}{c}{Radial position $\|c\|_2$} \\
    \cmidrule(lr){1-2} \cmidrule(lr){3-4} \cmidrule(lr){5-6}
    Level & Range & Level & Range & Level & Range \\
    \midrule
        faint    & 1.00--2.15 & very light & 0.020--0.036 & centre & 0.00--0.25 \\
    moderate & 2.15--4.64 & light      & 0.036--0.063 & mid    & 0.25--0.55 \\
    strong   & 4.64--10.0 & moderate   & 0.063--0.113 & rim    & 0.55--0.80 \\
             &            & heavy      & 0.113--0.200 &        &            \\
    \bottomrule
  \end{tabular}
\end{table}

The following context provides the two halves to be provided to API. The system prompt is fixed across the whole dataset, which states the task and rules the descriptions must obey. The user message carries the content, with one line per system, giving its three binned facts. A few examples of Helmholtz fields and corresponding texts are provided in Figure~\ref{fig:helm_examples}.

\begin{figure}[t]
    \begin{center}
    \includegraphics[scale=0.475]{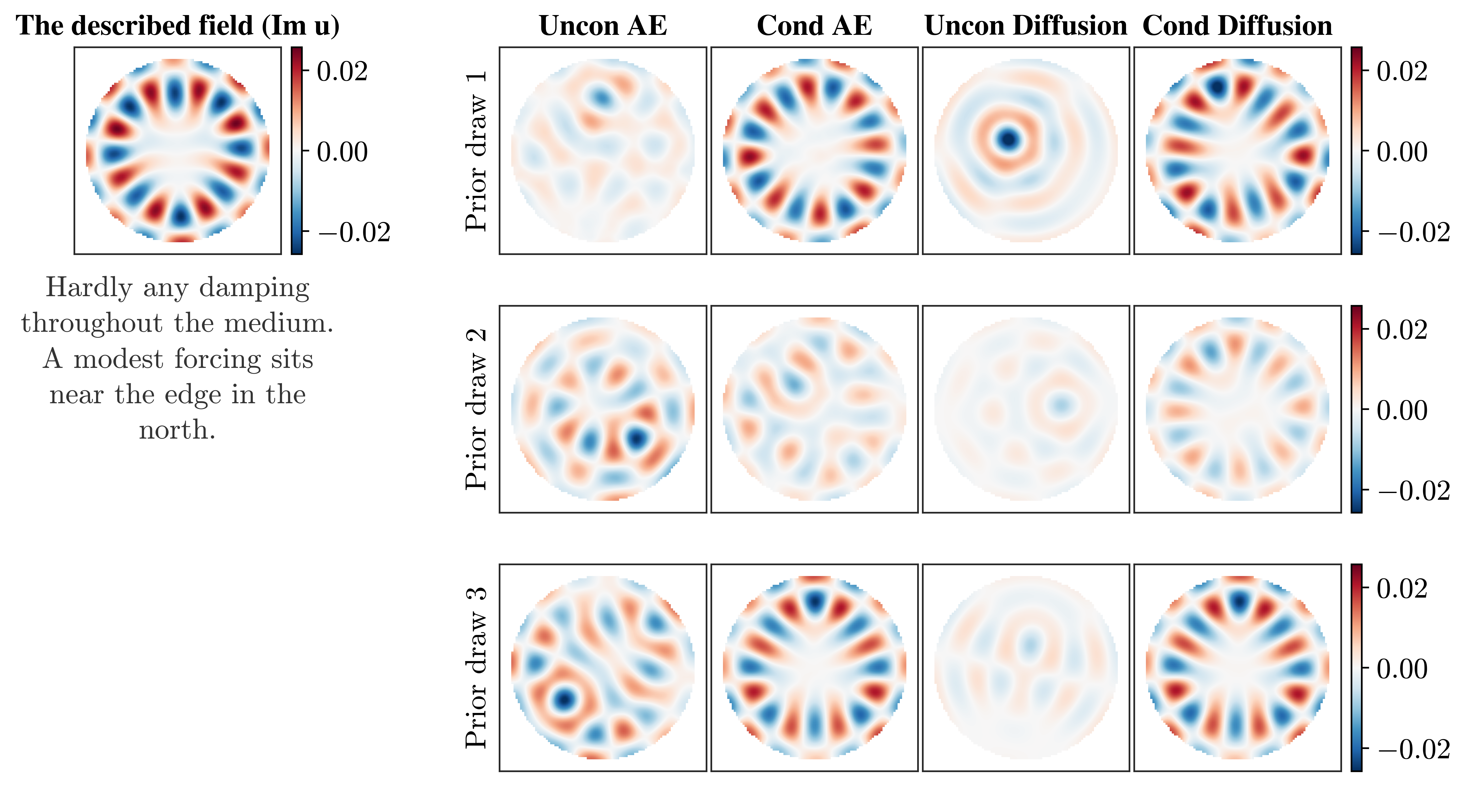}
    \end{center}
    \caption{Prior samples generated from four models (with text provided to conditional models).}
    \label{fig:helm_prior}
\end{figure}

\begin{figure}[t]
    \begin{center}
    \includegraphics[scale=0.44]{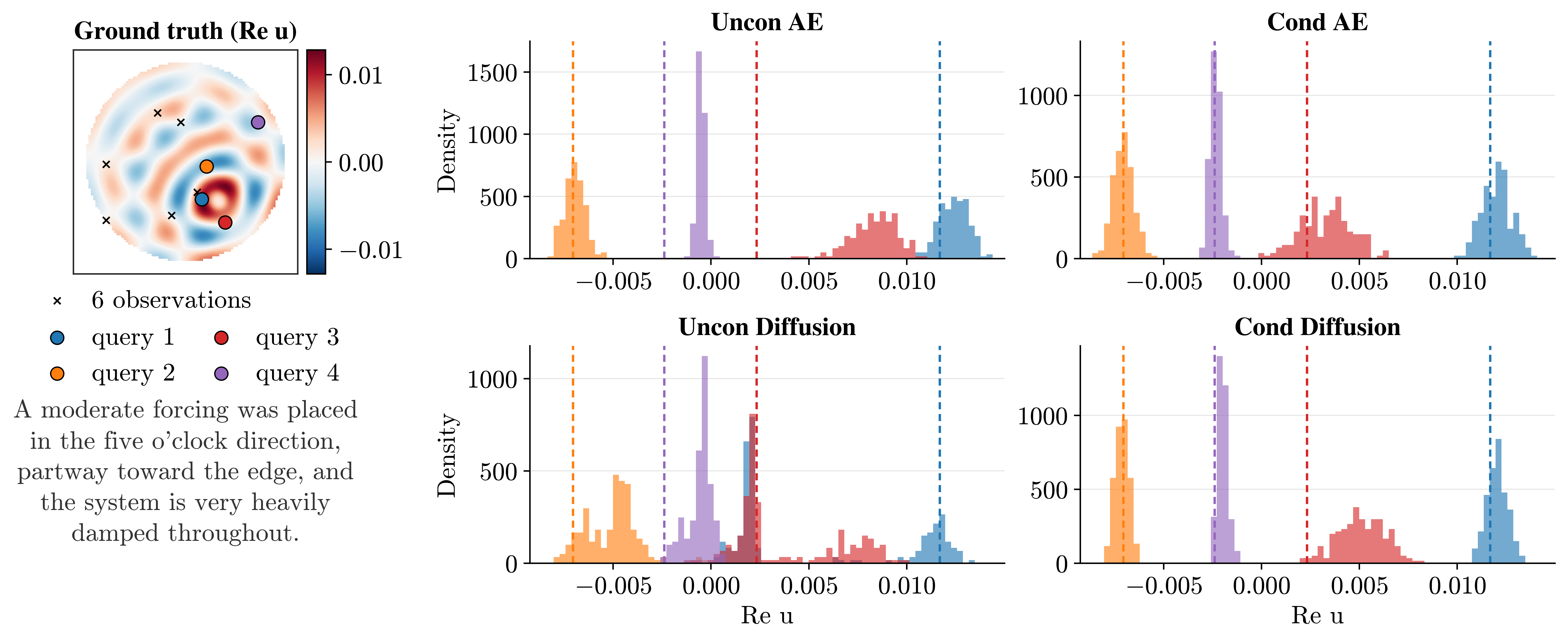}
    \end{center}
    \caption{Posterior marginals of one Helmholtz equation test system (Real part) (same system as in Figure~\ref{fig:helm_single_system}) at four unobserved query locations when $n_{\mathrm{\text{obs}}}=6$.}
    \label{fig:helm_posterior_histograms}
\end{figure}

\begin{table}[t]
  \centering
  \caption{\textbf{Full inference table on the Helmholtz equation test set.} This table extends Table \ref{tab:helm_inference} with $n_{\text{obs}}$ between 4 and 12 over $100$ test fields.}
  \label{tab:helm_inference_full}
  \setlength{\tabcolsep}{3pt}
  \resizebox{\textwidth}{!}{%
  \begin{tabular}{@{}l@{}}
  \begin{tabular}{l |
    S[table-format=2.2]@{\,$\pm$\,}S[table-format=3.2]
    S[table-format=2.2]@{\,$\pm$\,}S[table-format=2.2]
    S[table-format=2.1]@{\,}c
    S[table-format=2.1]@{\,}c |
    S[table-format=2.2]@{\,$\pm$\,}S[table-format=3.2]
    S[table-format=2.2]@{\,$\pm$\,}S[table-format=2.2]
    S[table-format=2.1]@{\,}c
    S[table-format=2.1]@{\,}c |
    S[table-format=2.2]@{\,$\pm$\,}S[table-format=3.2]
    S[table-format=2.2]@{\,$\pm$\,}S[table-format=2.2]
    S[table-format=2.1]@{\,}c
    S[table-format=2.1]@{\,}c}
    \toprule
    & \multicolumn{8}{c|}{$n_{\text{obs}}=4$}
    & \multicolumn{8}{c|}{$n_{\text{obs}}=6$}
    & \multicolumn{8}{c}{$n_{\text{obs}}=8$} \\
    \cmidrule(lr){2-9} \cmidrule(lr){10-17} \cmidrule(lr){18-25}
    \textbf{Model}
    & \multicolumn{2}{c}{\makecell{NMSE $\downarrow$\\ $(\times 10^{-2})$}}
      & \multicolumn{2}{c}{\makecell{CRPS $\downarrow$\\ $(\times 10^{-2})$}}
      & \multicolumn{2}{c}{\makecell{$\% C_{68}$}}
      & \multicolumn{2}{c|}{\makecell{$\%C_{95}$}}
    & \multicolumn{2}{c}{\makecell{NMSE $\downarrow$\\ $(\times 10^{-2})$}}
      & \multicolumn{2}{c}{\makecell{CRPS $\downarrow$\\ $(\times 10^{-2})$}}
      & \multicolumn{2}{c}{\makecell{$\% C_{68}$}}
      & \multicolumn{2}{c|}{\makecell{$\%C_{95}$}}
    & \multicolumn{2}{c}{\makecell{NMSE $\downarrow$\\ $(\times 10^{-2})$}}
      & \multicolumn{2}{c}{\makecell{CRPS $\downarrow$\\ $(\times 10^{-2})$}}
      & \multicolumn{2}{c}{\makecell{$\% C_{68}$}}
      & \multicolumn{2}{c}{\makecell{$\%C_{95}$}} \\
    \midrule
    GP
      & 92.45 &   8.42 & 74.49 &  31.10 & 81.4 & (23.0) & 93.1 & ( 8.1)
      & 86.74 &   9.20 & 74.66 &  33.42 & 83.3 & (23.0) & 94.0 & ( 7.4)
      & 81.62 &  10.47 & 69.13 &  28.71 & 81.4 & (23.0) & 93.0 & ( 8.2) \\
    Uncon Supervised
      & 54.07 & 28.83 & \multicolumn{2}{c}{--} & \multicolumn{2}{c}{--} & \multicolumn{2}{c|}{--}
      & 25.93 &  23.18 & \multicolumn{2}{c}{--} & \multicolumn{2}{c}{--} & \multicolumn{2}{c|}{--}
      & 14.17 &  12.89 & \multicolumn{2}{c}{--} & \multicolumn{2}{c}{--} & \multicolumn{2}{c}{--} \\
    Cond Supervised
      & 28.39 & 34.74 & \multicolumn{2}{c}{--} & \multicolumn{2}{c}{--} & \multicolumn{2}{c|}{--}
      & 19.77 &  25.27 & \multicolumn{2}{c}{--} & \multicolumn{2}{c}{--} & \multicolumn{2}{c|}{--}
      & 13.13 &  17.64 & \multicolumn{2}{c}{--} & \multicolumn{2}{c}{--} & \multicolumn{2}{c}{--} \\
    Uncon AE
      & 94.21 & 186.28 & 45.20 &  36.74 & 37.7 & (33.7) & 60.4 & (35.9)
      & 66.61 & 338.05 & 28.47 &  42.73 & 45.9 & (28.7) & 68.6 & (28.3)
      & 29.20 &  64.45 & 22.53 &  26.28 & 46.5 & (27.6) & 68.4 & (28.4) \\
    Cond AE
      & 23.89 &  46.08 & 18.68 &  20.59 & 67.7 & (18.6) & 89.6 & (10.7)
      &  7.15 &  17.18 &  9.62 &   9.43 & 71.4 & (13.6) & 94.5 & ( 5.5)
      &  4.07 &   8.99 &  7.79 &   6.78 & 70.3 & (12.5) & 94.1 & ( 5.6) \\
    Uncon Diffusion
      & 48.08 &  30.47 & 32.30 &  14.43 & 57.5 & (19.7) & 88.8 & (10.5)
      & 22.57 &  24.20 & 18.26 &  13.38 & 64.4 & (18.0) & 93.4 & ( 6.7)
      &  9.00 &  13.21 &  9.95 &   8.67 & 68.7 & (18.0) & 93.9 & ( 7.0) \\
    Cond Diffusion
      & 20.14 &  30.17 & 18.96 &  16.23 & 50.5 & (26.1) & 76.6 & (21.2)
      &  7.15 &  12.63 & 10.51 &   9.22 & 54.5 & (23.5) & 83.0 & (15.3)
      &  4.30 &   7.95 &  7.90 &   7.09 & 54.8 & (21.7) & 82.2 & (15.9) \\
    Uncon Diffusion (DPS)
      & 47.79 &  27.75 & 32.16 &  13.37 & 56.6 & (19.1) & 88.8 & ( 9.7)
      & 24.18 &  23.35 & 19.23 &  12.81 & 61.5 & (17.3) & 90.6 & ( 8.0)
      & 10.48 &  13.98 & 10.91 &   8.83 & 62.7 & (18.2) & 90.2 & ( 8.7) \\
    Cond Diffusion (DPS)
      & 19.12 &  28.91 & 19.21 &  15.71 & 45.2 & (28.3) & 72.6 & (24.0)
      &  7.19 &  13.22 & 11.40 &   9.50 & 47.4 & (26.8) & 76.9 & (19.7)
      &  4.42 &   8.75 &  7.91 &   7.30 & 46.9 & (24.9) & 75.2 & (20.9) \\
    \midrule
  \end{tabular}
  \\
  \begin{tabular}{l |
    S[table-format=2.2]@{\,$\pm$\,}S[table-format=3.2]
    S[table-format=2.2]@{\,$\pm$\,}S[table-format=2.2]
    S[table-format=2.1]@{\,}c
    S[table-format=2.1]@{\,}c |
    S[table-format=2.2]@{\,$\pm$\,}S[table-format=3.2]
    S[table-format=2.2]@{\,$\pm$\,}S[table-format=2.2]
    S[table-format=2.1]@{\,}c
    S[table-format=2.1]@{\,}c}
    & \multicolumn{8}{c|}{$n_{\text{obs}}=10$}
    & \multicolumn{8}{c}{$n_{\text{obs}}=12$} \\
    \cmidrule(lr){2-9} \cmidrule(lr){10-17}
    \textbf{Model}
    & \multicolumn{2}{c}{\makecell{NMSE $\downarrow$\\ $(\times 10^{-2})$}}
      & \multicolumn{2}{c}{\makecell{CRPS $\downarrow$\\ $(\times 10^{-2})$}}
      & \multicolumn{2}{c}{\makecell{$\% C_{68}$}}
      & \multicolumn{2}{c|}{\makecell{$\%C_{95}$}}
    & \multicolumn{2}{c}{\makecell{NMSE $\downarrow$\\ $(\times 10^{-2})$}}
      & \multicolumn{2}{c}{\makecell{CRPS $\downarrow$\\ $(\times 10^{-2})$}}
      & \multicolumn{2}{c}{\makecell{$\% C_{68}$}}
      & \multicolumn{2}{c}{\makecell{$\%C_{95}$}} \\
    \midrule
    GP
      & 77.12 &  11.96 & 67.76 &  29.42 & 82.3 & (23.1) & 93.6 & ( 7.7)
      & 72.21 &  12.94 & 65.36 &  28.73 & 82.3 & (23.0) & 93.7 & ( 7.6) \\
    Uncon Supervised
      & 10.17 &   7.68 & \multicolumn{2}{c}{--} & \multicolumn{2}{c}{--} & \multicolumn{2}{c|}{--}
      &  8.45 &   8.23 & \multicolumn{2}{c}{--} & \multicolumn{2}{c}{--} & \multicolumn{2}{c}{--} \\
    Cond Supervised
      & 10.84 &  16.48 & \multicolumn{2}{c}{--} & \multicolumn{2}{c}{--} & \multicolumn{2}{c|}{--}
      &  8.84 &  13.23 & \multicolumn{2}{c}{--} & \multicolumn{2}{c}{--} & \multicolumn{2}{c}{--} \\
    Uncon AE
      & 16.07 &  30.13 & 17.42 &  18.26 & 46.1 & (26.9) & 68.4 & (28.5)
      & 12.72 &  29.08 & 14.82 &  17.04 & 46.9 & (26.8) & 69.0 & (27.7) \\
    Cond AE
      &  3.16 &   7.25 &  6.77 &   5.80 & 69.7 & (11.8) & 93.8 & ( 5.4)
      &  2.35 &   4.83 &  6.00 &   4.51 & 68.9 & (12.0) & 93.1 & ( 5.7) \\
    Uncon Diffusion
      &  5.36 &   9.24 &  7.20 &   6.99 & 67.6 & (19.1) & 92.0 & ( 8.4)
      &  3.09 &   6.62 &  5.38 &   5.46 & 66.8 & (17.6) & 92.7 & ( 7.6) \\
    Cond Diffusion
      &  3.45 &   7.54 &  6.75 &   6.64 & 54.7 & (20.7) & 83.6 & (14.1)
      &  2.52 &   4.92 &  5.80 &   5.63 & 54.5 & (20.3) & 83.9 & (13.8) \\
    Uncon Diffusion (DPS)
      &  5.93 &   9.74 &  7.61 &   7.04 & 61.6 & (18.8) & 88.3 & (10.1)
      &  3.95 &   7.94 &  6.02 &   5.96 & 58.2 & (18.3) & 86.5 & (10.5) \\
    Cond Diffusion (DPS)
      &  3.63 &   8.95 &  6.79 &   7.13 & 47.2 & (24.1) & 75.8 & (20.3)
      &  2.71 &   5.94 &  5.85 &   5.97 & 47.0 & (23.9) & 76.4 & (19.3) \\
    \bottomrule
  \end{tabular}
  \end{tabular}%
  }
\end{table}

\begin{figure}[t]
    \begin{center}
     \includegraphics[scale=0.41]{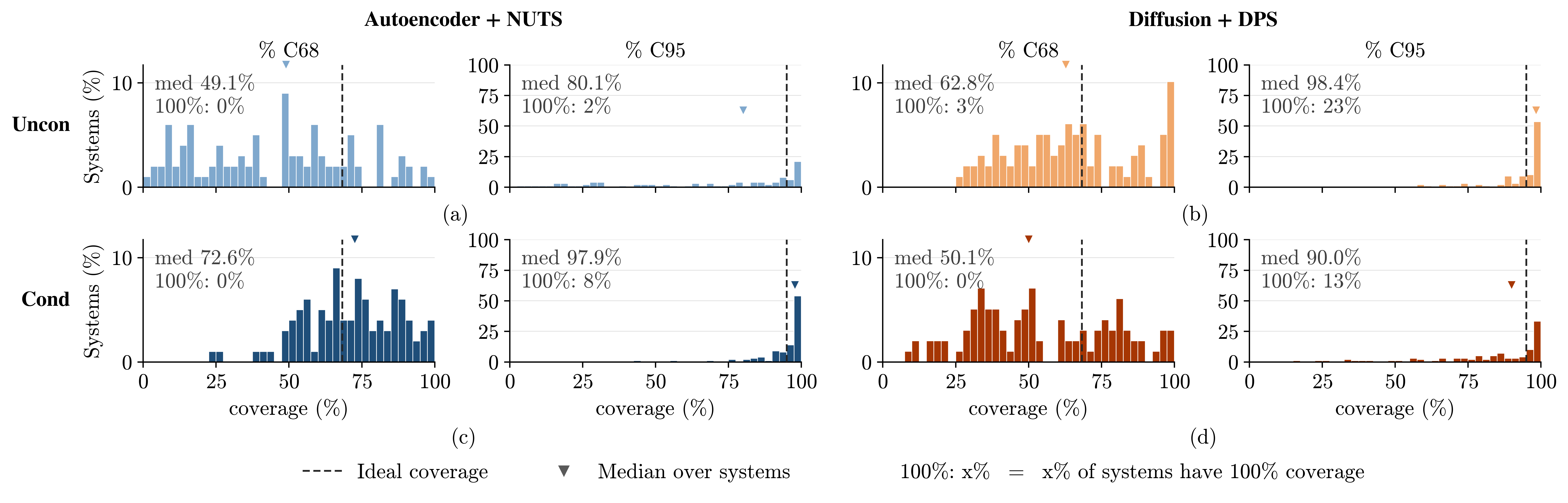}
    \end{center}
    \caption{Per-system coverages over 100 Helmholtz test systems at $n_{\text{obs}}=6.$} 
    \label{fig:helm_coverage}
\end{figure}

\begin{figure}[t]
    \begin{center}
    \includegraphics[scale=0.46]{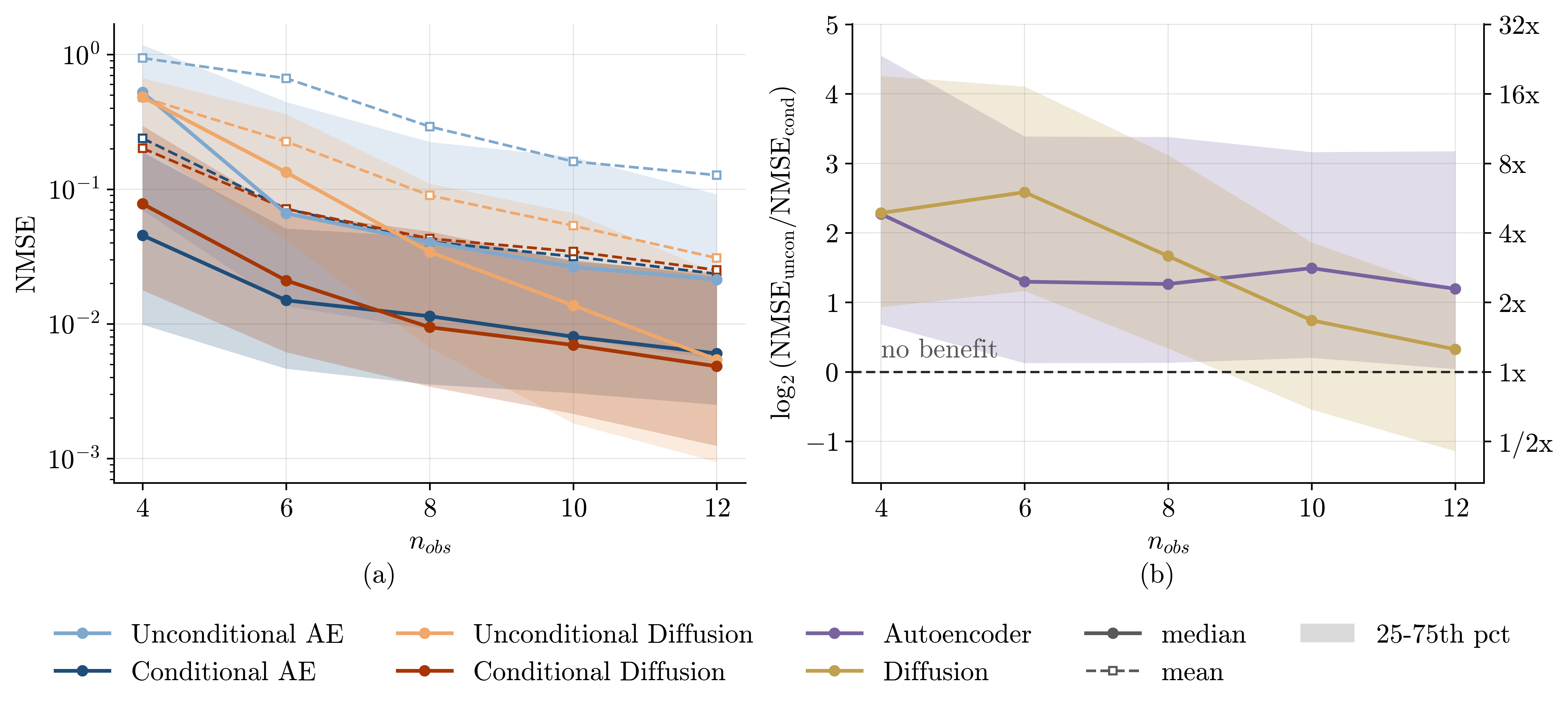}
    \end{center}
    \caption{\textbf{NMSE performance and text gain at different $\mathbf{n_\textbf{obs}}$ for Helmholtz equation.} (a) NMSE across 100 test systems: median (solid), mean (dashed), and 25th–75th percentile bands, see Table~\ref{tab:helm_nmse_quantiles} for exact values. (b) Per-system relative gain from text conditioning, $\log_2(  \mathrm{NMSE}_{\mathrm{uncon}}/\mathrm{NMSE}_{\mathrm{cond}}) $, with median and interquartile bands.}
    \label{fig:helm_text_pair}
\end{figure}

\begin{table}[t]
  \centering
  \caption{Quartiles of per-system NMSE across 100 Helmholtz test systems at different $n_\text{obs}$.}
  \label{tab:helm_nmse_quantiles}
  \setlength{\tabcolsep}{4pt}
  \resizebox{\textwidth}{!}{%
  \begin{tabular}{l | ccc | ccc | ccc | ccc | ccc}
    \toprule
    & \multicolumn{3}{c|}{$n_{\text{obs}}=4$}
    & \multicolumn{3}{c|}{$n_{\text{obs}}=6$}
    & \multicolumn{3}{c|}{$n_{\text{obs}}=8$}
    & \multicolumn{3}{c|}{$n_{\text{obs}}=10$}
    & \multicolumn{3}{c}{$n_{\text{obs}}=12$} \\
    \cmidrule(lr){2-4} \cmidrule(lr){5-7} \cmidrule(lr){8-10}
    \cmidrule(lr){11-13} \cmidrule(lr){14-16}
    \makecell[l]{\textbf{Model}\\ NMSE $(\times 10^{-2})$}
      & $Q_{25}$ & Median & $Q_{75}$
      & $Q_{25}$ & Median & $Q_{75}$
      & $Q_{25}$ & Median & $Q_{75}$
      & $Q_{25}$ & Median & $Q_{75}$
      & $Q_{25}$ & Median & $Q_{75}$ \\
    \midrule
    Uncon AE
      & 6.98 & 52.42 & 118.06
      & 1.36 & 6.62 & 44.52
      & 0.91 & 4.04 & 22.46
      & 0.78 & 2.65 & 17.39
      & 0.55 & 2.14 & 9.19 \\
    Cond AE
      & 0.99 & 4.55 & 18.83
      & 0.46 & 1.50 & 5.11
      & 0.35 & 1.14 & 4.41
      & 0.31 & 0.80 & 2.99
      & 0.25 & 0.60 & 2.26 \\
    Uncon Diffusion
      & 21.18 & 48.36 & 66.97
      & 4.16 & 13.37 & 36.19
      & 0.66 & 3.42 & 11.01
      & 0.18 & 1.37 & 6.66
      & 0.09 & 0.54 & 2.40 \\
    Cond Diffusion
      & 1.78 & 7.79 & 29.61
      & 0.62 & 2.10 & 6.79
      & 0.34 & 0.94 & 4.86
      & 0.21 & 0.70 & 2.91
      & 0.12 & 0.48 & 2.39 \\
    \bottomrule
  \end{tabular}%
  }
\end{table}

{\footnotesize
\textbf{System prompt}
\begin{verbatim}
Suppose you are an expert doing experiments on damped Helmholtz equation
on a disc, where you need to state experimental setups, including 
magnitude of damping, strength and location of force. You write short, 
natural language-style descriptions of experimental setups for each 
system, with three facts provided for each configuration:
- forcing position: where the forcing is applied (with angular 
  directions at different resolutions: a quadrant, a compass direction, 
  or a clock direction, plus a rough distance from the centre)
- forcing magnitude: an anchor word showing how large the forcing is
- damping: an anchor phrase for how strongly damped the WHOLE disk is 
  (one global property of the medium, without specific location)

Write ONE description per item. Rules:
- Every description is one or two COMPLETE sentences, each with a subject
  and a verb.
- Include ALL three facts. Although the items are listed in a fixed order
  (position, magnitude, damping) due to input formatting, please do NOT 
  write them in the same order. Start with describing forcing as well.
- Do NOT use the exact given anchor word. Represent the same level using 
  a different word, but the level must stay unchanged: a low-level anchor 
  must never use words sounding strong, nor a high-level anchor mild.
- Restructure the damping phrase as well, keep the level it represents the 
  same but do not use the same wording.
- Vary your verbs. Do not reuse "force is applied" or "force is present",
  can use verbs like drives, pushes, stirs, sits, hums, acts, hits, works 
  away, was placed, etc.
- Within one request, do not use the same magnitude word twice, use a 
  different synonym instead.
- Damping is GLOBAL, so phrase it as being applied on the whole disk.
- Keep the position EXACTLY as coarse as stated. Never create a more
  precise position, never make it vaguer, never change direction words.
- Never use digits, and say clock hours using words, and ALWAYS add a 
  disambiguating word so it reads like a direction, not a time of day. 
  (eg: use "the two o'clock direction", "toward two o'clock", "at two 
  o'clock on the dial", but "at two o'clock" is WRONG.
\end{verbatim}

\textbf{User message} 
\begin{verbatim}
Facts:
1. forcing position: the north-west quadrant, close to the rim |
   forcing magnitude: weak |
   damping: the disk as a whole is barely damped 
2. forcing position: the four o'clock direction, halfway towards the edge |
   forcing magnitude: medium-strength |
   damping: the system as a whole is clearly damped ...
\end{verbatim}
\label{fig:helm_prompt}}

\subsubsection{Experimental Setup}

\begin{figure}[t]
    \begin{center}
    \includegraphics[scale=0.45]{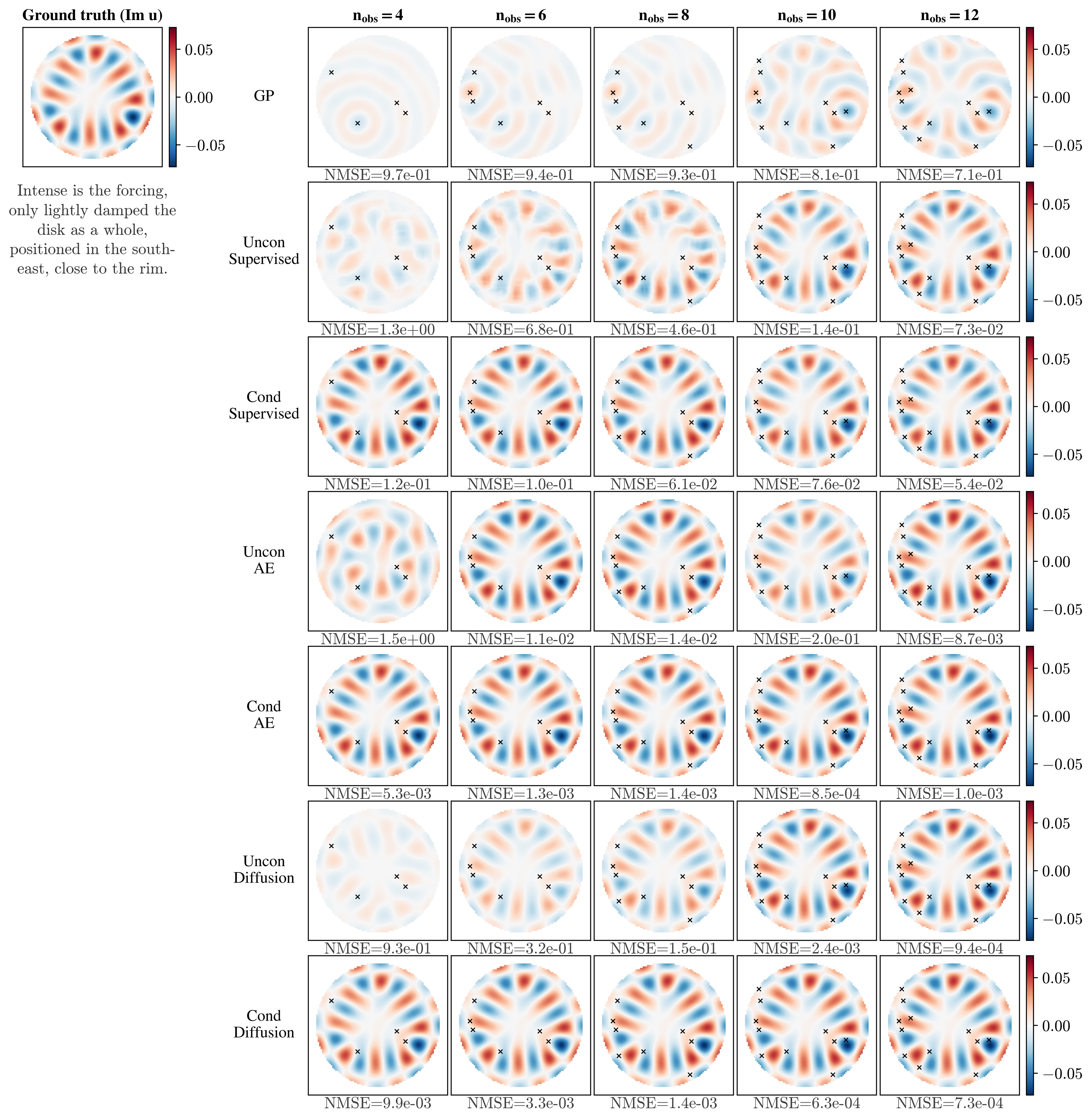}
    \end{center}
    \caption{Posterior inference of one Helmholtz equation test system (Imaginary part) as the number of observation points increases.}
    \label{fig:helm_obs_graph_2}
\end{figure}

\paragraph{Fine-tuning.} Damped Helmholtz fields oscillate, so an $L_2$ distance as in (\ref{eq:heat_score}) is a poor measure of how far apart two fields are. Instead, we compare the pointwise energy $E(x)=|u(x)|^2$ calculated from real and imaginary channels, and use the sliced Wasserstein distance between normalised energy fields \smash{$\tilde E(x)=E(x)/\|u\|_2^2$}, where $\|u\|_2^2=\sum_x E(x)$ is the total energy. Since the normalisation discards the magnitude, we append the difference in log total energy,
\begin{equation}
    \label{eq:helm_score}
    \mathrm{score}_{\mathrm{field}}(u,u')
    =\exp\Bigl(-\frac{c_1\,\mathrm{SW}_2^2(\tilde E,\tilde E')
        +c_2\,\bigl|\log\|u\|_2^2-\log\|u'\|_2^2\bigr|^2}{T}\Bigr),
\end{equation}
where $\mathrm{SW}^2_2(\bullet,\bullet)$ measures the squared sliced Wasserstein distance between two normalised energy fields, $c_1$ and $c_2$ scale the two terms to similar magnitudes, and $T=340.02$ is the bandwidth. 3 epochs of fine-tuning \texttt{all-MiniLM-L6-v2} are performed on $50,000$ training pairs with learning rate $5\times 10^{-5}$.

\begin{figure}[t]
    \begin{center}
    \includegraphics[scale=0.45]{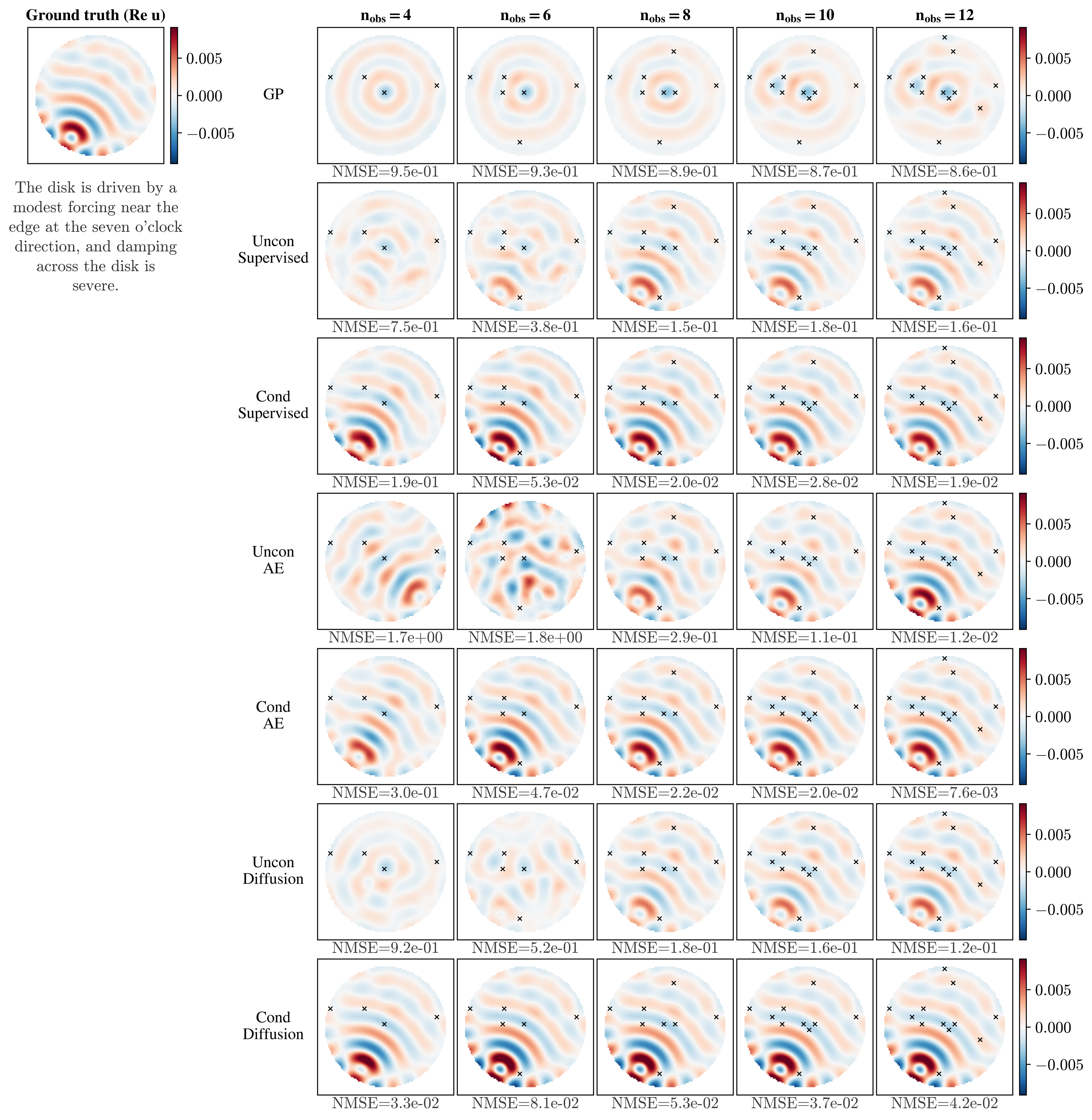}
    \end{center}
    \caption{Posterior inference of one Helmholtz equation test system (Real part) as the number of observation points increases.}
    \label{fig:helm_obs_graph}
\end{figure}

\paragraph{Train.} Supervised baselines use a U-Net with $96$ base channels, \textbf{Cond Supervised} is trained for $2k$ epochs, while \textbf{Uncon Supervised} for $2.5k$ epochs, with a learning rate of $10^{-3}$ and batch size of $32$; AE uses FNO with $4$ layers (each for $\sfE$ and $\sfD$), width
$56$ and $14$ modes, latent dimension is $16$, \textbf{Cond AE} is trained for $4k$ epochs and \textbf{Uncon AE} for $6k$, with a learning rate of $10^{-3}$ and batch size $200$; the score network is a U-Net with $80$ base channels and a $128$-dimensional conditioning vector, \textbf{Cond Diffusion} is trained for $3k$ epochs and \textbf{Uncon Diffusion} for $4k$, with learning rate of $2 \times 10^{-4}$ and batch size of $32$.

\paragraph{Inference.} At inference, observations are drawn randomly from both real and imaginary fields, with the same noise standard deviation $\sigma_y = 6 \times 10^{-4}$. Following \citet{caviedes2021gaussian}, \textbf{GP} uses a modified Bessel kernel
\begin{equation}
    \label{eq:helm_gp_kernel}
    k(x,x')=\sigma_f^2\exp\Bigl(-\frac{\|x-x'\|_2^2}{2\ell^2}\Bigr)
            J_0\bigl(k_0\|x-x'\|_2\bigr),
\end{equation}
where \smash{$\sigma_f^2 J_0\bigl(k_0\|x-x'\|_2\bigr)$} is the Bessel kernel, modulated by an exponential envelope that accounts for the damping in our case, where \smash{$k_0=\sqrt{\kappa}= \sqrt{200}$}, and $\sigma_f$ and $\ell$ are hyperparameters to be learned using maximum likelihood estimation. AE posteriors are sampled using a single NUTS chain with $M_o=100$ warmup steps and $M=200$ retained draws. The maximum tree depth is $10$ for both \textbf{Cond AE} and \textbf{Uncon AE}. Diffusion posteriors are drawn by integrating the guided reverse SDE over $L=800$ uniform steps and the guidance clip $c=1.0$.

\subsubsection{Additional Results} \label{appen:helm_results}

\paragraph{Inference Results.} Table~\ref{tab:helm_inference_full} gives the full inference results across a variety of $n_\text{obs}$. Again, the last two rows report standard DPS, using a tuned guidance strength $\zeta'=0.3$. Standard DPS reaches an NMSE comparable to the modified sampler but provides worse posterior coverage. From both Table~\ref{tab:helm_inference_full} and posterior histograms in Figure~\ref{fig:helm_posterior_histograms}, conditional generative models provide more accurate inference results. Figure~\ref{fig:helm_coverage} shows coverage distributions over $100$ test systems. \textbf{Cond AE} is the best calibrated overall, with the most systems concentrated near the nominal levels (median $\%C_{68}=72.6\%$ and $\%C_{95}=97.9\%$). \textbf{Uncon Diffusion} reports average coverages close to nominal values in Table~\ref{tab:helm_inference_full}, but Figure~\ref{fig:helm_coverage}(b) shows this comes from $23\%$ of systems reaching $\%C_{95}=100\%$, i.e. from posteriors that are far too dispersed rather than from
calibration. \textbf{Uncon Diffusion} also has higher CRPS values than \textbf{Cond Diffusion}.

\paragraph{Prior Generation.} Figure~\ref{fig:helm_prior} provides a few prior draws from the four generative models. With the text supplied, conditional priors concentrate on fields consistent with the description, showing the low damping and the forcing near the northern rim, while unconditional draws vary freely over the whole training distribution.

\paragraph{Ablation on number of observation points and text contribution.} Figures~\ref{fig:helm_obs_graph_2} and~\ref{fig:helm_obs_graph} visualise inference results as $n_\text{obs}$ progresses. \textbf{Cond Supervised}, \textbf{Cond AE} and \textbf{Cond Diffusion} all perform better than unconditional ones, and the margin
is largest when observations alone are least informative. Figure~\ref{fig:helm_text_pair} plots NMSE means and quartiles, and per-system text gain across different $n_\text{obs}$, with the corresponding NMSE values in
Table~\ref{tab:helm_nmse_quantiles}. The gain shrinks as observation data accumulate but does not vanish: even at $n_\text{obs}=12$ the median gain is about $2.3\times$ for the autoencoder and $1.3\times$ for diffusion, so the information from text still contributes.

\subsection{UK Weather Data}

\subsubsection{Problem Setup}

\begin{figure}[t]
    \begin{center}
    \includegraphics[scale=0.4]{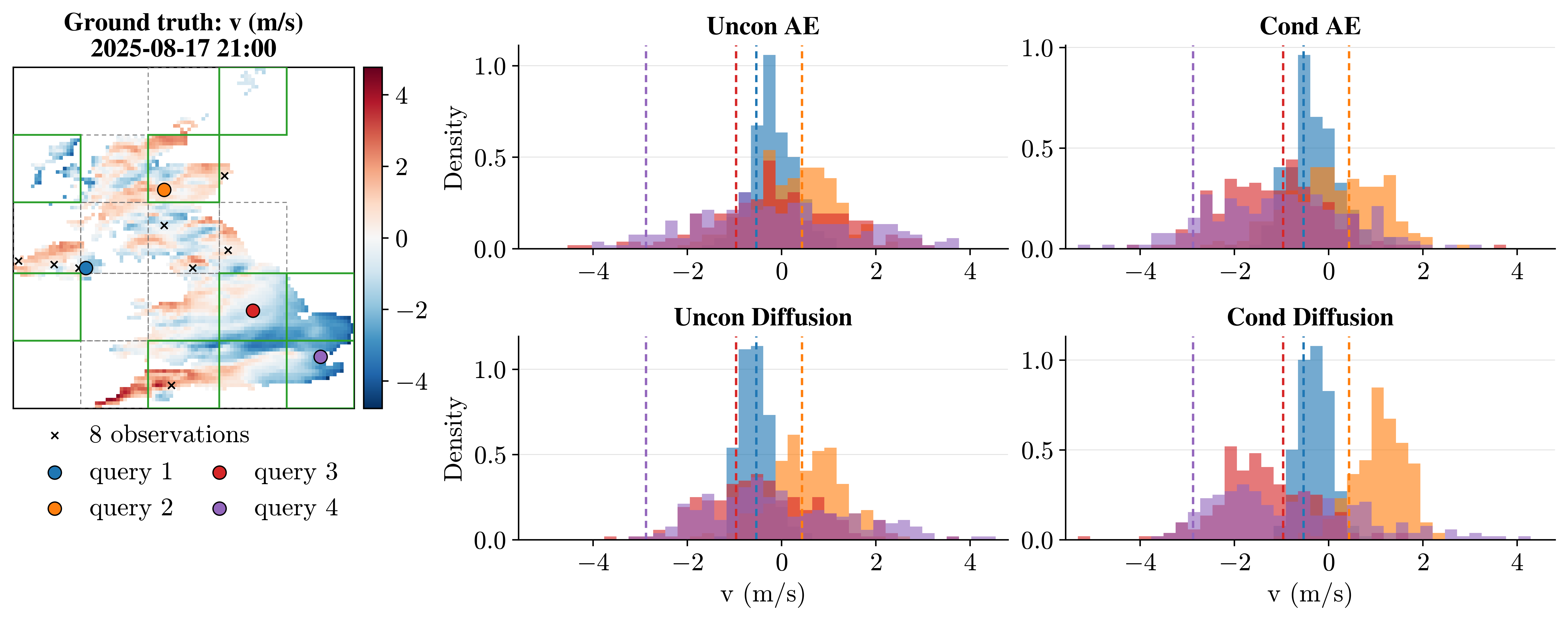}
    \end{center}
    \caption{Posterior marginals of one UK weather test system ($v$) (same system as in Figure~\ref{fig:uk_single_system_686}) at four unobserved query locations when $n_{\mathrm{\text{obs}}}=8$.}
    \label{fig:uk_posterior_histograms_686}
\end{figure}

\begin{table}[t]
  \caption{\textbf{Binning of the regional averages for text generation in UK weather dataset.} Bands overlap at their edges: a value falling in two bands is described by one of them, chosen at random.}
  \label{tab:climate_bins}
  \centering
  \begin{tabular}{@{}C{5.2em}C{4.2em}C{5.2em}C{4.6em}@{}}
    \toprule
    \multicolumn{2}{c}{Temperature (\textdegree C)}
      & \multicolumn{2}{c}{Wind speed (m\,s$^{-1}$)} \\
    \cmidrule(lr){1-2}\cmidrule(lr){3-4}
    Level & Range & Level & Range \\
    \midrule
    freezing  & $<3$        & calm     & $<2.2$       \\
    cold      & $2$--$6$    & light    & $1.8$--$4.0$ \\
    chilly    & $5$--$9$    & moderate & $3.5$--$5.5$ \\
    cool      & $8$--$12$   & breezy   & $5.0$--$7.5$ \\
    mild      & $11$--$15$  & strong   & $>7.0$       \\
    warm      & $14$--$19$  &          &              \\
    very warm & $18$--$24$  &          &              \\
    hot       & $>24$       &          &              \\
    \bottomrule
  \end{tabular}
\end{table}

We download dataset from CERRA \citep{ridal2024cerra}, which is a regional reanalysis dataset for a domain covering all Europe from 1984. The original dataset is at a horizontal resolution of $5.5km$ on a Lambert conformal grid. We restrict it to UK land by intersecting the CERRA land-sea mask with the Natural Earth country boundary, and then resample the retained points onto a regular $96\times96$ latitude-longitude grid spanning $49.99^\circ$--$60.75^\circ$\,N and $8.08^\circ$\,W--$1.74^\circ$\,E. Of the $9216$ grid points, $3135$ fall on land, and the remainder are set to zero and excluded from every loss and metric.

Three channels are obtained from CERRA, \textit{2m temperature}, \textit{10m wind speed} and \textit{10m wind direction}, with the latter two converted to horizontal and vertical wind velocities, making the final three channels $\{ T,u,v\}$. We take three daily analysis times $09{:}00$, $15{:}00$ and $21{:}00$ from 2000 to 2025, giving $28{,}491$ total snapshots, of which the $27{,}396$ from 2000-2024 are used for training.

To generate text for this dataset, we first divide the grid into $25$ regions, A1-E5, of which $18$ contain enough land to be described, as in Figure~\ref{fig:uk_visualisation}. The text corresponding to each field describes the temperature, wind speed and direction in $8$ randomly chosen regions, mimicking a setting in which much of the domain carries no sensors but people can report what the weather feels like. As in the Helmholtz experiment, the regional averages are first binned into levels shown in Table~\ref{tab:climate_bins}. For wind speed, averages are calculated as the mean of pointwise speed \smash{$\sqrt{u^2+v^2}$}. Its direction is calculated from the mean wind velocity $(\bar u,\bar v)$ and reported as one of eight compass points (eg: north-west).

We again query \textit{DeepSeek-V4-Flash} API to generate text, with the system prompt and user message provided below. Same as in Helmholtz experiment, each bin carries a pool of anchor words, one of which is drawn and passed to the model, and the prompt asks for a synonym. 

\begin{table}[t]
  \centering
  \caption{Full inference results on 100 UK weather test systems, which is an extension of Table \ref{tab:uk_inference}.}
  \label{tab:uk_inference_full}
  \setlength{\tabcolsep}{3pt}
  \resizebox{\textwidth}{!}{%
  \begin{tabular}{l |
    S[table-format=2.2]@{\,$\pm$\,}S[table-format=2.2]
    S[table-format=2.2]@{\,$\pm$\,}S[table-format=2.2]
    S[table-format=2.1]@{\,}c
    S[table-format=2.1]@{\,}c |
    S[table-format=2.2]@{\,$\pm$\,}S[table-format=2.2]
    S[table-format=2.2]@{\,$\pm$\,}S[table-format=2.2]
    S[table-format=2.1]@{\,}c
    S[table-format=2.1]@{\,}c |
    S[table-format=2.2]@{\,$\pm$\,}S[table-format=2.2]
    S[table-format=2.2]@{\,$\pm$\,}S[table-format=2.2]
    S[table-format=2.1]@{\,}c
    S[table-format=2.1]@{\,}c}
    \toprule
    & \multicolumn{8}{c|}{$n_{\text{obs}}=4$}
    & \multicolumn{8}{c|}{$n_{\text{obs}}=6$}
    & \multicolumn{8}{c}{$n_{\text{obs}}=8$} \\
    \cmidrule(lr){2-9} \cmidrule(lr){10-17} \cmidrule(lr){18-25}
    \textbf{Model}
    & \multicolumn{2}{c}{\makecell{NMSE $\downarrow$\\ $(\times 10^{-2})$}}
      & \multicolumn{2}{c}{\makecell{CRPS $\downarrow$\\ $(\times 10^{-2})$}}
      & \multicolumn{2}{c}{\makecell{$\% C_{68}$}}
      & \multicolumn{2}{c|}{\makecell{$\%C_{95}$}}
    & \multicolumn{2}{c}{\makecell{NMSE $\downarrow$\\ $(\times 10^{-2})$}}
      & \multicolumn{2}{c}{\makecell{CRPS $\downarrow$\\ $(\times 10^{-2})$}}
      & \multicolumn{2}{c}{\makecell{$\% C_{68}$}}
      & \multicolumn{2}{c|}{\makecell{$\%C_{95}$}}
    & \multicolumn{2}{c}{\makecell{NMSE $\downarrow$\\ $(\times 10^{-2})$}}
      & \multicolumn{2}{c}{\makecell{CRPS $\downarrow$\\ $(\times 10^{-2})$}}
      & \multicolumn{2}{c}{\makecell{$\% C_{68}$}}
      & \multicolumn{2}{c}{\makecell{$\%C_{95}$}} \\
    \midrule
    GP
      & 15.05 & 19.33 & 18.16 & 11.44 & 70.0 & ( 9.5) & 92.6 & ( 4.7)
      & 13.10 & 16.83 & 16.71 & 10.37 & 71.0 & ( 9.4) & 93.0 & ( 4.3)
      & 12.12 & 16.06 & 15.91 & 10.12 & 68.3 & ( 8.5) & 91.7 & ( 5.0) \\
    Uncon Supervised
      & 10.81 & 14.02 & \multicolumn{2}{c}{--} & \multicolumn{2}{c}{--} & \multicolumn{2}{c|}{--}
      &  8.76 & 11.89 & \multicolumn{2}{c}{--} & \multicolumn{2}{c}{--} & \multicolumn{2}{c|}{--}
      &  7.46 & 11.38 & \multicolumn{2}{c}{--} & \multicolumn{2}{c}{--} & \multicolumn{2}{c}{--} \\
    Cond Supervised
      &  8.96 & 12.98 & \multicolumn{2}{c}{--} & \multicolumn{2}{c}{--} & \multicolumn{2}{c|}{--}
      &  7.51 & 10.54 & \multicolumn{2}{c}{--} & \multicolumn{2}{c}{--} & \multicolumn{2}{c|}{--}
      &  6.78 &  9.95 & \multicolumn{2}{c}{--} & \multicolumn{2}{c}{--} & \multicolumn{2}{c}{--} \\
    Uncon AE
      & 11.30 & 15.16 & 15.21 &  9.38 & 69.5 & ( 8.0) & 94.6 & ( 3.5)
      &  9.28 & 12.43 & 13.57 &  8.50 & 68.6 & ( 7.3) & 94.3 & ( 3.2)
      &  8.18 & 11.55 & 12.58 &  8.02 & 67.5 & ( 6.5) & 93.6 & ( 3.3) \\
    Cond AE
      &  9.27 & 13.45 & 13.70 &  8.67 & 66.3 & ( 6.7) & 92.6 & ( 3.9)
      &  7.68 & 10.44 & 12.45 &  7.64 & 66.4 & ( 6.0) & 92.8 & ( 3.4)
      &  6.93 &  9.81 & 11.68 &  7.37 & 65.9 & ( 5.6) & 92.5 & ( 3.5) \\
    Uncon Diffusion
      & 10.49 & 12.87 & 14.79 &  8.85 & 55.6 & (13.5) & 85.9 & ( 9.3)
      &  8.81 & 12.16 & 13.14 &  8.53 & 56.1 & (12.8) & 86.4 & ( 8.7)
      &  7.72 & 11.25 & 12.07 &  8.06 & 56.2 & (12.4) & 86.5 & ( 8.7) \\
    Cond Diffusion
      &  9.31 & 15.15 & 13.55 &  9.37 & 55.8 & (12.9) & 85.9 & ( 9.2)
      &  7.49 & 10.55 & 12.22 &  7.86 & 56.0 & (12.5) & 86.0 & ( 9.0)
      &  6.81 & 10.13 & 11.45 &  7.65 & 56.0 & (12.5) & 86.0 & ( 9.1) \\
    \midrule
    & \multicolumn{8}{c|}{$n_{\text{obs}}=10$}
    & \multicolumn{8}{c|}{$n_{\text{obs}}=12$}
    & \multicolumn{8}{c}{$n_{\text{obs}}=14$} \\
    \cmidrule(lr){2-9} \cmidrule(lr){10-17} \cmidrule(lr){18-25}
    \textbf{Model}
    & \multicolumn{2}{c}{\makecell{NMSE $\downarrow$\\ $(\times 10^{-2})$}}
      & \multicolumn{2}{c}{\makecell{CRPS $\downarrow$\\ $(\times 10^{-2})$}}
      & \multicolumn{2}{c}{\makecell{$\% C_{68}$}}
      & \multicolumn{2}{c|}{\makecell{$\%C_{95}$}}
    & \multicolumn{2}{c}{\makecell{NMSE $\downarrow$\\ $(\times 10^{-2})$}}
      & \multicolumn{2}{c}{\makecell{CRPS $\downarrow$\\ $(\times 10^{-2})$}}
      & \multicolumn{2}{c}{\makecell{$\% C_{68}$}}
      & \multicolumn{2}{c|}{\makecell{$\%C_{95}$}}
    & \multicolumn{2}{c}{\makecell{NMSE $\downarrow$\\ $(\times 10^{-2})$}}
      & \multicolumn{2}{c}{\makecell{CRPS $\downarrow$\\ $(\times 10^{-2})$}}
      & \multicolumn{2}{c}{\makecell{$\% C_{68}$}}
      & \multicolumn{2}{c}{\makecell{$\%C_{95}$}} \\
    \midrule
    GP
      & 10.78 & 14.30 & 14.98 &  9.36 & 69.3 & ( 8.2) & 92.1 & ( 4.6)
      & 10.16 & 13.80 & 14.43 &  9.08 & 69.7 & ( 8.1) & 92.1 & ( 4.6)
      &  9.48 & 13.31 & 13.90 &  8.80 & 69.3 & ( 7.9) & 91.9 & ( 4.5) \\
    Uncon Supervised
      &  6.52 & 10.26 & \multicolumn{2}{c}{--} & \multicolumn{2}{c}{--} & \multicolumn{2}{c|}{--}
      &  6.22 & 10.47 & \multicolumn{2}{c}{--} & \multicolumn{2}{c}{--} & \multicolumn{2}{c|}{--}
      &  5.80 &  9.93 & \multicolumn{2}{c}{--} & \multicolumn{2}{c}{--} & \multicolumn{2}{c}{--} \\
    Cond Supervised
      &  6.26 &  9.53 & \multicolumn{2}{c}{--} & \multicolumn{2}{c}{--} & \multicolumn{2}{c|}{--}
      &  5.89 &  8.81 & \multicolumn{2}{c}{--} & \multicolumn{2}{c}{--} & \multicolumn{2}{c|}{--}
      &  5.67 &  8.72 & \multicolumn{2}{c}{--} & \multicolumn{2}{c}{--} & \multicolumn{2}{c}{--} \\
    Uncon AE
      &  7.07 & 10.04 & 11.66 &  7.37 & 66.9 & ( 6.2) & 93.2 & ( 3.2)
      &  6.90 & 10.50 & 11.38 &  7.41 & 65.8 & ( 5.9) & 92.3 & ( 3.6)
      &  6.51 & 10.17 & 10.99 &  7.22 & 64.8 & ( 5.9) & 91.6 & ( 4.1) \\
    Cond AE
      &  6.22 &  8.94 & 11.07 &  6.90 & 65.0 & ( 6.1) & 91.8 & ( 3.9)
      &  6.12 &  9.58 & 10.82 &  6.93 & 63.8 & ( 6.6) & 90.9 & ( 4.5)
      &  5.87 &  9.01 & 10.56 &  6.76 & 62.5 & ( 6.9) & 90.3 & ( 5.0) \\
    Uncon Diffusion
      &  6.43 &  9.54 & 11.10 &  7.23 & 56.2 & (12.2) & 86.5 & ( 8.5)
      &  6.32 & 11.66 & 10.68 &  7.36 & 56.3 & (12.2) & 86.5 & ( 8.5)
      &  5.97 & 11.83 & 10.25 &  7.22 & 56.3 & (12.1) & 86.6 & ( 8.5) \\
    Cond Diffusion
      &  6.03 &  8.88 & 10.76 &  7.03 & 55.8 & (12.6) & 85.7 & ( 9.3)
      &  6.05 & 11.02 & 10.45 &  7.13 & 55.7 & (12.6) & 85.8 & ( 9.2)
      &  5.69 & 10.45 & 10.05 &  6.90 & 55.9 & (12.4) & 85.8 & ( 9.2) \\
    \bottomrule
  \end{tabular}%
  }
\end{table}

\begin{figure}[t]
    \begin{center}
    \includegraphics[scale=0.4]{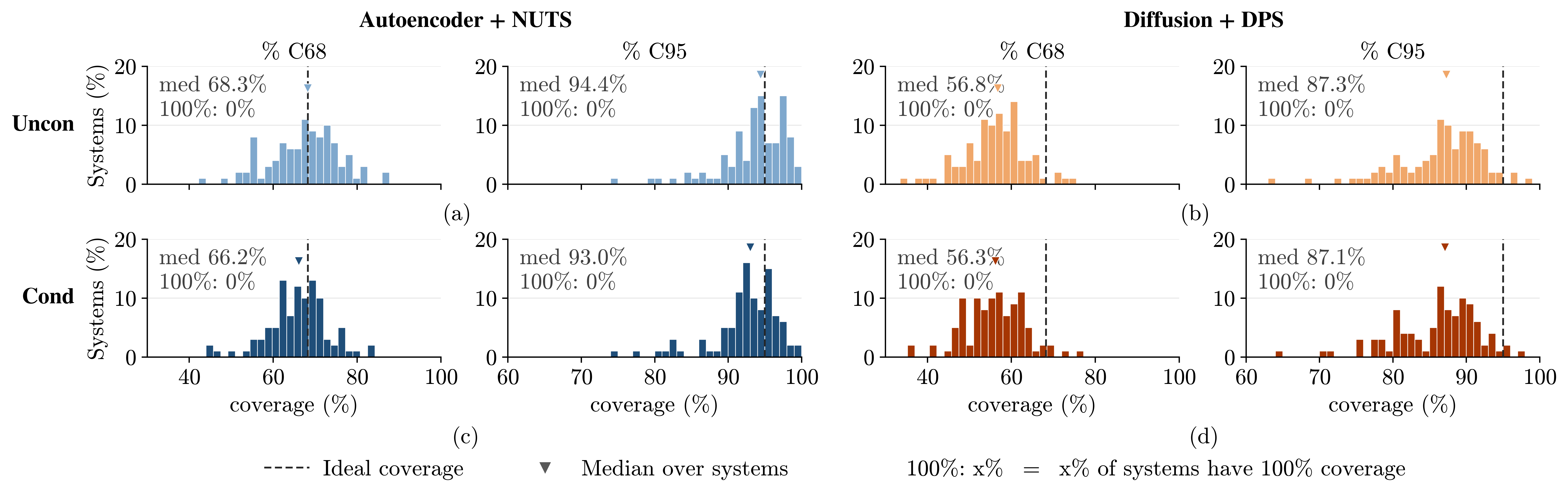}
    \end{center}
    \caption{Per-system coverage over 100 Weather test fields with $n_{\text{obs}}=8$. }
    \label{fig:uk_coverage}
\end{figure}

\begin{figure}[t]
    \begin{center}
    \includegraphics[scale=0.42]{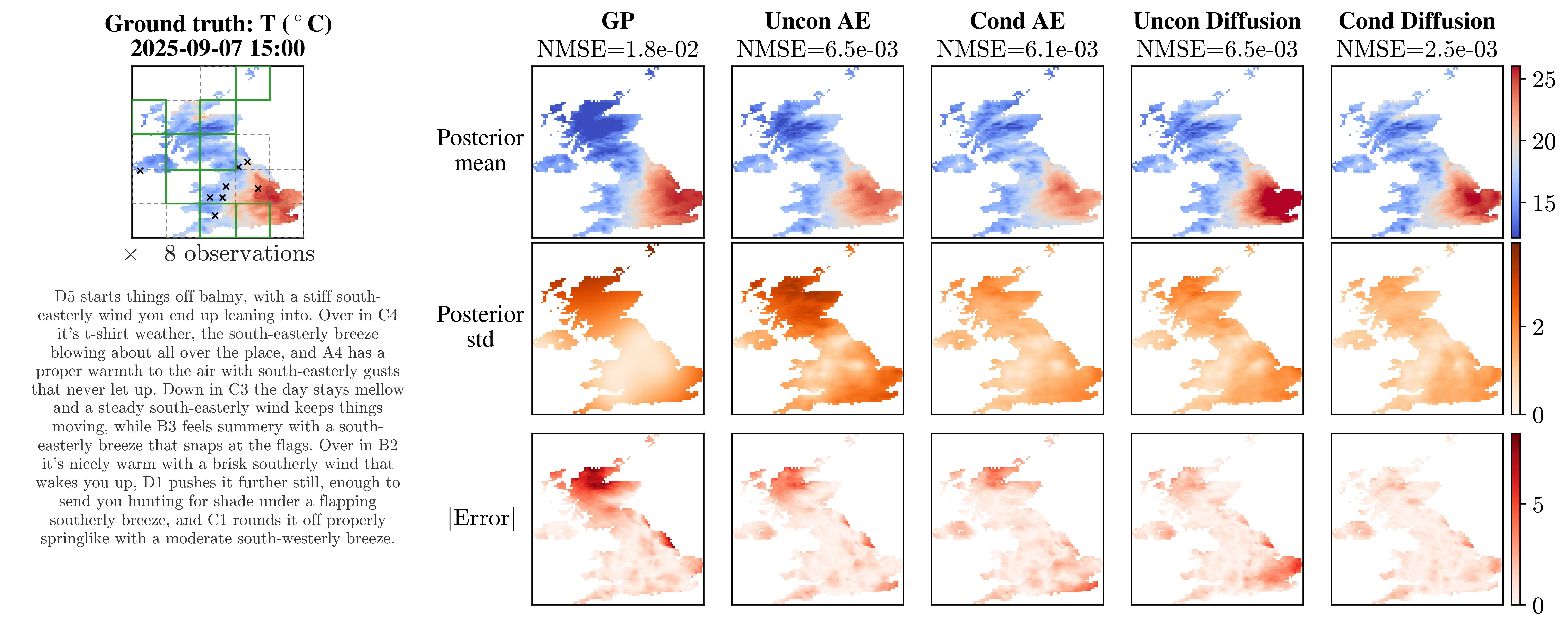}
    \end{center}
    \caption{Posterior reconstructions of one UK Weather test system ($T$) when $n_{\mathrm{\text{obs}}}=8$.}
    \label{fig:uk_single_system_748}
\end{figure}

\begin{figure}[t]
    \begin{center}
    \includegraphics[scale=0.4]{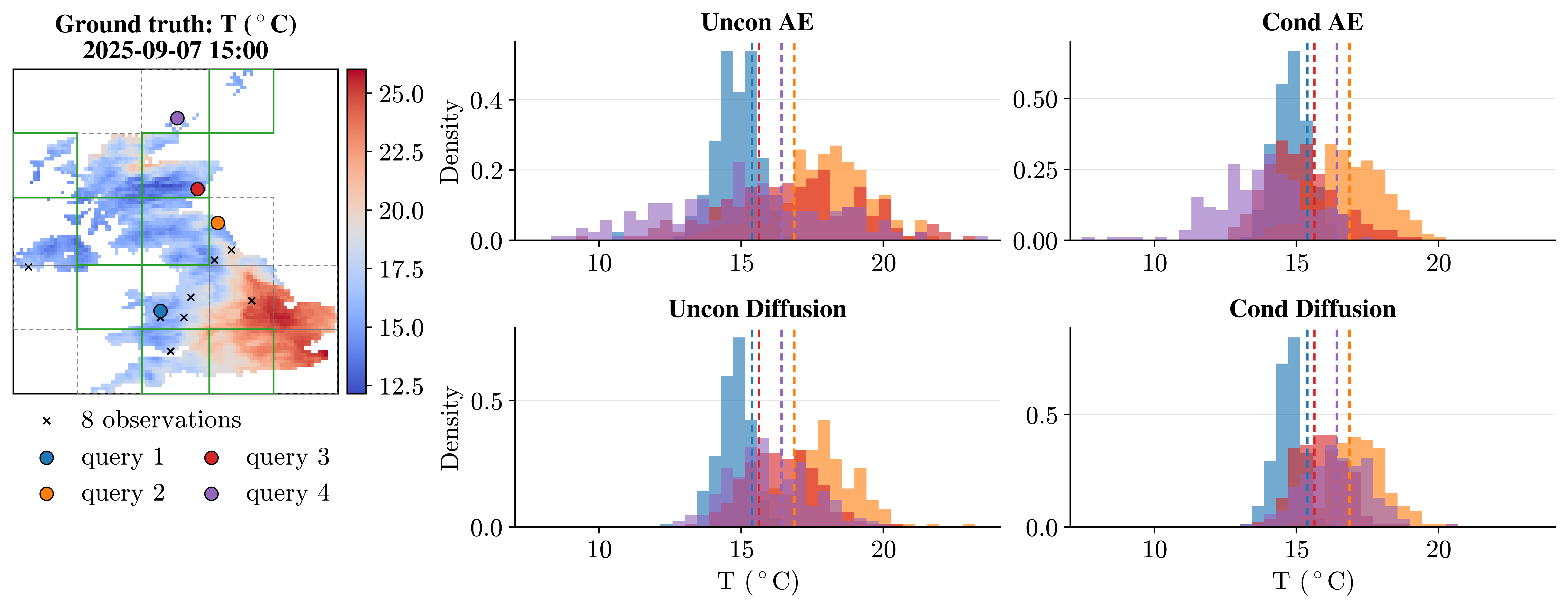}
    \end{center}
    \caption{Posterior marginals of one UK weather test system ($T$) (same system as in Figure~\ref{fig:uk_single_system_748}) at four unobserved query locations when $n_{\mathrm{\text{obs}}}=8$.}
    \label{fig:uk_posterior_histograms_748}
\end{figure}

{\footnotesize
\textbf{System prompt}
\begin{verbatim}
You write short weather descriptions as a PERSON describing how current 
weather feels like in the United Kingdom, and use relaxed, everyday 
language in flowing sentences. NOT a forecast/bulletin: avoid
jargon ('highs/lows of', 'mph', 'outbreaks', 'winds veering').

The UK is divided into a grid of areas, each with a short LABEL: the
letter runs from west to east and the number from south to north. Each 
system below gives a DIFFERENT set of areas, listed in a consistent 
geographic order. For each you get its TEMPERATURE band,
WIND-STRENGTH band, and WIND DIRECTION.

Full ordered band scale (your reference for meaning + neighbours):
  TEMPERATURE bands (approx degC, overlap at edges), coldest -> hottest:
    freezing (below 3C)  <  cold (2-6C)  <  chilly (5-9C)  < ...
  WIND-SPEED bands (approx m/s, overlap at edges):
    calm (below 2.2 m/s)  <  light (1.8-4 m/s)  < ...

For each variant write ONE description covering three facts of EVERY 
listed area. Rules:
  - Describe the areas IN THE ORDER GIVEN in user message so the 
    description flows geographically. NAME each area by its grid label.
  - ONE anchor phrase is provided for each region. The anchor fixes the
    INTENSITY. But please change using synonym, using words which person
    would use. Pasting exactly the anchor word is the BORING fallback. 
    NEVER shift the intensity to a different band.
  - VARY the sentence structure and the fact order from area to area. 
    Never produce a chain of 'X is <word> with a <word> wind'.
  - Wind directions should always end with a NOUN ('wind', 'breeze', 
    'gusts', etc.). A 'variable' wind direction indicates a light or 
    shifting wind.
  - When several areas share the same direction, describe the shared 
    airflow ONCE and then only describe each area by its temperature and 
    wind strength.
  - DO NOT use comparatives (cooler, windier, 'more...') and similarity 
    references ('just as', 'the same wind', 'likewise') to compare 
    between areas.
  - Never create sentences describing rain, sunshine, etc.
\end{verbatim}}

{\footnotesize
\textbf{User message}
\begin{verbatim}
Facts (grid label -- temp band | wind band | direction):
  - B1: agreeable | a persistent breeze | southerly
  - B2: raw | moderate | south-westerly ...
\end{verbatim}}

\subsubsection{Experimental Setup}

\begin{figure}[t]
    \begin{center}
    \includegraphics[scale=0.42]{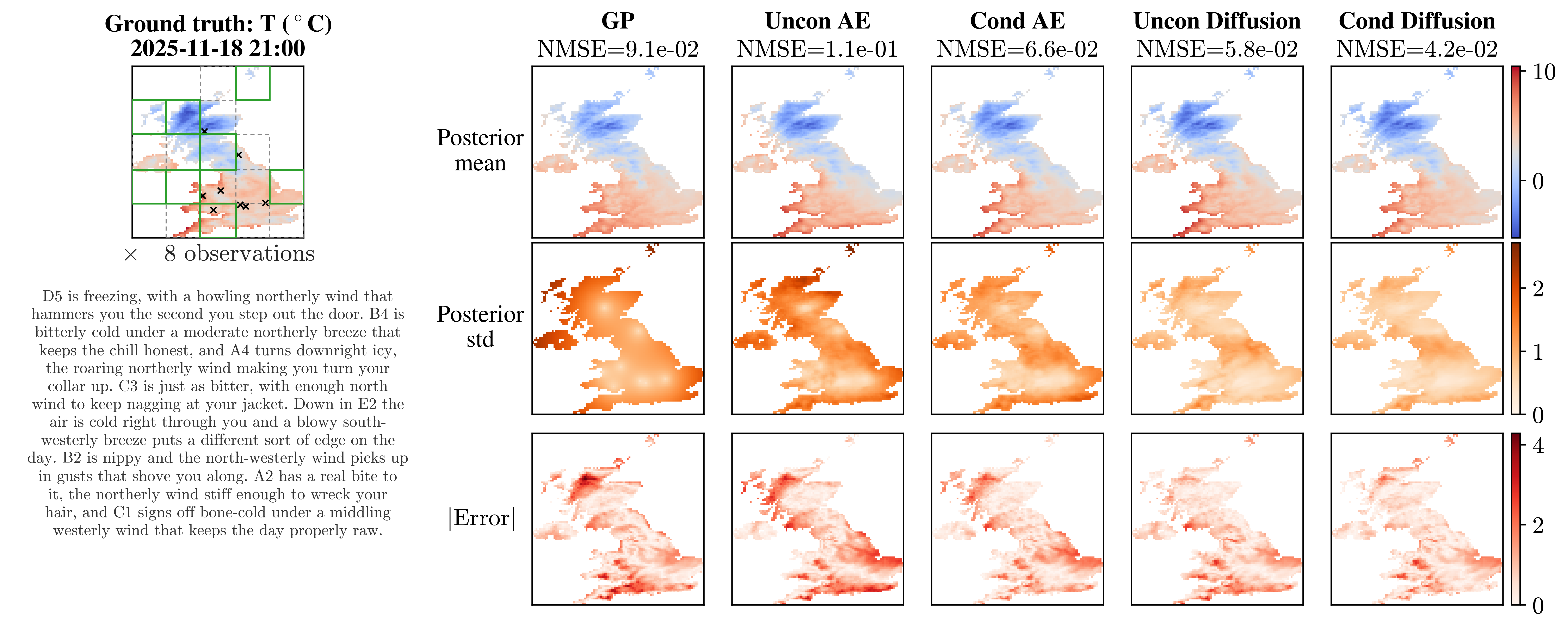}
    \end{center}
    \caption{Posterior reconstructions of one UK Weather test system ($T$) when $n_{\mathrm{\text{obs}}}=8$.} 
    \label{fig:uk_single_system_965}
\end{figure}

\begin{figure}[t]
    \begin{center}
    \includegraphics[scale=0.4]{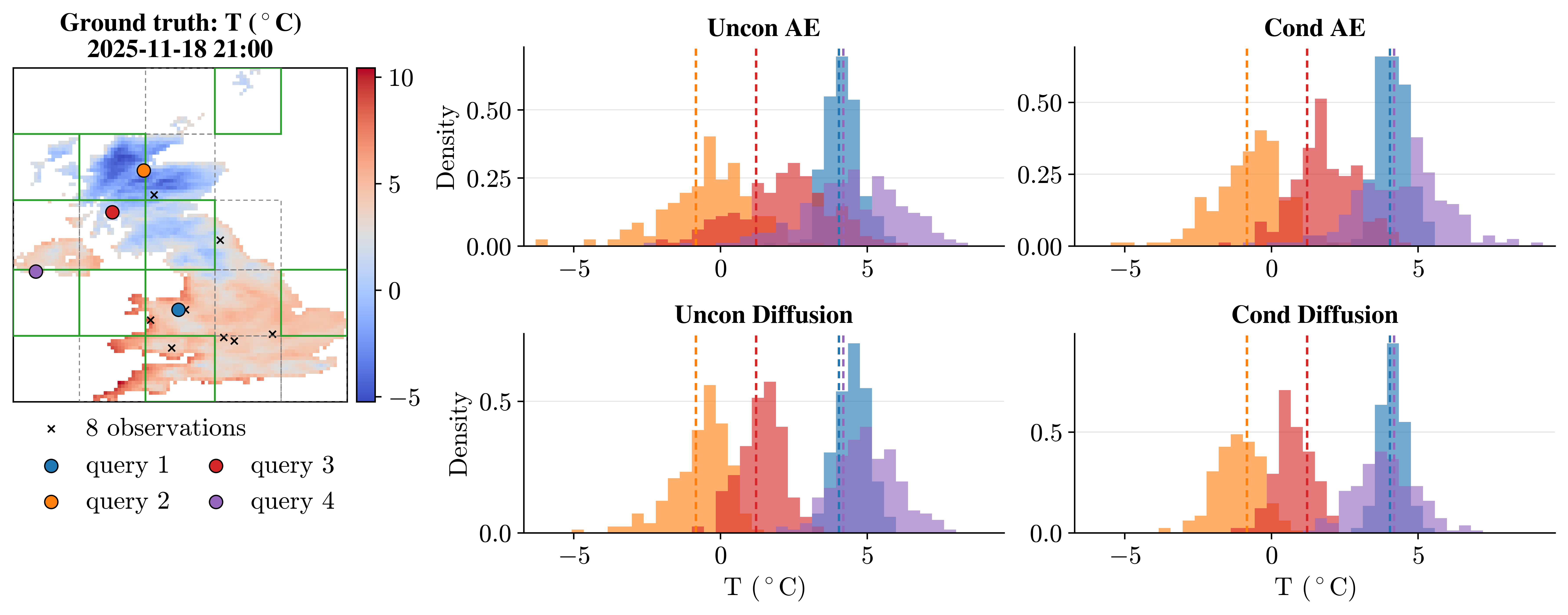}
    \end{center}
    \caption{Posterior marginals of one UK weather system ($T$) (same system as in Figure~\ref{fig:uk_single_system_965}) at four unobserved query locations when $n_{\mathrm{\text{obs}}}=8$.}
    \label{fig:uk_posterior_histograms_965}
\end{figure}

\paragraph{Fine-tuning.} Each text description in this experiment describes a random $8$ of $18$ regions covering UK land, so $L_2$ distance between whole fields is not suitable to be matched to distance between text embeddings. Let $A$ and $B$ be the region subsets described by text $s_A$ and $s_B$, let $\tilde u$ be the stack of the three channels $\{T, u, v\}$ after z-score normalisation, and let $\tilde u|_A$ represent the field values over regions covered in $A$. Suppose the field $\tilde u$ has corresponding text $s_A$ and $\tilde u'$ with $s_B$. Then
\begin{subequations}
    \begin{align}
        \label{eq:uk_score}
        \text{score}_\text{field}(\tilde u, \tilde u') &= \exp\left(-\frac{d^2}{\beta}\right), \\
        d^2 &= \tfrac12\left(d_A^2 + d_B^2\right), \\
        d_A^2 &= \bigl\|\tilde u|_A - \tilde u'|_A\bigr\|^2_2 , \\
        d_B^2 &= \bigl\|\tilde u|_B - \tilde u'|_B\bigr\|^2_2 ,
    \end{align}
\end{subequations}
where $d_A$ calculates the $L_2$ distance between two fields over regions described by $s_A$, and $d_B$ over those $s_B$ describes. $\beta=1.2 \times 10^4$ is the bandwidth. 2 epochs of fine-tuning \texttt{all-MiniLM-L6-v2} are performed on $80,000$ training pairs with learning rate $2\times 10^{-5}$.

\paragraph{Train.} Supervised baselines use a U-Net with $96$ base channels, \textbf{Cond Supervised} is trained for $1k$ epochs, while \textbf{Uncon Supervised} for $1.5k$ epochs, with a learning rate of $10^{-3}$ and batch size of $32$; AE uses CNN-based autoencoder with $96$ base channels, latent dimension is $64$, \textbf{Cond AE} is trained for $3k$ epochs and \textbf{Uncon AE} for $3.5k$, with a learning rate of $10^{-3}$ and batch size $200$; the score network is a U-Net with $96$ base channels and a $256$-dimensional conditioning vector, \textbf{Cond Diffusion} is trained for $1.5k$ epochs and \textbf{Uncon Diffusion} for $2.5k$, with learning rate of $2 \times 10^{-4}$ and batch size of $32$.

\begin{figure}[t]
    \begin{center}
    \includegraphics[scale=0.45]{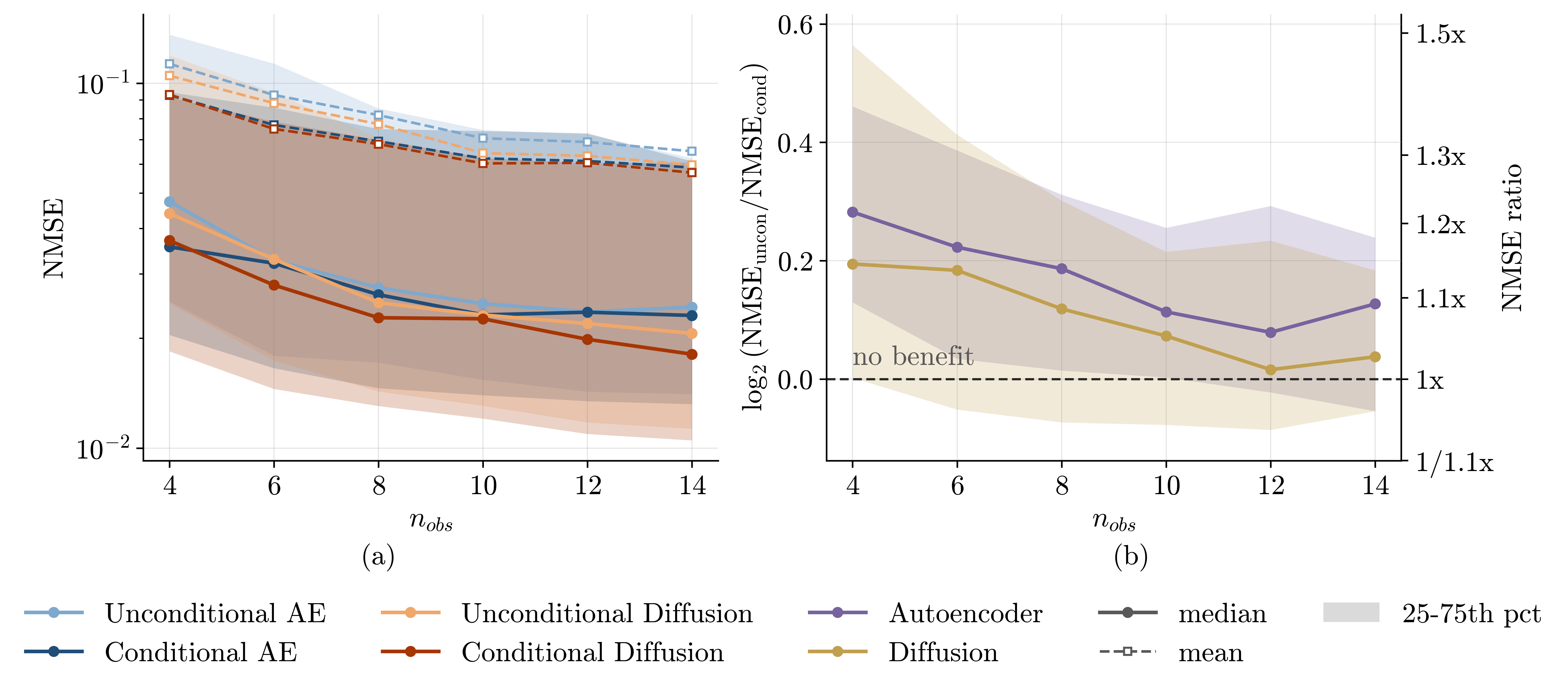}
    \end{center}
    \caption{\textbf{NMSE performance at different $\mathbf{n_{obs}}$ for UK Weather data.} (a) NMSE across 100 test systems: median (solid), mean (dashed), and 25th–75th percentile bands, see Table~\ref{tab:uk_nmse_quantiles} for exact values. (b) Per-system relative gain from text conditioning, $\log_2(  \mathrm{NMSE}_{\mathrm{uncon}}/\mathrm{NMSE}_{\mathrm{cond}}) $, with median and interquartile bands.}
    \label{fig:uk_text_pair}
\end{figure}

\begin{table}[t]
  \centering
  \caption{Quartiles of per-system NMSE across 100 UK weather slices.}
  \label{tab:uk_nmse_quantiles}
  \setlength{\tabcolsep}{4pt}
  \resizebox{\textwidth}{!}{%
  \begin{tabular}{l | ccc | ccc | ccc | ccc | ccc | ccc}
    \toprule
    & \multicolumn{3}{c|}{$n_{\text{obs}}=4$}
    & \multicolumn{3}{c|}{$n_{\text{obs}}=6$}
    & \multicolumn{3}{c|}{$n_{\text{obs}}=8$}
    & \multicolumn{3}{c|}{$n_{\text{obs}}=10$}
    & \multicolumn{3}{c|}{$n_{\text{obs}}=12$}
    & \multicolumn{3}{c}{$n_{\text{obs}}=14$} \\
    \cmidrule(lr){2-4} \cmidrule(lr){5-7} \cmidrule(lr){8-10}
    \cmidrule(lr){11-13} \cmidrule(lr){14-16} \cmidrule(lr){17-19}
    \makecell[l]{\textbf{Model}\\ NMSE $(\times 10^{-2})$}
      & $Q_{25}$ & Median & $Q_{75}$
      & $Q_{25}$ & Median & $Q_{75}$
      & $Q_{25}$ & Median & $Q_{75}$
      & $Q_{25}$ & Median & $Q_{75}$
      & $Q_{25}$ & Median & $Q_{75}$
      & $Q_{25}$ & Median & $Q_{75}$ \\
    \midrule
    Uncon AE
      & 2.52 & 4.74 & 13.58
      & 1.79 & 3.27 & 11.32
      & 1.71 & 2.75 & 8.53
      & 1.54 & 2.49 & 7.45
      & 1.43 & 2.36 & 7.25
      & 1.41 & 2.43 & 6.23 \\
    Cond AE
      & 2.04 & 3.56 & 9.45
      & 1.66 & 3.21 & 8.57
      & 1.46 & 2.63 & 7.49
      & 1.40 & 2.32 & 7.40
      & 1.34 & 2.36 & 7.29
      & 1.32 & 2.31 & 6.11 \\
    Uncon Diffusion
      & 2.49 & 4.39 & 11.91
      & 1.74 & 3.29 & 9.42
      & 1.43 & 2.50 & 7.09
      & 1.31 & 2.32 & 6.20
      & 1.18 & 2.19 & 5.88
      & 1.13 & 2.06 & 5.77 \\
    Cond Diffusion
      & 1.84 & 3.70 & 9.15
      & 1.45 & 2.80 & 7.85
      & 1.30 & 2.28 & 6.99
      & 1.20 & 2.26 & 6.25
      & 1.09 & 1.99 & 6.03
      & 1.05 & 1.81 & 6.07 \\
    \bottomrule
  \end{tabular}%
  }
\end{table}

\paragraph{Inference.} At inference, all three channels are observed at a common set of land locations, with
$\sigma_y = \{0.5, 0.3, 0.3\}$ for $\{T, u, v\}$. More observations are obtained in regions not covered by text. For the \textbf{GP} baseline we use the exponential kernel, i.e.\ Mat\'ern with $\nu=1/2$,
\begin{equation}
    k(x,x') = \sigma_f^2 \exp\!\left(-\frac{\|x-x'\|_2}{\ell}\right),
\end{equation}
where $\sigma_f$ and $\ell$ are hyperparameters learned separately for three channels. AE posteriors are sampled using a single NUTS chain with $M_o=100$ warmup steps and $M=200$ retained draws. The maximum tree depth is $10$ for both \textbf{Cond AE} and \textbf{Uncon AE}. Diffusion posteriors are drawn by integrating the guided reverse SDE over $L=1000$ uniform steps and the guidance clip $c=1.0$.

\subsubsection{Additional Results}  \label{appen:uk_results}
Table~\ref{tab:uk_inference_full} provides full inference results. Figure~\ref{fig:uk_coverage} shows distribution of coverages, which are again better for autoencoders, with medians closer to nominal ($68.3\%$ and $94.4\%$ for \textbf{Uncon AE}, $66.2\%$ and $93.0\%$ for \textbf{Cond AE}), whereas diffusion models yield systematically low coverages at both levels ($\sim56\%$ and $\sim87\%$), indicating that inference is overconfident. Figures~\ref{fig:uk_posterior_histograms_686},~\ref{fig:uk_single_system_748},~\ref{fig:uk_posterior_histograms_748},~\ref{fig:uk_single_system_965},~\ref{fig:uk_posterior_histograms_965} show posterior reconstructions and posterior histograms for several systems. Figure~\ref{fig:uk_text_pair} with Table~\ref{tab:uk_nmse_quantiles} compares NMSE and text gain across various $n_\text{obs}$ as before. The gain is smaller than in the other two
PDE experiments, but the median stays above $1\times$ throughout, roughly $1.1\times$ for the autoencoder at $n_\text{obs}=14$, so the text still contributes when the observations are dense. Figure~\ref{fig:uk_region_gain} separates text gain by region, which is larger inside the regions the text describes (solid) than outside them (dashed).

To test whether supervised models are constrained to the observation process they were trained on, we further design an ablation study which uses a different observation operator at inference. In our main experiments, three channels $\{T,u,v\}$ share the same observation locations in both training and inference, which vary from system to system but are common to all channels. In the ablation, $T$ is observed at $8$ locations and $u,v$ at $8$ different ones, so the
number of observations is unchanged and only the observation operator differs. Table~\ref{tab:uk_disjoint} reports results over $20$ test fields, where both conditional generative models overtake the supervised baselines.

\begin{figure}[t!]
    \begin{center}
    \includegraphics[scale=0.45]{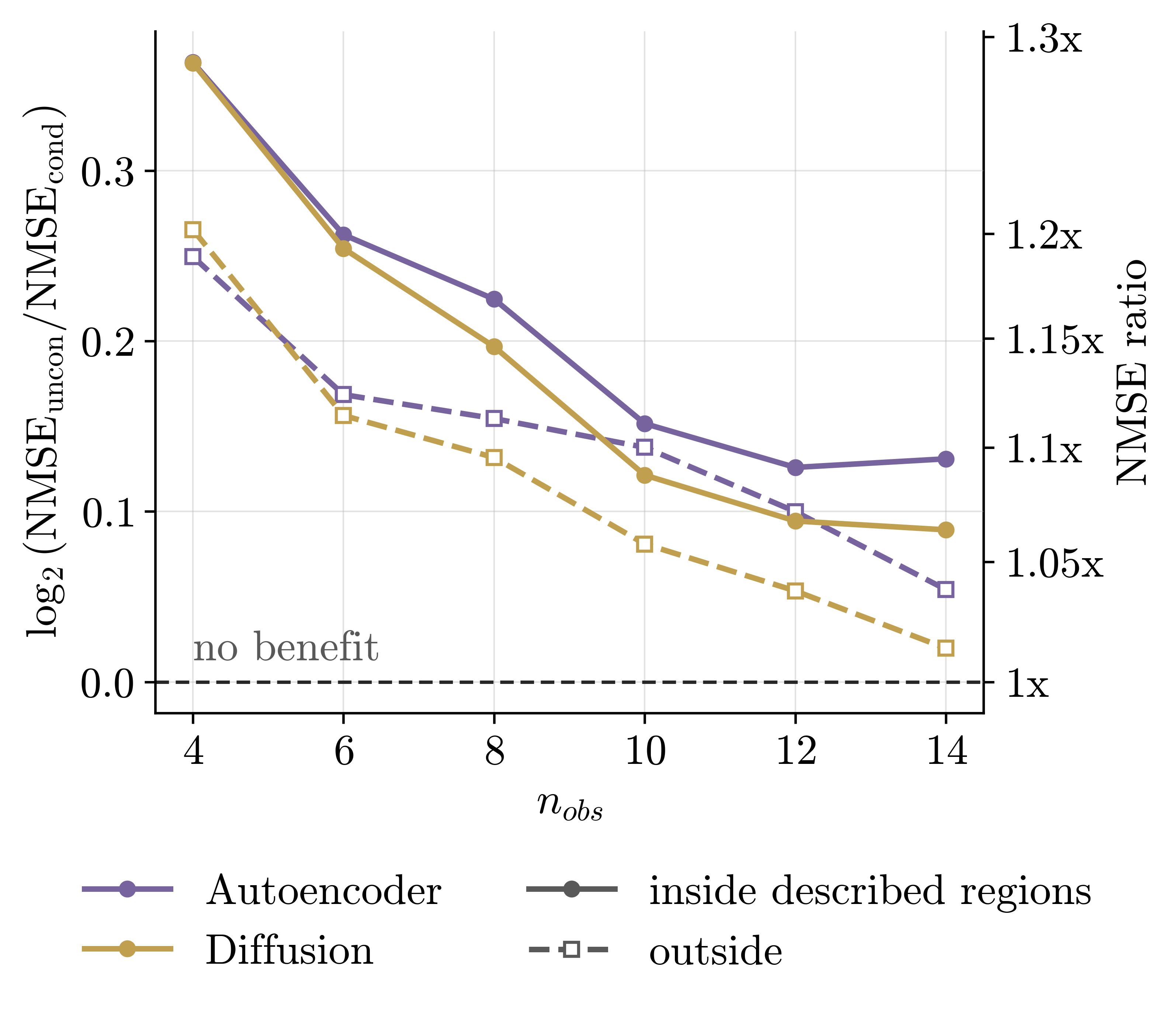}
    \end{center}
    \caption{\textbf{Mean of per-system text gain from conditioning for UK weather test set}. The figure plots the text gain $\log_2(\mathrm{NMSE}_{\text{uncon}}/\mathrm{NMSE}_{\text{cond}})$, averaged over $100$ test systems, inside the text-described regions (solid) and outside them (dashed).}
    \label{fig:uk_region_gain}
\end{figure}

\begin{table}[t]
  \centering
  \caption{\textbf{Ablation on a different observation operator of the UK weather experiment.} Temperature is measured at $8$ locations and the two wind components at $8$ different locations, which is a sensor layout the supervised models were not trained on. Results are over $20$ test systems.}
  \label{tab:uk_disjoint}
  \begin{tabular}{l c c c}
    \toprule
    \textbf{Model} & NMSE $(\times 10^{-2})$
      & $\%C_{68}$ & $\%C_{95}$ \\
    \midrule
    Uncon Supervised & 5.95 $\pm$ 5.31 & --           & --           \\
    Cond Supervised  & 5.11 $\pm$ 4.36 & --           & --           \\
    Uncon AE         & 5.22 $\pm$ 4.86 & 68.9\,(4.6)  & 94.0\,(2.3)  \\
    Cond AE          & 4.41 $\pm$ 4.01 & 65.9\,(5.6)  & 92.3\,(3.6)  \\
    Uncon Diffusion  & 5.04 $\pm$ 4.72 & 56.7\,(11.6) & 86.3\,(9.0)  \\
    Cond Diffusion   & 4.20 $\pm$ 3.70 & 57.2\,(11.5) & 86.7\,(8.5)  \\
    \bottomrule
  \end{tabular}
\end{table}

\end{document}